\def\eqref#1{equation~\ref{#1}}
\def\1{\bm{1}}
\DeclareMathAlphabet{\mathsfit}{\encodingdefault}{\sfdefault}{m}{sl}
\SetMathAlphabet{\mathsfit}{bold}{\encodingdefault}{\sfdefault}{bx}{n}
\DeclareMathOperator*{\argmin}{arg\,min}
\DeclareMathOperator{\sign}{sign}
\theoremstyle{plain}
\newtheorem{theorem}{Theorem}[section]
\newtheorem{lemma}[theorem]{Lemma}
\newtheorem{corollary}[theorem]{Corollary}
\theoremstyle{definition}
\newtheorem{definition}[theorem]{Definition}
\title{Revisiting Non-separable Binary Classification and its\\Applications in Anomaly Detection}
\author{\name Matthew Lau \email mlau40@gatech.edu  \\
      \addr School of Cybersecurity and Privacy, Georgia Institute of Technology
      \AND
      \name Ismaila Seck \email 
      ismaila@lengo.ai
\\
      \addr Lengo AI
      \AND
      \name Athanasios P  Meliopoulos \email sakis.m@gatech.edu\\
      \addr School of Electrical and Computer Engineering, Georgia Institute of Technology \\
            \AND
      \name Wenke Lee \email wenke@cc.gatech.edu\\
      \addr School of Cybersecurity and Privacy, Georgia Institute of Technology \\
            \AND
      \name Eugene Ndiaye \email e\_ndiaye@apple.com\\
      \addr Apple}
\begin{document}

\maketitle

\begin{abstract}
The inability to linearly classify \texttt{XOR} has motivated much of deep learning.
We revisit this age-old problem and show that \textit{linear} classification of \texttt{XOR} is indeed possible.
Instead of separating data between halfspaces, we propose a slightly different paradigm, \texttt{equality separation}, that adapts the SVM objective to distinguish data within or outside the margin.
Our classifier can then be integrated into neural network pipelines with a smooth approximation.
From its properties, we intuit that equality separation is suitable for anomaly detection.
To formalize this notion, we introduce \textit{closing numbers}, a quantitative measure on the capacity for classifiers to form closed decision regions for anomaly detection.
Springboarding from this theoretical connection between binary classification and anomaly detection, we test our hypothesis on supervised anomaly detection experiments, showing that equality separation can detect both seen and unseen anomalies.
\end{abstract}

\section{Introduction}

\paragraph{Linear classification of a non-linearly separable dataset} 
A common belief in machine learning is that linear models (like the perceptron \citep{PerceptronsOriginal}) cannot model some simple functions, as shown by the example \texttt{XOR} (exclusive OR) in classic textbooks (e.g. \citet{goodfellow2016deeplearning}).
This failure justifies 
using non-linear transforms such as neural networks (NNs).
We reexamine this counter-example:
\begin{center}
\vspace{-1mm}
    \texttt{XOR} can be learnt by linear models: a single neuron without hidden layers or feature representations.
\vspace{-1mm}
\end{center}

Consider learning
logical functions \texttt{AND}, \texttt{OR}, and \texttt{XOR} shown in \Cref{fig:LogicalFunction}. 
For 
\texttt{AND} and \texttt{OR}, 
there is 
a separating line where 
half of the space contains only points with label 0 and the other half only points with label 1.
However, there is no separating line for \texttt{XOR}.
The most popular is to non-linearly transform the input via kernels \citep{xor_kernel} or non-linear activation functions with hidden layers \citep{MinMLPforXOR}
(e.g. Figure \ref{fig:XOR_NN}).
Conversely, \citet{nonmonotonic_activations_mlp_thesis} 
modified
the \textit{output} representation 
non-linearly to solve \texttt{XOR}.
\looseness=-1

However, resorting to non-linearities is not necessary.
We propose using the line not to separate the input space%
, but to 
tell if a class is either on the line or not.
This \textit{linear} classifier 
can perfectly model the three logical functions above, including \texttt{XOR} (\Cref{fig:LogicalFunction}). 
We name this paradigm as \textit{equality separation}.
In fact, it can solve linearly separable and non-separable problems, demonstrating its flexibility (e.g. Figure \ref{fig:gaussian_equality_separation}, \ref{fig:inseparable_equality_separation}).
From the asymmetry of its classification rule, we posit its utility for anomaly detection (AD).
\looseness=-1
\begin{figure}[t]
  \centering
  \begin{subfigure}[t]{0.32\textwidth}
      \centering
      \includegraphics[width=0.99\textwidth]{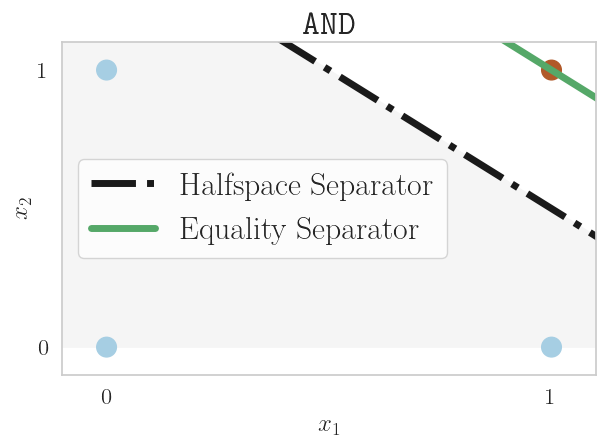}
      \caption{AND is possible.}
      \label{fig:AND}
  \end{subfigure}
  \begin{subfigure}[t]{0.32\textwidth}
      \centering
      \includegraphics[width=0.99\textwidth]{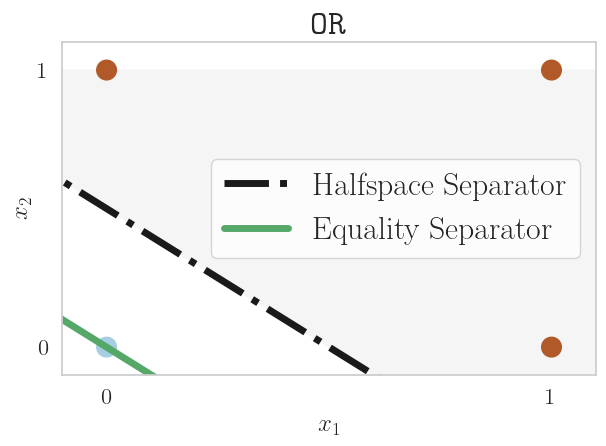}
      \caption{OR is possible.}
      \label{fig:OR}
  \end{subfigure}
  \begin{subfigure}[t]{0.32\textwidth}
      \centering
      \includegraphics[width=0.99\textwidth]{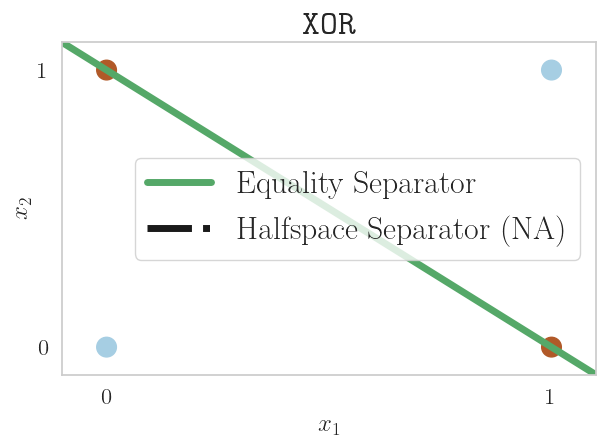}
      \caption{XOR only with equality sep.}
      \label{fig:XOR}
  \end{subfigure}
  \begin{subfigure}[t]{0.32\textwidth}
      \centering
      \includegraphics[height=0.1415\textheight, width=0.9\textwidth]
      {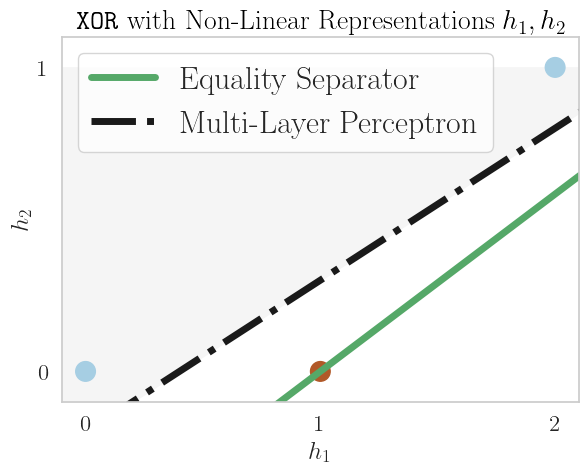}
      \caption{XOR with MLP is possible.}
      \label{fig:XOR_NN}
  \end{subfigure}
  \begin{subfigure}[t]{0.32\textwidth}
      \centering
      \includegraphics[width=0.99\textwidth]
      {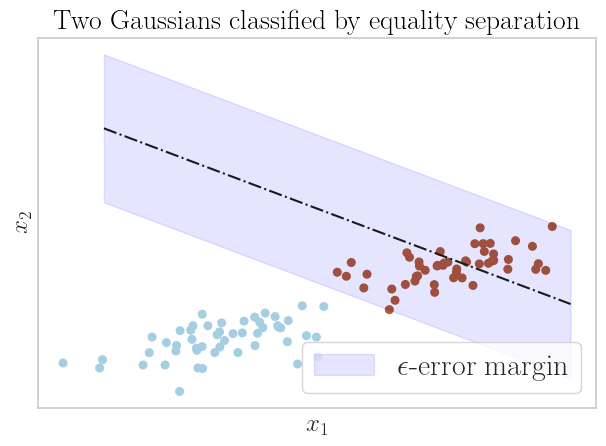}
      \caption{Halfspace separable data.}
      \label{fig:gaussian_equality_separation}
  \end{subfigure}
  \begin{subfigure}[t]{0.32\textwidth}
      \centering
      \includegraphics[width=0.99\textwidth]
      {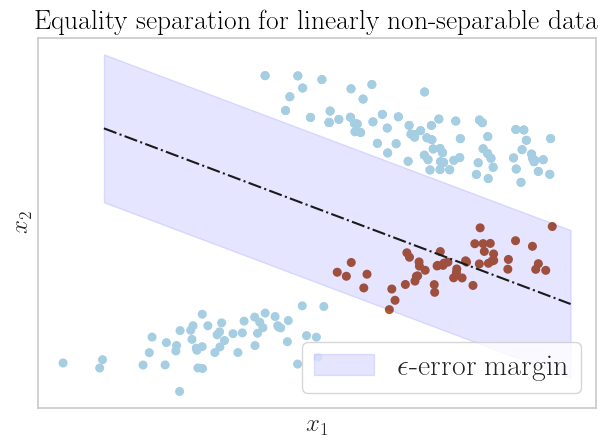}
      \caption{Halfspace non-separable data.}
      \label{fig:inseparable_equality_separation}
  \end{subfigure}
\caption{Linear classification by halfspace separators and equality separators for logical functions (Figures \ref{fig:AND}-\ref{fig:XOR}) and with 1 hidden layer with hidden units $h_1,h_2$ (Figure \ref{fig:XOR_NN}).
In general, equality separation can classify linearly separable (Figure \ref{fig:gaussian_equality_separation}) and non-separable (Figure \ref{fig:inseparable_equality_separation}) data.}
\label{fig:LogicalFunction}
\end{figure}

\paragraph{Anomaly detection}
is paramount for detecting system failure.
There may be some limited examples of failures (anomalies), but we usually do not have access to all possible types of anomalies.
For instance, cyber defenders want to detect known/seen and unknown/unseen (`zero-day') attacks.
The most direct way to train an AD model is via
binary classification with empirical risk minimization (ERM)%
, with normal data and anomalies as the two classes.
However, classic binary classification principles break when unseen anomalies do not share features with both classes of normal and seen anomalies \citep{DeepSAD}.
Rather than conventionally using the hyperplane as a separator, we posit that using it to model the manifold of normal data is more suitable:
points on/close to the line are
normal while points far from the line are anomalies.
To formalize this idea of ``suitability'', we introduce \textit{closing numbers} to understand a binary classifier's ability to detect anomalies.
Based on their suitability,
we can integrate them into NNs for AD via standard ERM.
\looseness=-1

\paragraph{Contributions}

We propose the \textit{equality separation} paradigm. Equality separators use the distance from a datum to a learnt hyperplane as the classification metric. We summarize our contributions:
\begin{itemize}
    \item 
    As a linear classifier, they have twice the VC dimension of halfspace separators.
    A consequence is its ability to \textit{linearly} classify \texttt{XOR}, which has never been done before.
    Showing the connection with the smooth \textit{bump activation}, 
    equality separation
    can be used in NNs and trained with gradient descent. \looseness=-1
    \item We introduce \textit{closing numbers}, a quantitative metric of the capacity of hypothesis classes and activation functions to form closed decision regions.
    We corroborate with this theory and empirically observe that
    equality separation can achieve a balance in detecting both known and unknown anomalies in one-class settings.
\end{itemize}

\section{Equality separator: binary classification revisited}

A standard classification problem is as follows.
Let $\mathcal X\subseteq \mathbb R^n$ be the feature space, $\mathcal Y:=\{0, 1\}$  be the label space and $\mathcal H 
$ be a hypothesis class.
Given the true function $c:\mathcal X \to \mathcal Y$ (i.e. the \textit{concept}),
we aim to find hypothesis $h\in \mathcal H$ that reduces the generalization error $L(h):= \mathbb P
(h(\mathbf x)\neq c(\mathbf x))$ for $\mathbf x \in \mathcal X$.
\looseness=-1

Linear predictors \citep[Chapter 9]{understanding_ml_book} are the simplest hypothesis classes we can consider.
In $\mathbb R^n$, the class of linear predictors comprises a set of affine functions
$$A_n = \{\mathbf x \mapsto \mathbf w^T\mathbf x+b: \mathbf w\in \mathbb R^n, b\in \mathbb R\}$$
and a labeling function $\phi:\mathbb R \to \mathcal Y$. The linear predictor class is
$$\phi \circ A_n = \{\mathbf x\mapsto \phi (h_{\mathbf w, b}(\mathbf x)): h_{\mathbf w, b} \in A_n\}.$$
Halfspace separators as in the perceptron use the sign function as $\phi$
to split the space $\mathbb R^n$ into two regions with a hyperplane: $\text{sign}(z) = 1$ if $z \geq 0$, and $\text{sign}(z) = 0$ otherwise.
Equivalently, $\text{sign}(z)=\mathds{1}(z\geq 0)=\mathds{1}_{\mathbb R^+ \cup\{0\}}(z)$ for indicator function $\mathds{1}$.
A critical issue in classification is that halfspace separators cannot achieve zero generalization error when the input space $\mathcal X$ is linearly inseparable \citep{PerceptronsOriginal}.
To resolve this, we propose to edit the set for the indicator function of $\phi$.
\looseness=-1

The core change is as follows: we change the labeling from $\phi(z)=\sign(z)=\mathds{1}(z\geq 0)$ to $\phi(z)=\mathds{1}(z=0)$ or $\phi(z)=\mathds{1}(z \text{ close to } 0)$.
We introduce two notions of equality separators: the \textit{strict equality separator} and the generalized, robust \textit{$\epsilon$-error separator}.
We define the first type as follows:
\looseness=-1

\begin{definition}
\label{def:strict_equality_separator}
A \textit{strict equality separator} is a decision rule where the hypothesis class is of the form 
\begin{equation}
\label{eqn:strict_equality_sep}
   \mathcal H=\{\mathbf x \longmapsto \mathds{1}(h_{\mathbf w, b}(\mathbf x) = 0): h_{\mathbf w, b} \in A_n\}\cup \{\mathbf x \longmapsto \mathds{1}(h_{\mathbf w, b}(\mathbf x) \neq 0): h_{\mathbf w, b} \in A_n\}. 
\end{equation}
\end{definition}

Geometrically, the strict equality separator checks if a datum is on or off some hyperplane i.e. $\phi = \mathds{1}_{\{0\}}$ or $\phi=\mathds{1}_{\mathbb R\backslash \{0\}}$.
In halfspace separation, labels are arbitrary, and we capture this with the union of 2 classes in (\ref{eqn:strict_equality_sep}).
\looseness=-1

However, a hyperplane has zero volume and sampled points may be noisy.
To increase its robustness,
we can allow some error for a datum within $\epsilon$ distance from a hyperplane.
To incorporate this $\epsilon$-error margin, we swap mapping $\phi$ from $\mathds{1}_{\{0\}}$ to $\mathds{1}_{S_{\epsilon}}$ for the set $S_{\epsilon}=\{z\in \mathbb R: -\epsilon \leq z \leq \epsilon\}$ for $\epsilon > 0$:
\begin{definition}
\label{def:epsilon_error_separator}
The \textit{$\epsilon$-error separator} is a decision rule where the hypothesis class is of the form 
$$\mathcal H_\epsilon=\{\mathbf x \longmapsto \mathds{1}_{S_\epsilon}(h_{\mathbf w, b}(\mathbf x)): h_{\mathbf w, b} \in A_n\} \cup \{\mathbf x \longmapsto \mathds{1}_{\mathbb R \backslash S_\epsilon}(h_{\mathbf w, b}(\mathbf x)): h_{\mathbf w, b} \in A_n\} \quad \text{ for } 
\epsilon > 0.$$
\end{definition}

The $\epsilon$-error separator modifies the SVM objective.
In classification, SVMs produce the optimal separating hyperplane such that no training examples are in the $\epsilon$-sized margin (up to some slack) \citep{SVM}.
Meanwhile, the margin in $\epsilon$-error separators is reserved for one class, while the region outside is for the other.
With this striking example, we proceed to analyze its expressivity.

\subsection{VC dimension}
\label{section:vc_dim}
Vapnik–Chervonenkis (VC) dimension \citep{VC_Dim_Original} measures the complexity of a hypothesis class to give bounds on its generalization error.
The VC dimension of a halfspace separator in $\mathbb R^n$ is $n+1$ \citep[Theorem 9.2]{understanding_ml_book}.
We claim that equality separators are not merely an XOR solver, but more expressive than halfspace separators in general:
\begin{theorem}
\label{theorem:vc_strict_equality}
For hypothesis class of strict equality separators $\mathcal{H}$ in Def. \ref{def:strict_equality_separator},
$\mathrm{VCdim}(\mathcal H) = 2n+1$.
\end{theorem}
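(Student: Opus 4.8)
I would prove $\mathrm{VCdim}(\mathcal H)\ge 2n+1$ and $\mathrm{VCdim}(\mathcal H)\le 2n+1$ separately, using the affine lift $\widetilde{\mathbf x}:=(\mathbf x,1)\in\mathbb R^{n+1}$, under which $h_{\mathbf w,b}(\mathbf x)=0$ is equivalent to $\widetilde{\mathbf x}$ lying in the linear hyperplane $(\mathbf w,b)^\perp$ of $\mathbb R^{n+1}$. Since $\mathcal H$ is closed under complementation, a set $T=\{\mathbf x_1,\dots,\mathbf x_m\}$ is shattered iff for every $A\subseteq T$ some linear hyperplane $H\subseteq\mathbb R^{n+1}$ has $\{i:\widetilde{\mathbf x}_i\in H\}$ equal to $A$ or to $T\setminus A$; the degenerate cases $\mathbf w=\mathbf 0$ only yield the all-ones / all-zeros labelings ($A=\emptyset$ or $A=T$) and are harmless once $m\ge 2$. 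A fact used repeatedly is that a real vector space is not a finite union of proper subspaces.

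\textbf{Lower bound.} I would take the $2n+1$ points $\mathbf x_i=(t_i,t_i^2,\dots,t_i^n)$ on the moment curve for distinct nonzero $t_i$, which has the properties that (i) any $n+1$ of the $\mathbf x_i$ are affinely independent (so any $n$ of them span a unique affine hyperplane), and (ii) any affine hyperplane meets the curve in at most $n$ points, since substituting the parametrisation gives a nonzero polynomial of degree $\le n$. To realise the labeling with positive set $A$: if $|A|\le n$, the hyperplanes containing $\mathrm{aff}\{\mathbf x_i:i\in A\}$ form a family of projective dimension $n-|A|\ge 0$, and by (i)--(ii) together with the union-of-subspaces fact a (generic) member of this family avoids all $\mathbf x_j$ with $j\notin A$; if $|A|>n$ then $|T\setminus A|\le n$, so I apply the previous case to $T\setminus A$ and use the complementary rule $\mathbf x\mapsto\mathds{1}(h_{\mathbf w,b}(\mathbf x)\ne 0)$. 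Hence all $2^{2n+1}$ labelings are achieved.

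\textbf{Upper bound.} I would show no $2n+2$ points are shattered, by induction on $n$ (base case $n=1$: for $4$ points in $\mathbb R$ one checks directly that an $\{i,j\}$-versus-$\{k,l\}$ labeling is unrealisable). Given $T=\{\mathbf x_1,\dots,\mathbf x_{2n+2}\}$, set $V=\mathrm{span}\{\widetilde{\mathbf x}_1,\dots,\widetilde{\mathbf x}_{2n+2}\}$. If $\dim V\le n$, the $\mathbf x_i$ lie in a common affine hyperplane of $\mathbb R^n$ (identified with $\mathbb R^{n-1}$); every equality separator of $\mathbb R^n$ restricts there to one of $\mathbb R^{n-1}$, so $T$ would be shattered in $\mathbb R^{n-1}$, contradicting the inductive hypothesis since $2n+2>2(n-1)+1$. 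So assume $\dim V=n+1$. A short linear-algebra lemma then gives: for $\emptyset\ne A\subsetneq T$, the labeling with positive set $A$ is realisable iff $A$ or $T\setminus A$ is \emph{span-closed} (equal to the set of all $\widetilde{\mathbf x}_i$ lying in its own span) with proper span. To finish I would build a balanced labeling violating this: pick $B\subseteq[2n+2]$ with $\{\widetilde{\mathbf x}_i:i\in B\}$ a basis of $\mathbb R^{n+1}$ (so $|B|=n+1$). If $\{\widetilde{\mathbf x}_i:i\in B^c\}$ is also independent, the labeling $(B,B^c)$ is already unrealisable; otherwise choose $y\in B^c$ with $\widetilde{\mathbf x}_y\in\mathrm{span}\{\widetilde{\mathbf x}_i:i\in B^c\setminus\{y\}\}$, let $D\subseteq B$ be the (nonempty, since $\widetilde{\mathbf x}_y\ne\mathbf 0$) support of $\widetilde{\mathbf x}_y$ in the basis $\{\widetilde{\mathbf x}_i:i\in B\}$, pick $x\in D$, and set $A=(B^c\setminus\{y\})\cup\{x\}$. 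Then $\widetilde{\mathbf x}_y\in\mathrm{span}\{\widetilde{\mathbf x}_i:i\in A\}$ but $y\notin A$, so $A$ is not span-closed; and, using uniqueness of basis coordinates, $\{\widetilde{\mathbf x}_i:i\in A^c\}$ with $A^c=(B\setminus\{x\})\cup\{y\}$ is linearly independent, so $A^c$ has full span. Neither $A$ nor $A^c$ meets the criterion, so this labeling is unrealisable.

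\textbf{Main obstacle.} The lift and the lower bound are essentially bookkeeping; the crux is the upper bound, and within it the explicit construction of the unrealisable labeling --- one must arrange the ``basis plus fundamental circuit'' combinatorics so that $A$ \emph{and} $A^c$ simultaneously fail span-closure with proper span. Secondary fiddly points are the translation lemma (correctly handling the degenerate $\mathbf w=\mathbf 0$ hyperplanes and the cases $A\in\{\emptyset,T\}$) and setting up the dimension-reduction induction so the restricted classifier is genuinely an equality separator of $\mathbb R^{n-1}$.
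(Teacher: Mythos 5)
Your proposal is correct, and while the overall two-sided strategy matches the paper's, the execution differs substantially, most notably in the upper bound. For the lower bound the paper also picks $2n+1$ points in general position (at most $n$ on any hyperplane) and fits a hyperplane through the minority class, but it establishes the existence of a hyperplane avoiding the other points by a downward induction on the minority-class size $k$ with an informal counting of ``bad'' hyperplanes; your moment-curve instantiation plus the ``a vector space is not a finite union of proper subspaces'' genericity argument proves the same thing more cleanly and makes the general-position hypothesis concrete. For the upper bound the paper stays in $\mathbb R^n$ and runs an exhaustive three-case analysis over partitions $\{P_1,P_2,Q\}$ of the $2n+2$ points and the ranks of the associated matrices, swapping individual points between classes case by case; you instead homogenize via the affine lift, dispose of the degenerate ``all points in a common affine hyperplane'' case by induction on the dimension, and in the full-rank case isolate a clean realizability criterion (span-closure with proper span) and defeat it with a single basis-plus-fundamental-circuit construction. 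Your route buys a shorter, more uniform argument and avoids the paper's somewhat delicate bookkeeping with column spaces of unlifted points (which implicitly conflates linear and affine containment); the paper's case analysis, on the other hand, is more elementary in that it never leaves $\mathbb R^n$. Both are valid proofs of the same bounds.
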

We defer the proof to Appendix \ref{proof:vc} and build intuition in $\mathbb R^2$.
We draw a line through the class with the fewest points -- the union of indicator function classes allows us to.
This is always possible with 5 non-colinear points: the maximum number of points that the class with the minimum points has is 2, which perfectly determines a line.
6 points is not possible: a labeling with 3 points for each class admits an overdetermined system which is unsolvable in general.
Extrapolating to higher dimensions, the VC dimension is 1 more than $2n$. \looseness=-1

\begin{corollary}
\label{theorem:vc_eps_err_sep}
For $\epsilon$-error separators $\mathcal{H}_{\epsilon}$ in Def. \ref{def:epsilon_error_separator},
$2n+1\leq \mathrm{VCdim}(\mathcal H_\epsilon) \leq 2n+3 
$.
\end{corollary}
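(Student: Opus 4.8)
The plan is to bound $\mathrm{VCdim}(\mathcal H_\epsilon)$ both from below and from above by relating the $\epsilon$-error separator to the strict equality separator of Theorem \ref{theorem:vc_strict_equality}, whose VC dimension is exactly $2n+1$.

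For the lower bound, I would exhibit a set of $2n+1$ points that $\mathcal H_\epsilon$ shatters. The natural move is to reuse the shattering configuration from the proof of Theorem \ref{theorem:vc_strict_equality}: if a set $S$ of $2n+1$ points is shattered by strict equality separators, then for any labeling there is an affine $h_{\mathbf w,b}$ with one class lying exactly on the hyperplane $\{h_{\mathbf w,b}=0\}$ and the other class off it. Since the points are in "general position" (finitely many), the off-hyperplane points have strictly positive distance to the hyperplane; by rescaling $(\mathbf w,b)\mapsto(t\mathbf w,tb)$ for large $t$ we can push every off-hyperplane value outside $S_\epsilon$ while the on-hyperplane points still give value $0\in S_\epsilon$. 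Hence the same labeling is realized by an $\epsilon$-error separator, so $S$ is shattered and $\mathrm{VCdim}(\mathcal H_\epsilon)\ge 2n+1$.

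For the upper bound, I would argue that $\mathcal H_\epsilon$ cannot shatter $2n+4$ points. The cleanest route is a reduction to a known VC bound: the region $\{\mathbf x : -\epsilon\le \mathbf w^T\mathbf x+b\le \epsilon\}$ is an intersection of two halfspaces, i.e. a "slab." The class of slabs in $\mathbb R^n$ (and its complements) has VC dimension that can be controlled by Sauer's lemma / the standard bound on VC dimension of Boolean combinations: a slab is the intersection of $2$ halfspaces, each from a class of VC dimension $n+1$, so the intersection class has VC dimension at most $O(n\log n)$ in general — but that is too weak. Instead I would get the sharp constant by a direct dimension/degrees-of-freedom count analogous to the $2n+1$ argument: a slab is determined by $(\mathbf w,b)$ up to positive scaling, giving $n+1$ effective parameters, but the two boundary hyperplanes are parallel, so realizing a prescribed sign/in-slab pattern on a set of points reduces to solving a system whose solvability fails once the number of points exceeds $2n+3$. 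Concretely, if $2n+4$ points were shattered, then in particular the "one class in the slab, other class out" labelings force too many incidence constraints with the two parallel faces; counting, at most $2$ points can be pinned to each of the two faces contributing the "$+2$" over the strict case, which caps the shatterable size at $2n+3$.

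The main obstacle is the upper bound: getting the exact constant $2n+3$ rather than a loose $O(n)$ or $O(n\log n)$ bound. The subtlety is that the two defining hyperplanes of the slab are \emph{parallel} (they share $\mathbf w$), so the naive "intersection of two independent halfspace classes" estimate overcounts the degrees of freedom; one must exploit this coupling to avoid slack, and also handle the union with the complementary class $\mathds{1}_{\mathbb R\setminus S_\epsilon}$ carefully since, as in the strict case, that union is what contributes the ability to pick whichever class has the fewest points. I would therefore spend most of the effort on a careful parameter-counting / linear-algebra argument showing that a labeling assigning $\ge n+2$ points to the in-slab class while $\ge n+2$ points lie outside generically over-determines the $(n+1)$-parameter family $(\mathbf w,b)$ even after accounting for the one extra degree of freedom from the slab width $\epsilon$ (if $\epsilon$ is treated as free) or the rescaling freedom (if $\epsilon$ is fixed), yielding the claimed upper bound $2n+3$.
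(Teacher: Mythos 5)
Your lower bound is fine and matches the paper's (one\nobreakdash-line) justification: take the $2n+1$-point set shattered by strict equality separators and rescale $(\mathbf w,b)$ so that every off-hyperplane point lands outside $S_\epsilon$; equivalently, the margin of the $\epsilon$-error separator can be made arbitrarily small relative to the point configuration.

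The upper bound, however, has a genuine gap. Your degrees-of-freedom count (``$n+1$ effective parameters, plus $2$ points pinned to the two faces, hence $2n+3$'') is a heuristic, not a proof: parameter counting does not bound VC dimension in general, and you never actually show that no set of $2n+4$ points can be shattered. You also dismiss the combinatorial route too quickly. The relevant fact is not the (loose) bound on intersections of halfspace \emph{concepts}, but the bound on the \emph{union of two hypothesis classes}: writing $\mathcal H_\epsilon=\mathcal H_1\cup\mathcal H_2$ with $\mathcal H_1=\{\mathbf x\mapsto\mathds{1}_{S_\epsilon}(h_{\mathbf w,b}(\mathbf x))\}$ and $\mathcal H_2$ its complement class, each $\mathcal H_i$ is exactly the hyper-hill/hyper-ridge class of fixed label orientation, whose VC dimension is known to be $n+1$ \citep{nonmonotonic_activations_mlp_thesis}, and the standard union property (Exercise 6.11 of \citet{understanding_ml_book}) gives $\mathrm{VCdim}(\mathcal H_1\cup\mathcal H_2)\leq (n+1)+(n+1)+1=2n+3$. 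This is the paper's argument; the ``$+1$'' slack from the union lemma, not any face-incidence count, is the source of the gap between $2n+1$ and $2n+3$ (the paper explicitly leaves the exact value open). To salvage your approach you would either need to import these two facts, or carry out in full the $2n+2$-point case analysis of Lemma~\ref{lemma:VC_general_lesser} adapted to slabs — which is precisely the hard work your sketch defers.
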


The lower-bound relies on Theorem \ref{theorem:vc_strict_equality} while the upper bound is constructed by the union of hypothesis classes.
Details are in Appendix \ref{proof:vc}.
We explore the significance of more expressivity in Section \ref{section:AD_data_structure}.
\looseness=-1

\subsection{Related works from classification}
\label{section:halfspace_connnection}

\citet{nonmonotonic_activations_mlp_thesis} designs hyper-hill classifiers that are NNs with bump activations, such as the Gaussian function
\begin{equation}
\label{eqn:bump_activation}
B(z, \mu, \sigma) = \exp \left [-\frac{1}{2}\left ( \frac{z-\mu}{\sigma} \right )^2 \right ]
\end{equation}
where $z$ is the input into the activation function and $\mu,\sigma$ are parameters which can be learnt.

Our contributions are two-fold here.
First, given knowledge of the class in the margin, we explicitly derive that equality separators are the linear, discrete representation of the bump activation.
Discrete classifiers are ubiquitously obtained by learning a continuous (activation) function and then thresholding the output to convert the continuous score to a classification decision (e.g. modeling halfspace separation with sigmoid activation).
This way, we can model equality separation for one class using bump activations.
Second, we give the generalization of the linear model which allows label flipping.

Label flipping is a perspective from which we can understand the doubled VC dimension.
Switching labels does not fundamentally change the optimization objective in halfspace separation due to its symmetry.
The asymmetry of equality separators yields a different optimization objective if labels are swapped -- the goal is to obtain the margin of another class instead.
This increases expressivity, making label flipping non-vacuous.
For instance, without label flipping, it is impossible to classify the function \texttt{OR} with the strict equality separator \citep{nonmonotonic_activations_mlp_thesis}.
However, knowing that \texttt{OR} induces more points from the positive class, we can choose to instead model the negative class.

Another intuition of the doubled VC dimension is that the
equality separator is the intersection between 2 halfspaces $\mathbf w^T\mathbf x+b \leq \epsilon$ and $\mathbf w^T\mathbf x+b\geq -\epsilon$.
In contrast, modeling a halfspace separator $\mathbf w^T\mathbf x+b$ with an equality separator requires non-linearity like ReLU:
$\mathds{1}(\max(0, \mathbf w^T\mathbf x+b)\neq 0)$.

From a hypothesis testing perspective, equality separators yield a more robust separating hyperplane compared to SVMs in binary classification.
We link the $\epsilon$-error separator to the most powerful test, the likelihood ratio test, via Twin SVMs \citep{twin_svm}.
In binary classification, Twin SVM learns an $\epsilon$-error separator for each class, \(\mathbf {w}_1^T\mathbf x+b_1 = 0\) and \(\mathbf {w}_2^T\mathbf x+b_2=0\).
For a test datum $\mathbf {x'}$, we can test its membership to a class based on its distance to class $i \in \{1,2\}$, $\epsilon_i 
:=\frac{|\mathbf w_i^T \mathbf {x'} + b_i|}{||\mathbf w_i||_2}$.
It is natural to use the ratio between the distances to resemble a likelihood ratio test
\[\mathds{1}\left (\frac{\epsilon_2}{\epsilon_1}\geq t \right)=\mathds{1}\left(\frac{|\mathbf w_2^T \mathbf {x'} + b_2|}{||\mathbf w_2||} - t\frac{|\mathbf w_1^T \mathbf {x'} + b_1|}{||\mathbf w_1||}\geq 0\right).\]
By constraining the Twin SVM to (1) parallel hyperplanes (\(\mathbf w_1= \mathbf w_2\)), (2) different biases ($b_1\neq b_2$) and (3) equal importance to classify both classes ($t=1$), we obtain a separating hyperplane \(2\mathbf w_1^T\mathbf  x+ b_1+b_2=0\) between the 2 classes (Appendix \ref{proof_pseudo_lrt}).
This separating hyperplane lies halfway between the hyperplanes for each class, where $\frac{|b_1-b_2|}{||\mathbf w_1||_2}$ resembles the margin in SVMs.
Without loss of generality, for $b_1>b_2$,
the region in between the 2 hyperplanes $\{x\in \mathcal X:-b_2\leq \mathbf w_1^T\mathbf x\leq -b_1\}$ acts as the soft margin.
The separating hyperplane derives from the learnt $\epsilon$-error separators for each class, which is learnt using the \textit{mass} of each class.\footnote{Linear discriminant analysis is also similar by learning the hyperplane via the class masses (see \citet{ElementsofStatisticalLearning}).}
This is unlike SVMs, which use \textit{support vectors} to learn the separating hyperplane \citep{SVM}.
Support vectors may be outliers far from high density regions for each class, so using them may not be as robust as
two $\epsilon$-error separators.
\looseness=-1

\section{Anomaly detection}

We focus on supervised AD, where different types of anomalies 
are not fully represented in the overall anomaly (negative) class during training.
Distinguishing between the normal and anomaly class is difficult due to (a) class imbalance, (b) lack of common features among anomalies and (c) no guarantees to detect unseen anomalies \citep{DeepSAD}.
We aim to find the manifold that normal data (positive class) live on where anomalies do not.
This more general non-linear formulation of equality separation shows the utility of our paradigm.
\looseness=-1

\subsection{Background}

\paragraph{Unsupervised AD}
has been the main focus of AD literature,
where training data is unlabelled and assumed to be mostly normal (i.e. non-anomalous).
Since the anomaly class is not well represented in training, most methods train models on auxiliary tasks using normal data.
This assumes that the model is unlikely to perform well at the known task during inference if the datum is unseen/anomalous \citep{AD_survey}.
Popular tasks include regression via autoencoding and forecasting \citep{lipschitz_ae_AD, kernel_PCA_AD}, density estimation \citep{DAGMM, LOF} and contrastive learning \citep{NeuTraLAD_Learnable_Transform, AD_tabular_data_internal_contrastive_learning}.

\paragraph{Classification methods} have generally not focused on supervised AD either.
Classification tasks that optimize directly on the AD objective (which is connected to Neyman-Pearson classification \citep{min_vol_SVM}) artificially generate anomalies \citep{classification_for_ad_jmlr, XAD_negative_sampling, DROCC_AD, min_vol_SVM}).
However, synthetic anomalies are hard to sample in high dimensions \citep{Unifying_Review_AD_Ruff} and may be unrepresentative of real anomalies, so covariate shift of the anomaly class during test time may affect AD performance.
The lack of common features among anomalies 
hinders models from generalizing to detecting unseen anomalies.
Nevertheless, directly optimizing over the AD objective is closer to the task and has benefits such as more effective explainable artificial intelligence results (see \citet{XAD_negative_sampling}).
Equality separation shifts the perspective of classification from a probabilistic to a geometric lens, aiming to reduce the dependency on sampling.
\looseness=-1

\paragraph{Limited labeled anomalies}
is a less explored problem, mostly explored from a semi-supervised approach.
Methodologically, this is the same as supervised AD: a small portion of the dataset is labeled while the unlabeled portion is assumed to be (mostly) normal.
\citet{DevNet_AD} proposes deviation networks to aid direct binary classification.
They stay within halfspace separation and focus on anomalies limited in \textit{number} rather than \textit{type}.
\citet{DeepSAD} proposes Deep Semi-Supervised Anomaly Detection (SAD), which performs an auxiliary task of learning data representations such that normal data have representations within a hypersphere while anomalies fall outside (this is ideologically the same as having an RBF activation in the output layer).
Their baseline comparison on directly solving AD with ERM on seen anomalies does not yield competitive results because ``supervised ... classification learning principles ... [are] ill-defined for AD'' \citep{DeepSAD}.

\paragraph{Out-of-distribution (OOD) detection} is the field that motivates our application of equality separation in AD.
To reject OOD inputs and reduce approximation error, classifiers can be designed to induce less open decision regions in input space \citep{gaussian_activation_ood_detection}, also known as open space risk minimization \citep{towards_open_set_recognition}.
Decision regions where the decision boundary encloses points belonging to a single class are considered closed.
Forming closed decision regions has a strong connection to density level set estimation under the concentration assumption:
for some class with density function $p$, the density level set for significance level $\alpha$ (and corresponding threshold $\tau_\alpha$) can be formulated as a small, non-empty set
\begin{align*}
C_\alpha :
= \{\mathbf x \in \mathcal X \mid p(\mathbf{x}) > \tau_\alpha\}
= \argmin_{C \text{ measurable}
} \{\mathrm{Vol}(C)  : \mathbb P(C)\geq 1-\alpha\}
\end{align*}
while anomalies of the class belong to $\mathcal X \backslash C_\alpha$ \citep{Unifying_Review_AD_Ruff}.
Especially when feature space $\mathcal X$ is unbounded, we see a connection between classifiers that induce closed decision regions and classifiers that induce smaller, finite-volumed density level sets $C_{\alpha}$.
Smaller sets are `more conservative', reducing the open space risk (i.e. region of wrongly classifying an anomaly as normal/false negatives).
To model the normal class in (one-class) AD, we can estimate the density level set.
From a simple classification framework, we estimate the level sets from data by forming a decision boundary that encloses points within the class, similar to SAD.
Figure \ref{fig:closed_decision_region_input} is an example where the true decision boundary encloses a region in the input space such that all points in those regions correspond to high density regions of a particular class.

One-vs-set machines \citep{towards_open_set_recognition} and its successors (e.g. \citet{best_fitting_hyperplane_classification}) use the idea of 2 parallel hyperplanes for open set recognition, but the optimization procedure is not flexible enough to integrate directly into NNs as we do through bump activations.
We extend this idea to the one-class scenario, where the negative data are not other meaningful classes, but (seen) anomalies.
We view our contributions as the union of these various ideas applied to AD and the insights to why they work in unison. 
Based on the concentration assumption in AD, we seek to generalize one-vs-set machines to neural networks, which results in hyper-hill networks.
However, it is not straightforward to characterize if we can still obtain closed decision regions after passing data through a NN.
We proceed to analyze locality properties of activation functions, which leads to our proposed \textit{closing numbers} which helps characterize closed decision regions across the NN.

\subsection{Locality: precursor to closed decision regions}
\label{section:density_level_set_estimation}
Using NNs as classifiers can help to find these complex decision boundaries, but it is not immediately clear if the decision regions can be closed and, consequently, if the density level set can be well estimated.
In NNs, classifiers are modeled by their smooth versions (activation functions) and have largely been split into 3 categories: global, semi-local and local \citep{nonmonotonic_activations_mlp_thesis, advantage_semilocal}.
Local mappings, e.g. radial basis function (RBF) networks and activation functions, ``activate'' around a center point i.e. have outputs beyond a threshold.
In contrast, global mappings (e.g. perceptron, sigmoid, ReLU) activate on a whole halfspace.
Semi-local mappings straddle in between. 
In particular, \citet{nonmonotonic_activations_mlp_thesis} proposes bump activations, hyper-ridge and hyper-hill classification which are locally activated along one dimension and globally activated along others.
Since equality separators are the discrete version of bump activations (Section \ref{section:halfspace_connnection}),
we view equality separators as semi-local too.
We summarize these observations in Table \ref{tab:activation_function_comparison}.
\looseness=-1

\begin{table}[]
    \centering
    \caption{Difference between perceptron, equality separator and RBF network (inputs from $\mathbb R^n$).}
    \begin{tabular}{cccc}
\toprule
        Decision Rule & Perceptron & Equality separator & RBF network\\
    \midrule
        Bias in Input Space & Halfspace & Hyperplane & Point/Mean\\
        Activation Function & Sigmoid, ReLU & Bump & RBF\\
        Properties (Closing Number) & Global ($n+1$) & Semi-local ($n$) & Local ($1$) \\
\bottomrule
    \end{tabular}
    \label{tab:activation_function_comparison}
\end{table}

The semi-locality of equality separators and bump activations achieve the benefits of global mappings like sigmoids in perceptrons and local mappings like RBF activations in RBF networks \citep{nonmonotonic_activations_mlp_thesis}.
Local activations can converge faster \citep{advantage_semilocal} and provide reliability in adversarial settings \citep{goodfellow_fast_gradient_sign, gaussian_activation_ood_detection}, while global mappings work well in high dimensions with exponentially fewer neurons \citep{advantage_semilocal} and are less sensitive to initializations \citep{nonmonotonic_activations_mlp_thesis}, making them easier to train \citep{goodfellow_fast_gradient_sign}.
Consider our \texttt{XOR} example with a single neuron: sigmoids in perceptrons are too global, while one RBF unit cannot cover 2 points from either class.
Equality separation and bump activations limit the globality of the perceptron but retain a higher globality than RBF activations, which allows them to linearly classify \texttt{XOR}.
But what does ``globality'' or ``locality'' precisely mean, and can we quantitatively measure this?
Perhaps more importantly, can we enclose decision regions?

\subsection{Locality and closing numbers for closed decision regions}
\label{section:connection_to_AD}

The locality of a classifier not only has implications on classification, but also on AD.
Specifically, it is intuitive to see a connection between the locality of classifiers and their ability to form closed decision regions, which in turn is useful for density level set estimation in AD.
To quantify the learnability of closed decision regions, we propose one metric to formalize this notion of locality:
\begin{definition}
The \textit{closing number} of a hypothesis class is the minimum number of hypotheses from the class such that their intersection produces a closed (positive-volumed\footnote{Meaningful analysis requires constraints that decision regions (in raw or latent space) has non-zero volume.\looseness=-1}) decision region.
\end{definition}

\begin{figure}
    \centering
    \begin{subfigure}[]{0.32\textwidth}
    \centering
    \includegraphics[height=0.16\textheight,  keepaspectratio]{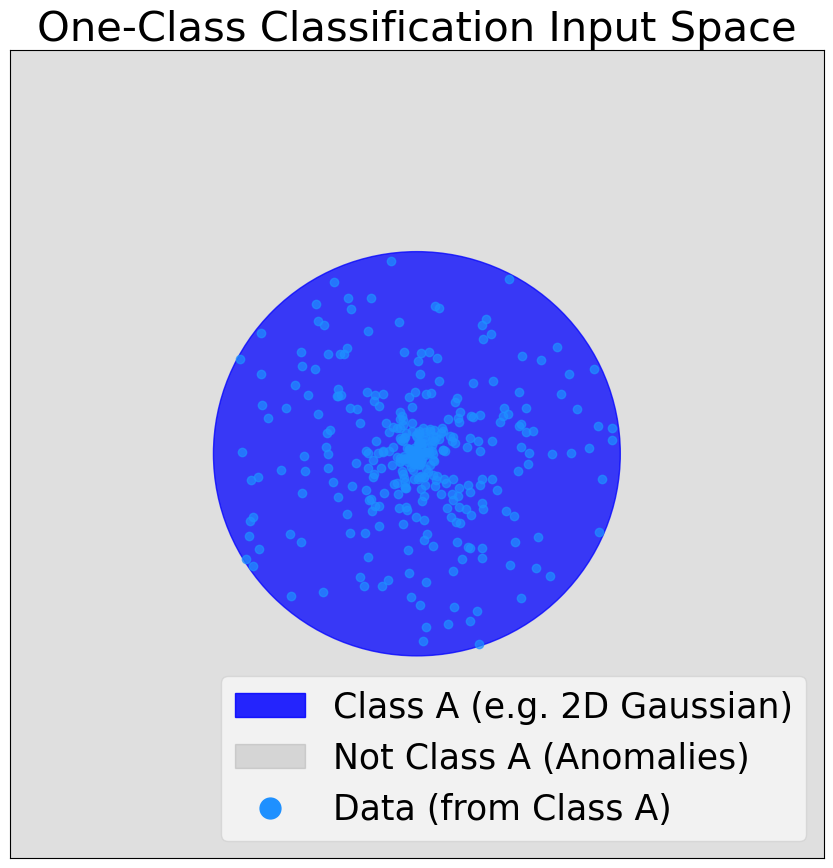}
    \caption{Normal data is in blue circle.}
    \label{fig:closed_decision_region_input}
    \end{subfigure}
    \begin{subfigure}[]{0.32\textwidth}
    \centering
    \includegraphics[height=0.16\textheight,  keepaspectratio]{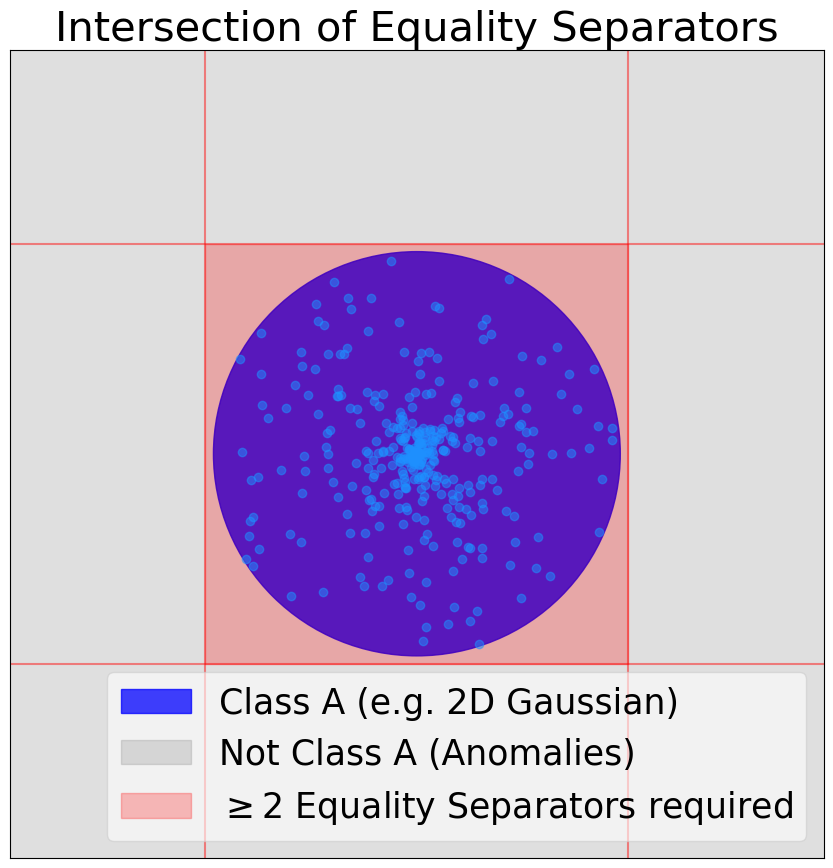}
    \caption{Equality separators: rectangle.}
    \label{fig:closed_decision_region_ES}
    \end{subfigure}
    \begin{subfigure}[]{0.32\textwidth}
    \centering
    \includegraphics[height=0.16\textheight,  keepaspectratio]{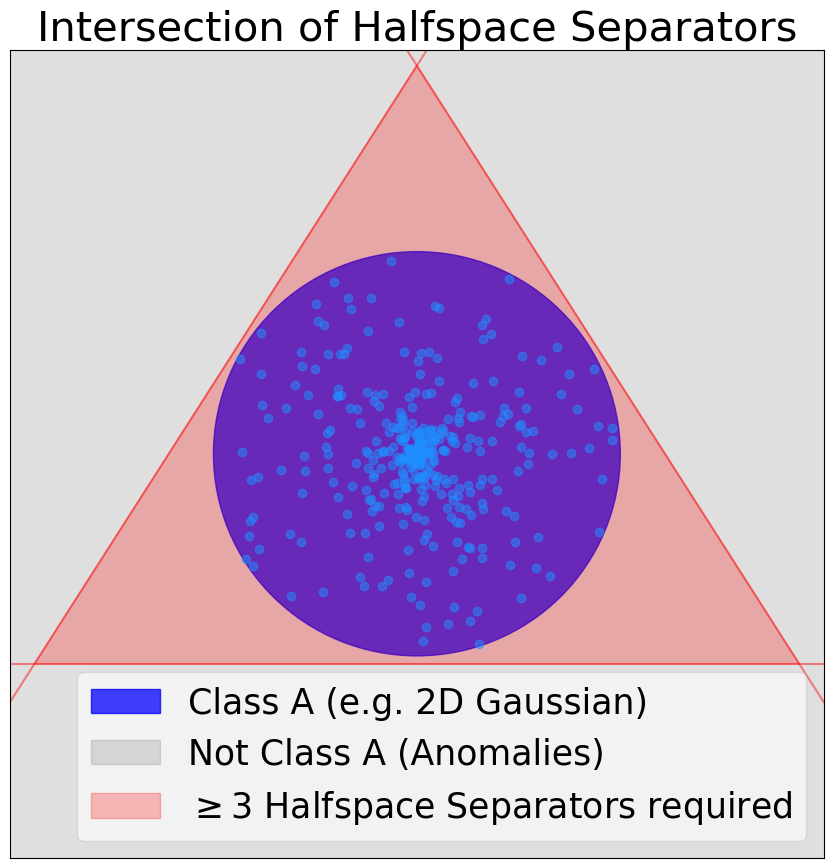}
    \caption{Halfspace separators: triangle.}
    \label{fig:closed_decision_region_HS}
    \end{subfigure}
\label{fig:closed_decision_regions}
\caption{Normal data occupies a space with non-zero, finite volume.
Classifiers can form closed decision regions to capture this.}
\end{figure}

\paragraph{Significance of closing numbers} Closing numbers is not only an important formalization of a big picture idea of locality, but also has two other implications. 
First, the output of a neuron is a function (e.g. scaled weighted average for sigmoid\footnote{Parameters can be normalized while negations represent the complement of a region.}) of neuron activations in the previous layer.
Thus, the closing number represents the minimum number of neurons in a layer to induce a closed decision region in the previous layer when using an activation of corresponding locality.
This information is crucial in hyperparameter selection for the number of neurons we design our neural network to have in each layer if we want to enclose decision regions.
By extension, it also suggests the (minimum) dimension of the space that contains false negatives (undetected anomalies), since any projection of unseen (anomalous) data to the normal manifold will cause false negatives.
Second, the number (or, more specifically, the measure) of configurations of learnable parameters that do indeed enclose the decision region suggests how much guarantee we have to form closed decision regions.
In summary, closing numbers tell users the amount of risk for false negatives they take when choosing fewer neurons than the closing number and the analysis of the configuration suggests the residue risk even when users put the corresponding number of neurons. 
We proceed to state the closing numbers for respective classifiers.
\looseness=-1

\begin{theorem}
\label{thm:closing numbers}
    Closing number of RBF, equality and halfspace separators is 1, $n$ and $n+1$ respectively in $\mathbb R^n$.
    Closed decision regions are formed with the intersection of the corresponding number of classifiers everywhere for RBF separators and almost everywhere for equality separators, but normal vectors need to form an $n$-simplex volume for halfspace separators.
\end{theorem}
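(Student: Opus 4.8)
The plan is to work throughout with the smooth (activation) representative of each decision rule so that the ``positive-volume'' requirement is non-vacuous, and to read ``closed decision region'' as ``bounded, full-dimensional set enclosed by the decision boundary''. A single RBF unit accepts a ball $\{\mathbf x:\|\mathbf x-\mathbf c\|\le r\}$; an equality separator in the sense of \Cref{def:epsilon_error_separator} (equivalently a bump neuron) accepts a \emph{slab} $\{\mathbf x:|\mathbf w^\top\mathbf x+b|\le\epsilon\}$ between two parallel hyperplanes, or the complement of such a slab; a halfspace separator accepts $\{\mathbf x:\mathbf w^\top\mathbf x+b\ge 0\}$ or its complement. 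For RBF the claim is immediate: a ball is bounded, closed and full-dimensional for \emph{every} center and every positive radius (every threshold in $(0,1)$ for a Gaussian bump), so the closing number is $1$ with no constraint on the parameters.

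For equality separators I would prove matching bounds. Upper bound: pick $n$ slabs whose normals $\mathbf w_1,\dots,\mathbf w_n$ are linearly independent; then $T\mathbf x=(\mathbf w_1^\top\mathbf x,\dots,\mathbf w_n^\top\mathbf x)$ is invertible and $\bigcap_{i=1}^n\{|\mathbf w_i^\top\mathbf x+b_i|\le\epsilon_i\}$ is the $T$-preimage of a nonempty axis-aligned box, i.e.\ a parallelepiped of positive volume — and this works for \emph{every} choice of biases, since a slab is symmetric and there is no orientation to get wrong. As the linearly dependent $n$-tuples of normals form a Lebesgue-null set, closure holds for almost every configuration. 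Lower bound: any intersection $R$ of $k\le n-1$ equality-separator accept-regions is unbounded whenever it is nonempty. Let $V$ be the span of the normals of the slab pieces, so $\dim V\le n-1$; take $\mathbf d\ne\mathbf 0$ orthogonal to $V$ and $\mathbf x_0\in R$. Along $\mathbf x_0+t\mathbf d$ every slab constraint is unchanged, while for each complement-of-slab piece the set of forbidden $t$ is empty (if $\mathbf w_j^\top\mathbf d=0$) or a bounded interval (otherwise); hence all constraints hold on a ray $\{t\ge T\}$, so $R$ contains the ray $\{\mathbf x_0+t\mathbf d:t\ge T\}$ and is unbounded. Thus the closing number is exactly $n$.

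For halfspace separators the key tool is conic/Farkas duality. The recession cone of $\bigcap_{i=1}^k\{\mathbf w_i^\top\mathbf x+b_i\ge 0\}$ is $\{\mathbf d:\mathbf w_i^\top\mathbf d\ge 0\ \forall i\}$, which equals $\{\mathbf 0\}$ — i.e.\ the polyhedron is bounded — if and only if $\mathrm{cone}(\mathbf w_1,\dots,\mathbf w_k)=\mathbb R^n$, i.e.\ the normals \emph{positively span} $\mathbb R^n$, i.e.\ $\mathbf 0\in\mathrm{int}\,\mathrm{conv}(\mathbf w_1,\dots,\mathbf w_k)$. For $k\le n$ the conic hull of $k$ vectors is contained in a proper subspace or is a pointed simplicial cone, hence never all of $\mathbb R^n$, so no $n$ halfspaces can enclose a region. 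For $k=n+1$ this is achievable precisely when $\mathrm{conv}(\mathbf w_1,\dots,\mathbf w_{n+1})$ is a full-dimensional simplex with $\mathbf 0$ in its interior; choosing the biases to keep the intersection full-dimensional (e.g.\ the standard simplex) gives a witness. Hence the closing number is $n+1$, and — unlike the RBF and equality cases — this ``$n$-simplex volume'' condition on the normals is an open but \emph{not} co-null condition (a random tuple of directions fails it with probability bounded away from $0$), which is exactly why the statement says ``simplex'' here rather than ``almost everywhere''.

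The routine parts are the three constructions (ball, parallelepiped, standard simplex); the part that needs care, and where I expect the main obstacle, is the trichotomy ``everywhere / almost everywhere / simplex''. Concretely: making the lower bounds uniform over \emph{all} configurations — including intersections that mix in the complement class for equality separators, handled by the recession-ray argument above — and correctly identifying via Farkas' lemma that boundedness of a halfspace intersection is governed by the normals positively spanning $\mathbb R^n$, so that the exceptional set of normals is genuinely non-negligible for halfspaces but null for equality separators. One must also keep everything phrased with the $\epsilon$-error / bump representatives so the positive-volume clause is never violated — this is exactly why the strict equality separator of \Cref{def:strict_equality_separator}, whose regions are zero-volume hyperplanes, is not the relevant object here.
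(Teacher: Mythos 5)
Your proposal is correct, and the upper-bound constructions (ball, parallelepiped from $n$ linearly independent slab normals, simplex from $n+1$ halfspaces) together with the observation that rank-deficient normal matrices form a Lebesgue-null set coincide with the paper's (which uses a hyperrectangle and a hypertetrahedron). Where you genuinely diverge is in the lower bounds. For halfspaces the paper counts facets: a bounded full-dimensional polytope has a dual polytope with at least $n+1$ vertices, hence at least $n+1$ facets, hence needs at least $n+1$ hyperplanes. You instead compute the recession cone and use the fact that $k\le n$ vectors cannot positively span $\mathbb R^n$; this buys you the \emph{exact} parameter condition for boundedness ($\mathbf 0$ in the interior of the convex hull of the normals), which is sharper than the paper's ``$n$-simplex volume'' phrasing and cleanly explains why the halfspace condition is not an almost-everywhere one. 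For equality separators the paper argues by cases (if some separator takes the complement of its margin it asserts infinite volume outright; if all take the margin it slices the region into unit slabs along a direction orthogonal to all normals and uses translation invariance), whereas your single recession-ray argument handles the mixed slab/complement case uniformly and more carefully than the paper does. One caveat: the paper's formal definition of the closing number is $0<\mathrm{Vol}(\cap_i h_i)<\infty$, not boundedness, and an unbounded set can have finite positive volume, so ``$R$ contains a ray, hence is unbounded'' does not by itself close the lower bound. The fix is already latent in your argument: by Fubini along your direction $\mathbf d$, every nonempty one-dimensional slice of $R$ is $\mathbb R$ minus a finite union of bounded intervals (the slab constraints are constant along $\mathbf d$ and each complement constraint forbids at most a bounded interval), hence has infinite length, so $\mathrm{Vol}(R)\in\{0,\infty\}$. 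With that one-line upgrade your argument is complete and, in the mixed case, tighter than the paper's.
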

The proof is in Appendix \ref{appendix:closing_numbers}.
In 2D, these closed decision regions would correspond to a circle (e.g. the circle in Figure \ref{fig:closed_decision_region_input}), parallelogram (Figure \ref{fig:closed_decision_region_ES}) and triangle (Figure \ref{fig:closed_decision_region_HS}) respectively.

The implication of closing numbers and the conditions needed for its realization in the continuous version (i.e. activation functions) across adjacent layers in feed-forward NNs is as follows.
Although equality separation has a relatively large closing number compared to RBF, we are still guaranteed to have closed decision regions almost everywhere in the space of $n$ learnable $n$-sized normal vectors (which is an $n\times n$ weight matrix in a NN).
Meanwhile, we have no reason to expect closed decision regions from halfspace separation even with $n+1$ (or more) neurons in the following layer.
Closing numbers provide a theoretical framework to understand the corroborate with 
semi-locality of bump functions \citep{nonmonotonic_activations_mlp_thesis}, where it is easier to learn than RBFs but more robust to adversarial settings than halfspace separators \citep{gaussian_activation_ood_detection}.
Additionally, \citet{goodfellow_fast_gradient_sign} argued for the decrease robustness of sigmoid and ReLU to adversarial attacks due to the linearity in individual activations.
Closing numbers complement this view, providing a global perspective of activations working together to induce closed decision regions.
This gives insight on the capacity and learnability of closed decision regions.
\looseness=-1

\subsection{Controlling generalization error with inductive bias in AD}
\label{section:AD_data_structure}

For binary classification, we allow label flipping because it is a priori unclear which class we should model to be in the margin.
This changes for AD.
To enclose the decision region%
, we impose the constraint that the normal (and not the anomaly) class should lie within the margin of equality separators.
Using domain knowledge to assign normal data to the positive class, we aim to find the normal data manifold rather than discriminate between normal and (seen) anomalies.
A better AD objective model lowers the approximation error (i.e. the error between the Bayes classifier and the best classifier in our hypothesis class) compared to halfspace separators.
Consequently, prohibiting label flipping reduces to hyper-hill classification, which has VC dimension of $n+1$ like halfspace separation \citep{nonmonotonic_activations_mlp_thesis}, resulting in the same generalization bound as halfspace separators for the estimation error.
Compared to halfspace separators, we obtain the same bound on the estimation error while improving the approximation error in AD.
Therefore, we focus on AD applications.
\footnote{We note that AD is hard because any validation set has no information about unseen anomalies, so its utility as a proxy of test-time generalization is limited.
Thus, generalization heavily relies on the design of a model.}
\looseness=-1

\section{Applications}
\paragraph{Optimization scheme}
To obtain a classifier, we perform ERM on data $\{(\mathbf x_i, y_i)\}_{i=1}^m$.
In this paper, we consider mean-squared error (MSE) and logistic loss (LL) as loss functions.
Optimizing with BFGS with LL and MSE on the affine class $A_2$ produces a solution for \texttt{XOR}, corroborating with the results in \cite{nonmonotonic_activations_mlp_thesis}.
For the bump function in (\ref{eqn:bump_activation}), we fix $\mu=0$.
A smaller $\sigma$ initializes a stricter decision boundary but limits gradient flow.
We defer further analysis of the role of $\sigma$ to Appendix \ref{section:sigma_role}.
\looseness=-1

\paragraph{Experiments}
For binary classification, we observe that equality separators can classify some linearly non-separable data (e.g. Figure \ref{fig:inseparable_equality_separation}). They are also competitive with SVMs in classifying linearly separable data (e.g. Figure \ref{fig:gaussian_equality_separation}, where equality separation is more conservative in classifying the positive brown class).
Since our focus is to evaluate in supervised AD, we defer results and further discussion of each experiment to Appendix \ref{appendix:linear_classification_exp}.
For supervised AD, we explore 2 settings.
The first is with synthetic data, where we can understand the differences between classifiers more clearly with visualizations and ablation studies.
The second is with real-world data from cyber-security, medical and manufacturing.
We evaluate the ability of models to separate normal and anomaly classes with area under the precision-recall curve (AUPR), also known as average precision.
For evaluation, we seek to answer two research questions (RQs):
\textbf{RQ1.} Is equality separation better than conventional approaches of halfspace separation or RBF separation?
\textbf{RQ2.} Can pure ERM work for supervised AD?
For RQ1, we compare across different locality approaches (equality separation, halfspace separation or RBFs) for a fixed method.
For RQ2, we compare standard ERM with other methods on seen and unseen anomalies with other supervised methods for a fixed locality approach.
\looseness=-1

\subsection{Supervised anomaly detection: toy model}
\label{section:nonlinear_ad}
We explore NNs as embedding functions into a linear feature space for non-linear supervised AD.
With a simple binary classification objective, we test how well different NNs can form one-class decision boundaries.
Other deep methods have been used for unsupervised \citep{DAGMM}, semi-supervised \citep{DeepSAD} and supervised \citep{abc_supervised_ad} AD.
We only consider directly optimizing over the traditional binary classification objective.
To show how local activations are more suitable for AD as suggested by closing numbers, we mitigate their limitations of vanishing gradients by experimenting with low dimensionality.

\paragraph{Setup}
To demonstrate AD with seen and unseen anomalies, we generate synthetic 2D data by adapting the problem in \citet{coin_flipping_nn}.
The normal (positive\footnote{This is according to neuron activation and differs from AD literature. More comments in Appendix \ref{appendix:non_linear_ad}.}) class is uniformly sampled from a circle of radius 1, and the anomaly (negative) class during training and testing is uniformly sampled from circles of radius 2 and 0.5 respectively, with all centred at the origin.
This task is not trivial because it is easy to overfit the normal class to the ball rather than just its boundary.
To challenge each method, we limit this experiment to 100 data samples.
More details can be found in Appendix \ref{appendix:non_linear_ad}.
\looseness=-1

\paragraph{Models}
We make apples-to-apples comparisons with classical binary classifiers: tree-based methods decision trees (DT), random forest (RF), extreme gradient boosted trees (XGB), and halfspace methods logistic regression (LR) with RBF kernel and SVM with RBF kernel.
To illustrate the difficulty of this AD task, we also use standard AD baselines: One-Class SVM (OCSVM) with RBF kernel, Local Outlier Factor (LOF) and Isolation Forest (IsoF).\footnote{Note that these are unsupervised methods. We use them only to benchmark and not as our main comparisons.}
We compare these shallow models with equality separators with an RBF kernel.
We also compare NNs directly trained for binary classification. 
We modify NNs with 2 hidden layers to have (a) halfspace separation, RBF separation or equality separation at the output layer, 
and (b) leaky ReLU, RBF or bump activations in hidden layers.
We provide quantitative (Table \ref{tab:supervised_ad}) and qualitative (Figure \ref{fig:AD_heatmap}) analyses
that suggest either accurate one-class classification or overfitting to seen anomalies.
\looseness=-1

\begin{table}
    \caption{Test AUPR for synthetic data
    comparing shallow models and neural networks.
        }
\begin{subtable}{0.34\textwidth}
    \centering
    \caption{Shallow models.}
    \begin{tabular}{cc}
        \toprule
Shallow Models & AUPR\\
\midrule
DT & 0.87$\pm$0.03\\
RF  & 0.88$\pm$0.01\\
 XGB  &  0.55$\pm$0.01\\
 LR  &  0.54$\pm$0.00\\
 SVM   &  0.54$\pm$0.00\\
 OCSVM   &  0.83$\pm$0.00\\
 IsoF & 0.86$\pm$0.05\\
 LOF & \textbf{1.00$\pm$0.00}\\
 ES  (ours) &  \textbf{1.00$\pm$0.00}\\
\bottomrule
    \end{tabular}
\end{subtable}
\begin{subtable}{0.63\textwidth}    \centering
\caption{For NNs, we modify the output layer (halfspace separation (HS), equality separation (ES) and RBF separation (RS)) in the rows and activation functions in the columns.
        Random classifier AUPR is 0.75.}
    \begin{tabular}{lccc}
        \toprule
NN$\backslash$Activation & Leaky ReLU & Bump & RBF\\
\midrule
HS  &  0.62$\pm$0.12 &  0.92$\pm$0.13 & 0.83$\pm$0.18\\
ES (ours)  &  0.83$\pm$0.15 &  {0.99$\pm$0.04} & \textbf{1.00$\pm$0.00}\\
RS  &  0.97$\pm$0.05 &  0.91$\pm$0.14 & 0.97$\pm$0.06\\
\bottomrule
    \end{tabular}
\end{subtable}
    \label{tab:supervised_ad}
\end{table}

\begin{figure}[t]
    \centering
    
\begin{subfigure}[b]{0.175\textwidth}
    \centering
    \includegraphics[height=0.10\textheight, keepaspectratio]{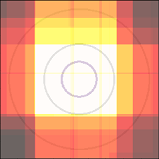}
        \caption{RF}
        \label{fig:AD_heatmap_RF}
\end{subfigure}
\begin{subfigure}[b]{0.175\textwidth}
    \centering
    \includegraphics[height=0.10\textheight, keepaspectratio]{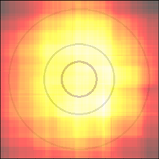}
        \caption{IsoF}
        \label{fig:AD_heatmap_IF}
\end{subfigure}
\begin{subfigure}[b]{0.175\textwidth}
    \centering
    \includegraphics[height=0.10\textheight, keepaspectratio]{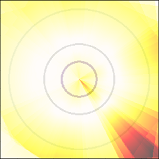}
        \caption{LOF}
        \label{fig:AD_heatmap_LOF}
\end{subfigure}
\begin{subfigure}[b]{0.175\textwidth}
    \centering
    \includegraphics[height=0.10\textheight, keepaspectratio]{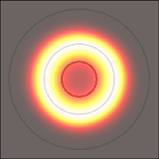}
        \caption{ES (RBF)}
        \label{fig:AD_heatmap_ES_RBF}
\end{subfigure}
\begin{subfigure}[b]{0.175\textwidth}
    \centering
    \includegraphics[height=0.10\textheight, keepaspectratio]{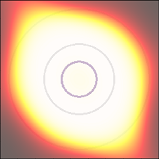}
        \caption{HS-b}
        \label{fig:AD_heatmap_HS2b}
\end{subfigure}
\begin{subfigure}[b]{0.175\textwidth}
    \centering
    \includegraphics[height=0.10\textheight, keepaspectratio]{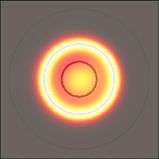}
        \caption{ES-r}
        \label{fig:AD_heatmap_ES2r}
\end{subfigure}
\begin{subfigure}[b]{0.175\textwidth}
    \centering
    \includegraphics[height=0.10\textheight, keepaspectratio]{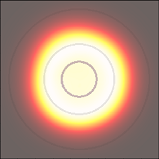}
        \caption{RS-r}
        \label{fig:AD_heatmap_RBF2r}
\end{subfigure}
\begin{subfigure}[b]{0.5\textwidth}
    \centering
    \includegraphics[height=0.10\textheight, keepaspectratio]{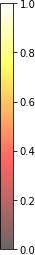}
        \caption{Color map: brighter colors $\mapsto$ normal class.}
        \label{fig:AD_heatmap_colourbar}
\end{subfigure}
    \caption{
    Sample heatmap predictions by different models.
    The middle circle is the positive class, while the outer and inner circle is the negative class during training and testing respectively.
    Figures \ref{fig:AD_heatmap_HS2b}, \ref{fig:AD_heatmap_ES2r} and \ref{fig:AD_heatmap_RBF2r} are deep models, with hidden layer activations representated by the suffix `b' for bump and `r' for RBF.
    Shallow equality separator (Figure \ref{fig:AD_heatmap_ES_RBF}) has a one-class decision boundary closest to ground truth, followed by equality separator neural networks with RBF activation (Figure \ref{fig:AD_heatmap_ES2r}).
    }
    \label{fig:AD_heatmap}
\end{figure}

\paragraph{Shallow models}
From Table \ref{tab:supervised_ad}, we see classical binary classifiers XGB, LR and SVM perform worse than random and are inappropriate for supervised AD.
DT, RF and LOF perform better than random but overfit the training data;
even though LOF has a perfect test AUPR, it has a train AUPR of 0.84, suggesting a bad one-class decision boundary (Figure \ref{fig:AD_heatmap_LOF}), while DT and RF overfit with an $\ell_\infty$ norm-like decision boundary (e.g. Figure \ref{fig:AD_heatmap_RF}).
IsoF has a high AUPR (Figure \ref{fig:AD_heatmap_IF}), but our ES with RBF kernel has the highest and perfect AUPR with a decision boundary close to ground truth (Figure \ref{fig:AD_heatmap_ES_RBF}).
Overall, we observe that equality separation is better than halfspace separation (RQ1) and ERM works only with equality separation (RQ2).
\looseness=-1

\paragraph{Neural networks}
Shallow methods cannot always be used (e.g. large amounts of non-tabular or high-dimensional data), so we also experiment on NNs with hidden layers.
Though straightforward, our equality separator and bump activation achieves better NN performance%
. Answering RQ1 and RQ2, we observe that
equality separators and RBF separators with hidden RBF activations trained with vanilla ERM consistently obtain perfect AUPR (Table \ref{tab:supervised_ad}), and
we validate that closing numbers are useful in quantifying the ability to induce closed decision boundaries.
Looking at individual models, about half of the equality separators achieve high separation between normal and anomaly classes, with anomalies 8 times further from the hyperplane in the penultimate layer than normal data.
In contrast, only 1 out of 20 RBF separators with RBF activations attain such high separation -- most models produced outputs similar to Figure \ref{fig:AD_heatmap_RBF2r}.
The semi-locality of equality separators retain locality for AD while enhancing separation between normal and anomaly classes of NNs with RBF activations.
\looseness=-1

\paragraph{Geometry of decision boundary}
More local activation functions that have a reduced region being activated correspond to regions in space that the model has seen before.
Through our inductive bias of the geometry formalized through closing numbers, we can use more local activation functions to sidestep the assumptions and difficulty of probabilistically reducing the open space risk via ERM by sampling anomalies, such as specifying a distribution to sample in the high-dimensional input space \citep{XAD_negative_sampling} or latent space \citep{OOD_non_parametric_outlier_synthesis}.
This intuition of the geometry is also present in how the margin width changes across layers.
Like how deep learning has been observed to extract lower-level features in earlier layers and higher-level features in later layers, we expect margin width to be larger in earlier layers to allow more information through and smaller in later layers to be more discriminatory.
In Appendix \ref{appendix:non_linear_ad}, we corroborate with this intuition.
Through ablations, we also show that the equality separation's good results remain regardless of number of hidden layers and loss function.

\subsection{Supervised anomaly detection: real-world data}
\label{section:experiments_real_world}

\paragraph{Cyber-attacks}
We use the NSL-KDD network intrusion dataset \citep{NSL_KDD_Dataset} for real-world cyber-attack experiments.
To simulate unseen attacks (anomalies), training data only has normal data and denial of service attacks (DoS), and test data has DoS and unseen attacks, probe, privilege escalation and remote access attacks.
For simplicity, we consider NNs with only 1 type of activation function in terms of locality as vanilla binary classifiers rather than switching between activations between hidden and output layers like in the previous experiments.
Baselines we compare against are SVMs and one-vs-set machines (OVS) with RBF kernels for binary classifiers, and OCSVM, IsoF and LOF for unsupervised AD.
Since the original OVS returns a binary decision, so we also include an edited version from the perspective of equality separation which we denote as OVS-ES, where we center a bump activation in the center of the two hyperplanes used for the original OVS.
The state-of-the-art (SOTA) deep AD methods we compare against make use of labels: (1) negative sampling (NS), where binary classifiers are trained with generated anomalies \citep{XAD_negative_sampling} and (2) SAD, where AD is based on hypersphere membership \citep{DeepSAD}.
As a note, SAD is effectively an RBF separator on a pre-trained autoencoder.
We report results in Table \ref{tab:supervised_ad_kdd}, comparing methods within their categories.

\begin{table}
    \caption{Test AUPR (mean $\pm$ standard deviation) on NSL-KDD dataset, grouped according to different methods.
    The different methods are random classifiers (calculated in expectation), shallow supervised (Sup.) and unsupervised (Unsup.) methods, binary classifier NNs, Negative Sampling (-NS) and Deep Semi-Supervised Anomaly Detection (-SAD).
    We compare using different kinds of activations in NNs: global (HS), semi-local (ES) and local (RS).
    AUPR is reported vis-a-vis each attack and all attacks (overall).
    DoS attacks are seen during training, while the rest are not. 
        }
        \centering
    \begin{tabular}{cl|ccccc}
\toprule
&Model$\backslash$Attack  & DoS     & Probe  & Privilege  & Access  & Overall     \\
\midrule
 &Random  &  0.435  &  0.200  &  0.007  &  0.220  &  0.569 \\
 \midrule
\parbox[t]{0.5mm}{\multirow{3}{*}{\rotatebox[origin=c]{90}{Sup.}}} &{SVM}   &  \textbf{0.959$\pm$0.000}  &  \textbf{0.787$\pm$0.000}  &  0.037$\pm$0.000  &  0.524$\pm$0.000  &  0.948$\pm$0.000 \\
  &{OVS}   &  {0.717$\pm$0.000}  &  0.600$\pm$0.000  &  \textbf{0.503$\pm$0.000}  &  \textbf{0.610$\pm$0.000}  &  0.785$\pm$0.000 \\
  &{OVS-ES (ours)}   &  {0.956$\pm$0.000}  &  0.785$\pm$0.000  &  0.038$\pm$0.000  &  0.535$\pm$0.000  &  \textbf{0.949$\pm$0.000} \\
 \midrule
\parbox[t]{0.5mm}{\multirow{3}{*}{\rotatebox[origin=c]{90}{Unsup.}}} &OCSVM &  \textbf{0.779$\pm$0.000} &  0.827$\pm$0.000 &  \textbf{0.405$\pm$0.000} &  \textbf{0.760$\pm$0.000} &  \textbf{0.897$\pm$0.000} \\
 &IsoF  &  0.765$\pm$0.073  & \textbf{0.850$\pm$0.066} &  0.089$\pm$0.044  &  0.392$\pm$0.029  &  0.865$\pm$0.031 \\
 &LOF  &   0.495$\pm$0.000  &  0.567$\pm$0.000  &  0.039$\pm$0.000  &  0.455$\pm$0.000  &  0.718$\pm$0.000 \\
\midrule
\parbox[t]{0.5mm}{\multirow{3}{*}{\rotatebox[origin=c]{90}{ERM}}}&HS     &  0.944$\pm$0.016  &  \textbf{0.739$\pm$0.018}  &  0.016$\pm$0.006  &  0.180$\pm$0.033  &  0.877$\pm$0.010 \\
 &ES (ours)  &  \textbf{0.974$\pm$0.001}  &  0.717$\pm$0.107  &  \textbf{0.045$\pm$0.014}  &  \textbf{0.510$\pm$0.113}  &  \textbf{0.941$\pm$0.014} \\
 &RS  &  0.356$\pm$0.002  &  0.213$\pm$0.002  &  0.010$\pm$0.000  &  0.228$\pm$0.002  &  0.406$\pm$0.001 \\
\midrule
\parbox[t]{0.5mm}{\multirow{3}{*}{\rotatebox[origin=c]{90}{NS}}} &HS-NS         &  0.936$\pm$0.001  &  0.642$\pm$0.030  &  0.006$\pm$0.000  &  0.139$\pm$0.001  &  0.845$\pm$0.006 \\
 &ES-NS (ours)  &  \textbf{0.945$\pm$0.009}  &  \textbf{0.659$\pm$0.013}  &  \textbf{0.023$\pm$0.011}  &  0.206$\pm$0.013  &  \textbf{0.881$\pm$0.007} \\
 &RS-NS         &  0.350$\pm$0.002  &  0.207$\pm$0.002  &  0.009$\pm$0.000  &  \textbf{0.223$\pm$0.002}  &  0.401$\pm$0.002
\\
\midrule
\parbox[t]{0.5mm}{\multirow{3}{*}{\rotatebox[origin=c]{90}{SAD}}} &HS-SAD         & 0.955$\pm$0.003  & 0.766$\pm$0.011  &  \textbf{0.100$\pm$0.000}	  & 0.447$\pm$0.000  &  0.935$\pm$0.007 \\
 &ES-SAD (ours)  & \textbf{0.960$\pm$0.002}   & \textbf{0.795$\pm$0.004}   & 0.047$\pm$0.002   & \textbf{0.509$\pm$0.009}   & \textbf{0.952$\pm$0.002}  \\
 &RS-SAD         & 0.935$\pm$0.000   &  0.678$\pm$0.000  &  0.022$\pm$0.000  &  0.142$\pm$0.000  &  0.846$\pm$0.000 \\
\bottomrule
    \end{tabular}
    \label{tab:supervised_ad_kdd}
\end{table}

\paragraph{\textit{Shallow methods}}
Due to their similarity, we expect some generalization of DoS to probe attacks, while little is expected for privilege escalation or remote access attacks.
Our SVM baseline validates this, achieving a much higher increase in performance from 
random for probe attacks than
privilege and access attacks.

The RBF kernel seems to help encode a bias of similarity through locality -- we observe a drop in AUPR across all attacks when using a linear kernel (Table \ref{tab:supervised_ad_kdd_FULL_TABLE} in Appendix \ref{appendix:kdd}).
For OVS, we see a significant drop in performance on the seen DoS attack and probe attack, while unseen performance on privilege and access attacks are high.
Such a trade-off in performance likely suggests that these unseen attacks and some normal data are around the region where the SVM prediction is very high.
Our OVS-ES mitigates large trade-offs, maintaining seen anomaly AUPR while slightly increasing the unseen anomaly AUPR.
Overall, there is some generalization from seen to unseen anomalies, but it is not obvious how to improve these shallow methods.
Among unsupervised benchmarks, we also see a trade-off between detecting known and unknown anomalies: OCSVM is best at detecting privilege and access attacks, while IsoF is best at detecting DoS and probe attacks.
Notably, unsupervised methods are not competitive with the supervised SVM on seen (DoS) attacks.
\looseness=-1

\paragraph{\textit{NN}}Across the NN methods, we notice that equality separators/bump activations are the most competitive at detecting both seen and unseen anomalies, often obtaining the largest AUPR in their category (binary classifiers, NS and SAD).
In particular, equality separation with SAD and standard binary classification seem to be among the best, being competitive with
the SVM baseline.
we observe that equality separation in NNs is better than halfspace separation and RBFs (RQ1) and vanilla ERM with equality separation is competitive with SAD without any pre-training (or sampling, as in NS), as well as a good alternative to other supervised methods which achieve good AUPR on the seen DoS attacks (RQ2).
\looseness=-1

\paragraph{Medical dataset}
We use the thyroid dataset \citep{thyroid_dataset} for medical anomalies.
Using hyperfunction as the seen anomaly and subnormal as unseen, we compare ERM methods in Table \ref{tab:thyroid}.
Even though the thyroid dataset has the fewest dimensions of all datasets (21D), we see RBF separation already performing very poorly despite its stellar performance in our 2D synthetic dataset.
Nevertheless, equality separation still detects seen and unseen anomalies better than halfspace separation (RQ1).
\looseness=-1

\paragraph{Low data, high-dimensional image dataset}
MVTec dataset \citep{MVTec_Dataset} has images of normal and defective objects.
We select one anomaly class to be seen during training for supervised AD, while the other anomaly classes are only seen during test time.
There are on the order of 300 training samples, while feature embeddings (from DINOv2 \citep{dinov2}) are 1024-dimensional.

We compare AUPR of binary classifiers across 6 objects in Table \ref{tab:mvtec_overall}, zooming in on individual defect detection performance for pills (which has the most number of defect groups) in Table \ref{tab:mvtec_pill}.
Since DINOv2 acts as a kernel to embed features into a high dimensional space, we consider halfspace separation as our baseline,
while RBF separators act like SAD in this case for comparison.
During training, we observe that RBF separators consistently have high training error and low train AUPR (below 0.55), suggesting underfitting.
Here, too much locality impedes learning, especially in high dimensions.
Meanwhile, we see equality separators detecting unseen anomalies the best for 5/6 of the objects.
Furthermore, equality separators are consistently the best across different anomaly types (Table \ref{tab:mvtec_pill}).
Hence, equality separators can be integrated with foundation models to ensure a higher fidelity to the AD objective (RQ1, RQ2).

\begin{table}
    \caption{Test AUPR (mean $\pm$ standard deviation) on thyroid and MVTec dataset across various anomaly types.}
    \begin{subtable}[t]{0.33\textwidth}
    \centering
    \caption{Thyroid AUPR for hyperfunction (seen) and subnormal (unseen) \\anomalies.}
    \label{tab:thyroid}
    \vspace{-0.5mm}
        \resizebox{\columnwidth}{!}{%
            \begin{tabular}{lccc}
        \toprule
        Diagnosis & HS & RS  & ES (ours)\\
        \midrule
Hyper.      & 0.907$\pm$0.013& 0.030$\pm$0.002 & \textbf{0.934$\pm$0.004}  \\
Subnorm.  & 0.459$\pm$0.040 & 0.082$\pm$0.002 & \textbf{0.596$\pm$0.007} \\
Overall  & 0.615$\pm$0.032 & 0.103$\pm$0.001 & \textbf{0.726$\pm$0.004} \\
        \bottomrule
    \end{tabular}
    }
    \hspace{1mm}
    \end{subtable}
    \begin{subtable}[t]{0.37\textwidth}
    \centering
    \caption{MVTec overall AUPR for 6 objects. SAD is implemented with RBF.}
    \vspace{-0.5mm}
    \label{tab:mvtec_overall}
\resizebox{\columnwidth}{!}{%
    \begin{tabular}{lccc}
        \toprule
        Object & HS & RS (SAD) & ES (ours) \\
        \midrule
Capsule     & 0.918$\pm$0.002 & 0.894$\pm$0.000 & \textbf{0.947$\pm$0.003}\\
Hazelnut    & \textbf{0.932$\pm$0.006} & 0.783$\pm$0.000 & \textbf{0.932$\pm$0.034}\\
Leather     & \textbf{1.000$\pm$0.000} & 0.848$\pm$0.000 & 0.996$\pm$0.001\\
Pill        & 0.917$\pm$0.005 & 0.908$\pm$0.000 & \textbf{0.933$\pm$0.014}\\
Wood        & 0.984$\pm$0.004 & 0.866$\pm$0.000 & \textbf{0.986$\pm$0.007}\\
Zipper      & 0.998$\pm$0.000 & 0.879$\pm$0.000 & \textbf{1.000$\pm$0.000}\\
        \bottomrule
    \end{tabular}
    }
    \end{subtable}
    \begin{subtable}[t]{0.29\textwidth}
    \centering
    \caption{MVTec AUPR for pills object by defect type.}
    \label{tab:mvtec_pill}
\resizebox{0.9425\columnwidth}{!}{%
            \begin{tabular}{lccc}
        \toprule
        Defect & HS & ES (ours) \\
        \midrule
Combine        & 0.913$\pm$0.010 & \textbf{0.943$\pm$0.016}\\
Contam.   & 0.768$\pm$0.029 & \textbf{0.802$\pm$0.114}\\
Crack           & 0.652$\pm$0.009 & \textbf{0.730$\pm$0.046}\\
Imprint         & 0.470$\pm$0.035 & \textbf{0.573$\pm$0.094}\\
Type            & 0.885$\pm$0.071 & \textbf{0.910$\pm$0.156}\\
Scratch         & 0.655$\pm$0.021 & \textbf{0.690$\pm$0.058}\\
        \bottomrule
    \end{tabular}
    }
    \end{subtable}
    \label{tab:mvtec_thyroid}
\end{table}

\paragraph{Limitations/Extensions}
In general, we see that more local activations perform better in the low-dimension synthetic dataset, while more global activations perform better in the higher dimensional datasets.
Along with observations on training performance, we notice a trade-off between the ability to learn from the empirical risk (i.e. converging to a low loss) and the robustness to reject unseen samples, which we quantify with closing numbers and observe through our empirical results.
This observation complements the understanding from \citet{goodfellow_fast_gradient_sign} on the robustness of more local activations from the lens of enclosing decision regions across consecutive layers in neural networks (i.e. closing numbers).
Comparing the results of NNs to the SVM shallow baseline for the NSL-KDD dataset, we suspect that more work needs to be explored on how to learn well while being able to reject anomalies in NNs.

In addition, we note that equality separation generally has good results regardless of dimensionality.
However, we corroborate with 
\citet{nonmonotonic_activations_mlp_thesis} that bump activations in equality separation can be sensitive to initializations, evidenced by some variance in test AUPR.
To mitigate this, we can set $\sigma$ to be large to mitigate this. 
Ensembling equality separators can also help suppress inaccurate outputs from bad initializations.
We discuss more in Appendix \ref{appendix:non_linear_ad}.
We also note that NNs can struggle at generalizing to unseen data, as OCSVM achieves better AUPR than NNs for the unseen attacks in NSL-KDD.
We posit that understanding how the activation function, loss function and optimization scheme work together in equality separation to form closed decision regions is the next step towards designing better NNs for AD.
From our experiments in varying dimensions, we suspect that the key to balancing learning and enclosing decision regions is mitigating vanishing gradients for the regions classified as anomalous.
For unseen anomalies, although equality separation is not competitive with OCSVM, equality separation still outperforms other supervised methods as a whole, highlighting the suitability of equality separation for AD.

\section{Conclusion}

We revisit the belief that \texttt{XOR} is not linearly separable, which prompts us to propose
a new linear classification paradigm, \textit{equality separation}.
Equality separators use the distance of a datum from a hyperplane as their classification metric.
They have twice the VC dimension of halfspace separators (e.g. it can linearly solve some linearly non-separable problems like \texttt{XOR}) and enjoy the benefits of globality and locality, which we formalize and quantify with \textit{closing numbers}.
Closing numbers guide hyperparameter choice of the number of neurons in each hidden layer of a neural network and suggest how often we can expect closed decision regions under different activation functions.
Using the \textit{bump activation} in hyper-hill classifiers to model equality separation, we interpret hyper-hill classifiers as equality separators across each neural network layer which encloses decision regions almost everywhere.
This property is especially useful in supervised anomaly detection when limited supervision is present for anomalies and for representation learning of the normal class when we require the normal class to be within the margin.
We support these claims empirically, quantitatively and qualitatively showing that equality separation can produce better one-class decision boundaries.
Equality separation can be used with kernels, neural networks and foundation models, striking a good balance detecting seen and unseen anomalies in both low and high dimensions.
\looseness=-1

\subsubsection*{Broader Impact Statement}
Not applicable.
\subsubsection*{Acknowledgments}

The authors thank the anonymous reviewers for their constructive feedback during the review process.
The authors also thank Tri Kien Tran and Tanmay Kenjale for sharing code for MVTec dataset preprocessing.
This work was supported in part by the Department of Energy (DoE) project GR00001113.

\newpage
\bibliography{bib}
\bibliographystyle{tmlr}

\appendix
\begin{appendices}
\newpage
\section*{Appendix contents}
This appendix is ordered as follows.
\begin{enumerate}
    \item Appendix \ref{appendix:connections} mentions connections of the equality separator with other concepts in the literature.
    \item Appendix \ref{proof:vc} proves the VC dimension of equality separators, including a proof in 2D for intuition.
    \item Appendix \ref{proof_pseudo_lrt} shows the connection between SVMs and equality separation through a formulation of a likelihood ratio test.
    \item Appendix \ref{appendix:closing_numbers} proves the closing numbers of equality separators and halfspace separators, showing why equality separators can induce closed decision regions better than halfspace separators.
    \item Appendix \ref{appendix:bump_and_loss_choices} discusses the different choices of bump activation functions and loss functions, which bias the hypothesis to model different things. In summary, the choosing different bump activation functions may model a hyperplane compared to a margin, while mean-squared error loss may be more suitable than logistic loss when there is label noise, such as through anomaly sampling.
    \item Appendix \ref{appendix:experiment_details} lays out all experimental details for reproducibility, ablations and visualizations.
    For instance, we show that the margin width increases in earlier layers while decreasing in later layers, which corresponds to the intuition of deeper layers learning more specialized information and being more selective to activate on fewer inputs (Appendix \ref{appendix:non_linear_ad}).
    Code used can be accessed at \url{https://github.com/mattlaued/XOR-is-Linearly-Classifiable}.
\end{enumerate}

\newpage
\section{More connections of equality separators}
\label{appendix:connections}

We explore more connections of equality separators with other concepts in machine learning.

\subsection{Connection to hyperrectangle classifiers}

Since the halfspace separator is tightly connected to our decision rule, our decision rule is also connected to other classifiers that are related to halfspace separators. To illustrate this, we use one example of the hypothesis class being the set of all hyperrectangles in $\mathbb R^n$.
Just as how using hyperrectangles as classifiers in $\mathbb R^n$ can be seen as using the intersection of $2n$ halfspaces, we can also see its relation to our $\epsilon$-error separator.

For each dimension $i\in [n]$, define a weight vector $\mathbf w_i\in \mathbb R^n$, a bias term $b_i\in \mathbb R$ and a width parameter $\epsilon_i\in \mathbb R^n$ that will control the width of the hyperrectangle in the $i^{th}$ dimension.
Hyperrectangle classifiers can be produced by the intersection of these $n$ $\epsilon$-error separators.
In fact, we can also generalize this to hyperparallelograms too, by removing the constraint of the orthogonality of the weight vector.

\subsection{Connection with linear and support vector regression}
Let us view equality separation from a regression perspective and observe the similarities and differences between equality separation and regression.
Due to the equality form of $\mathbf w^T\mathbf x+b=0$, the strict equality separator is similar to a linear regression model for one class.
Meanwhile, our $\epsilon$-error separator can be seen as a support vector regression problem for one class, with $\epsilon$ defining the error/noise tolerated.
In support vector regression (SVR) \citep{SVR}, points within the margin are not penalized for being a small distance away from the hyperplane.

However, our approach differs from regression in two ways.
First, equality separation handles anomalies differently in supervised anomaly detection (AD).
Our optimization scheme accounts for information provided by the other class, which can act as a form of regularization.
In a vanilla linear regression problem, the negative class may not matter so much during training, and we can discard the negative class during training and perform linear regression with only the positive class.
However, in a learning setting with a non-linear embedding function (like a neural network), the negative class can help with updating the embedding function such that the negative class will not lie on the hyperplane (and can help prevent degenerate solutions, such as projecting all points onto a single point). 
SVR can account for anomalies during training, but anomalies are identified in an unsupervised fashion and cannot utilize labeled information.
For a classification task, the distance from the hyperplane determines how similar a data point is to the class defined by the hyperplane, and we can also leverage label information to learn this.
This inductive bias is especially useful for one-class classification.

Second, equality separation is agnostic to the exact position on the regression manifold.
The aim of regression is to transform a datum into its corresponding response variable.
In equality separation, we wish to classify the datum, so we are only concerned if the transformed datum lies somewhere on the regression manifold (of some class), indicating that it belongs to the corresponding class.

For sake of concreteness, consider the homogenous linear case.
The output of the equality separator would tell us if the input was a valid input by checking membership to the margin.
On the other hand, the SVR model would tell us what the response of the input is
given that the input is valid ($\mathbf b$ can be broken down into the sum of a learnt translation and the response variable for the corresponding input during training). 
More generally, given that there is some (unknown) regression task with a non-linear relationship, neural networks with equality separators can be viewed as first embedding the input manifold onto a hyperplane (with margin) and checking the validity of the input by verifying if the embedded input lies on the hyperplane. 
If a regression task were specified, then we would instead want the output of the hyperplane. For equality separators implemented in neural networks, there are many possible embeddings that will regress inputs to different parts of the hyperplane, but as long as the input is embedded onto the hyperplane, our task of classification is done.

Such a perspective also suggests that equality separation is similar but ideologically different from classification:
traditional classification seeks to make a statement about a datum being more likely than not, while equality separation is more concerned about manifold membership.\footnote{We are not claiming that classification tasks are not about manifold membership. Rather, we are suggesting that it is more natural to view manifold membership from equality separation perspective.}
This is probabilistic in nature, and its geometric implication is that data are tipped over decision boundaries.
Ideologically, there are no `optimal' data, only more likely ones.
In this way, equality separation is closer to regression in the sense that there is an optimal point to reach zero loss, which is exactly on the hyperplane.

\subsection{More details on hyper-ridge, hyper-hill classification and one-vs-set machine}

We provide more details on the models most related to equality separation in the broader literature, which are hyper-ridge classification, hyper-hill classification and one-vs-set machines.
In summary, hyper-ridge classifiers, hyper-hill classifiers and one-vs-set machines all do not allow label flipping and specify that the positive class should be within the margin, which is the constrained version of equality separation in AD applications.

The hyper-ridge network \citep{nonmonotonic_activations_mlp_thesis} uses the ridge activation function, with the classification rule being
\begin{align*}
    \mathrm{sign}(1-(\mathbf w^T \mathbf x + b)^2).
\end{align*}
The hyper-hill network \citep{nonmonotonic_activations_mlp_thesis} is a smooth approximation of the hyper-ridge network, inducing a ``nonmonotonic, `bump-like', activation function'' such as the Gaussian
\begin{align*}
    \exp \left (-\frac{(\mathbf w^T \mathbf x + b)^2}{\sigma^2} \right )
\end{align*}
as in (\ref{eqn:bump_activation}). 
These two networks are proposed for regular binary classification.

On the other hand, one-vs-set machine is proposed for multi-class classification while reducing open space risk for open set recognition.
It was first proposed in \citet{towards_open_set_recognition}, but has also extended in \citet{best_fitting_hyperplane_classification}.
It trains a linear SVM to first find a linear decision boundary between 2 classes, then produces a second parallel hyperplane to classify the positive class within the margin.

These 3 classification approaches are fundamentally the same, finding the margin where the positive class lies in.
They merely differ in how this margin is found: one-vs-set machine does this linearly while hyper-ridge and hyper-hill classifiers do this with many non-linear layers.

However, these 3 approaches do not explicitly state the linear condition for classification while we do, which is the generalization of the $\phi$ labeling function to the indicator function.
This linear observation leads to a more generalized version, which emphasizes
the asymmetry between the two classes where one class falls in the margin while the other falls outside, so we instead refrain from specifying which class falls in the margin without prior knowledge.
This classifier (equality separator) is linear and admits twice the VC dimension of halfspace separation, while these 3 other classifiers admit the same VC dimension of halfspace separation \citep{nonmonotonic_activations_mlp_thesis}.
As a concrete example, even though these 3 other classifiers can classify \texttt{XOR}, they cannot classify both \texttt{OR} and \texttt{AND}.
However, equality separation allows label flipping from the observation that labels in regular binary classification are arbitrary.
This insight is intuitive but (1) immediately makes our classifier have twice the expressivity and (2) suggests that the previous models are helpful when domain knowledge informs us of the class belonging to the margin.

In particular, equality separation reduces to these 3 classification approaches in AD applications, because we assign the normal class to be the positive class (i.e. the class within the margin).
In AD, we assert prior knowledge of the specific labels of the classes, where the labels are not arbitrary and have specific meaning (normal versus anomalies i.e. the complement of normal data).
\looseness=-1

\newpage
\section{VC dimension proofs and comments}
\label{proof:vc}

To layer on the intuition, we will first prove the VC dimension in $\mathbb R^2$ of the strict equality separator for binary variables, then for the whole space of $\mathbb R^2$ and finally, the general version of $\mathbb R^n$.
The former two will provide good intuitions for the generalization to $\mathbb R^n$.
We note that we will assume that $n\geq2$ in this section since the case for $n=1$ is trivial.
We denote $\mathcal H$ as the hypothesis class of strict equality separators, as defined in Definition \ref{def:strict_equality_separator}.
Let $\mathrm{VCdim}(\mathcal F)$ denote the VC-dimension of a function class $\mathcal F$.

\subsection{Binary (boolean) variables in $\mathbb R^2$}
\label{binary variables in R2}
 We first go through a concrete example to illustrate how we can find hypotheses to classify all points using label flipping.

In the special case where $\mathbf x$ is binary (ie. has entries which are binary variables) in $\mathbb R^2$, we can observe that $\mathrm{VCdim}(\mathcal H) = 4$, which is the maximum number of points in the binary feature space in $\mathbb R^2$.

To see how one can always shatter the 4 points of $(0,0), (0,1), (1,0)$ and $(1,1)$, we can use the following rule:
\begin{enumerate}
\item If there is a class that does not have any point that belongs to it, then a line passing through none of the points suffices as a classifier.
\item If there is a class that has only 1 point that belongs to it, then a line passing through that point that does not pass through any other point suffices as a classifier. Examples are given in the $\texttt{AND}$ and $\texttt{OR}$ cases.
\item If there is a class that has only 2 points that belong to it, then a line passing through those 2 points as a classifier. Example is given in the $\texttt{XOR}$ cases.
\end{enumerate}
Other cases are equivalent to label flipping, which can be captured by the hypothesis class $\mathcal H$. Hence, the set of 4 points can be shattered by $\mathcal H$.

\subsection{VC dimension in $\mathbb R^2$}
\label{proof:vc_R2}

In fact, we can analyse a more general setting where the elements of $\mathbf x$ live in $\mathbb R^2$ rather than being contrained to binary variables.
In this more general setting, we obtain the following theorem.
\begin{theorem}
\label{VC_Dim_2D}
Let $\mathcal H=\{\mathds{1}(\mathbf w^T\mathbf x+b = 0)\} \cup \{\mathds{1}(\mathbf w^T\mathbf x+b\neq 0)\}$ and $\mathbf x\in \mathbb R^2$. Then, $\mathrm{VCdim}(\mathcal H) = 5$.
\end{theorem}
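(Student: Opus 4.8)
The plan is to establish the two bounds $\mathrm{VCdim}(\mathcal H)\ge 5$ and $\mathrm{VCdim}(\mathcal H)\le 5$ separately. Throughout I use that a line and a hyperplane coincide in $\mathbb R^2$, and I record the trace of the ``on the line'' family $\mathcal H_1=\{\mathbf x\mapsto\mathds 1(\mathbf w^T\mathbf x+b=0)\}$ on a finite point set $P$: its realizable subsets are exactly $\emptyset$, $P$ (taking $\mathbf w=\mathbf 0$), the singletons $\{p\}$, and the sets $P\cap\ell$ for lines $\ell$ with $|P\cap\ell|\ge 2$. Since the ``off the line'' family $\mathcal H_2$ consists of complements, a subset $S\subseteq P$ is realizable by $\mathcal H=\mathcal H_1\cup\mathcal H_2$ iff $S$ or $P\setminus S$ lies in the trace of $\mathcal H_1$.

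For the lower bound I would fix five points in general position (no three collinear), e.g.\ five points on a circle, and show every one of the $2^5$ labelings is realized, exploiting the label flipping built into $\mathcal H$. Given a labeling, let $S$ be the set of points carrying whichever label occurs at most $\lfloor 5/2\rfloor=2$ times, and model that class with $\mathcal H_1$ (or with $\mathcal H_2$ if it is the ``$0$'' class): if $|S|=0$ use the empty zero set ($\mathbf w=\mathbf 0$, $b\neq 0$); if $|S|=1$ take a line through that point avoiding the other four, which exists because only finitely many directions are forbidden; if $|S|=2$ take the unique line through those two points, which by non‑collinearity contains none of the remaining three. Hence the five points are shattered and $\mathrm{VCdim}(\mathcal H)\ge 5$.

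For the upper bound I would show that no six‑point set $P$ is shattered, and it suffices to find one unrealizable labeling among the $\binom 63=20$ labelings that split $P$ into two triples. By the trace description, a three‑element subset $S\subseteq P$ is realizable via $\mathcal H_1$ iff $S$ is collinear and the line through it contains no other point of $P$ — call such an $S$ a \emph{clean triple} — and realizable via $\mathcal H_2$ iff $P\setminus S$ is a clean triple; so the split $S$ is realizable iff $S$ or $P\setminus S$ is clean. Grouping the $20$ splits into the $10$ complementary pairs $\{S,P\setminus S\}$, a full shattering would force each pair to contain a clean triple, i.e.\ at least $10$ clean triples. But two distinct lines share at most one point, so distinct three‑point lines use pairwise disjoint pairs out of the $\binom 62=15$ available, giving at most $\lfloor 15/3\rfloor=5$ clean triples — a contradiction. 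Therefore $\mathrm{VCdim}(\mathcal H)\le 5$, and combined with the lower bound, $\mathrm{VCdim}(\mathcal H)=5$.

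The main obstacle is the upper bound: the ``overdetermined system, unsolvable in general'' intuition quoted before the theorem only rules out a generic six‑point configuration, whereas a VC upper bound must rule out \emph{every} configuration, including highly collinear ones; the clean‑triple counting ($\le 5$ available versus $10$ needed) is precisely what upgrades ``in general'' to ``always''. The remaining steps — the trace description of $\mathcal H_1$ and the ``finitely many forbidden directions'' claim in the lower bound — are routine.
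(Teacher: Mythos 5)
Your proof is correct. The lower bound is essentially the paper's argument: five points in general position (the paper uses the fifth roots of unity), minority class of size at most $2$, and the three cases (empty / singleton / pair determining a line), with the union of the two indicator families supplying the label flip. The upper bound, however, takes a genuinely different route. The paper splits into cases according to whether the six points contain three collinear ones, and in the collinear case constructs an explicit bad labeling via an ad hoc relabeling of two points; this requires tracking which points lie on which lines. You instead prove a uniform counting bound: a $3$-$3$ split is realizable iff one of its two triples is a ``clean triple'' (collinear, with its line meeting $P$ in exactly those three points), shattering would require a clean triple in each of the $10$ complementary pairs, and a double count of point-pairs (each clean triple consumes $3$ of the $\binom{6}{2}=15$ pairs, and distinct clean triples lie on distinct lines so consume disjoint pair sets) caps the number of clean triples at $5$. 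This handles all configurations at once, including the highly collinear ones that the paper must treat separately, and it makes precise why the ``overdetermined system'' heuristic preceding the theorem upgrades from ``generic'' to ``every'' six-point set. The only price is that your argument is specific to the $3$-$3$ splits in $\mathbb R^2$, whereas the paper's case-analysis template is the one it later generalizes (in a different form) to $\mathbb R^n$; as a proof of the $2$D statement itself, yours is cleaner and complete.
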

The VC-dimension of $\mathcal H$ in $\mathbb R^2$ is an increase of 2 from the VC-dimension of the half-space separator.
The increase in 2 is due to the assignment rule as proposed in Appendix \ref{binary variables in R2}.

To prove the VC-dimension, we will construct a 2-sided inequality with lemmas \ref{VC geq 5 R2 lemma} and \ref{VC leq 5 R2 lemma}.

\begin{lemma}
\label{VC geq 5 R2 lemma}
Let $\mathcal H=\{\mathds{1}(\mathbf w^T\mathbf x+b = 0)\} \cup \{\mathds{1}(\mathbf w^T\mathbf x+b\neq 0)\}$ and $\mathbf x\in \mathbb R^2$. Then, $\mathrm{VCdim}(\mathcal H) \geq 5$.
\end{lemma}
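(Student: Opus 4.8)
The plan is to exhibit one explicit configuration of $5$ points in $\mathbb R^2$ that $\mathcal H$ shatters. I would take any $5$ points $p_1,\dots,p_5$ in \emph{general position}, i.e.\ with no three of them collinear (concretely, the vertices of a regular pentagon). Before the main argument I would record two elementary geometric facts: (i) for any two of the points, the unique line through them contains none of the other three, which is exactly the no-three-collinear hypothesis; and (ii) for any single point $p_i$, since only finitely many (at most four) of the lines through $p_i$ pass through one of the remaining points, there is a line through $p_i$ that avoids $p_j$ for all $j\neq i$. Also, trivially, some line avoids all five points.

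With these in hand, fix an arbitrary labeling $\ell:\{p_1,\dots,p_5\}\to\{0,1\}$ and set $S=\ell^{-1}(1)$, $T=\ell^{-1}(0)$. Since $|S|+|T|=5$, the smaller of the two classes has at most two points. The idea is to pin a line to exactly the points of the minority class and then pick the matching branch of the union in (\ref{eqn:strict_equality_sep}): use $\mathds{1}(h_{\mathbf w,b}(\cdot)=0)$ when $|S|\le 2$ (so that the points on the line receive label $1$), and the complementary branch $\mathds{1}(h_{\mathbf w,b}(\cdot)\neq 0)$ when $|T|\le 2$ (so that the points on the line receive label $0$). Assuming without loss of generality $|S|\le 2$: if $|S|=0$, take a line missing all five points; if $|S|=1$, take a line through the single point of $S$ missing the other four (fact (ii)); if $|S|=2$, take the line through the two points of $S$, which misses the other three (fact (i)). In each case the chosen $h_{\mathbf w,b}$ together with the $\mathds{1}(\cdot=0)$ branch realizes $\ell$ exactly; the case $|T|\le 2$ is identical with the $\mathds{1}(\cdot\neq 0)$ branch. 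This is precisely the assignment rule of Appendix~\ref{binary variables in R2}, now carried out in $\mathbb R^2$ rather than on the Boolean cube. Since every one of the $2^5$ labelings is attained, $\mathcal H$ shatters $\{p_1,\dots,p_5\}$ and therefore $\mathrm{VCdim}(\mathcal H)\ge 5$.

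The only genuinely load-bearing point — and the one place a naïve version of the argument could fail — is the general-position assumption: it is what makes facts (i) and (ii) true and hence what prevents a line fitted to the minority class from accidentally absorbing an extra point and flipping its label. I would therefore state (i) and (ii) as a short preliminary observation and then treat the remainder as routine bookkeeping over the three possible sizes of the minority class and the two branches of the hypothesis-class union. (The complementary inequality $\mathrm{VCdim}(\mathcal H)\le 5$ is handled separately in Lemma~\ref{VC leq 5 R2 lemma}.)
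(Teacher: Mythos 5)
Your proposal is correct and follows essentially the same route as the paper: the paper also places the five points as the vertices of a regular pentagon (the fifth roots of unity) and invokes the same minority-class assignment rule, fitting a line to the class with at most two points and choosing the matching branch of the union. Your write-up merely makes explicit the general-position facts that the paper leaves implicit.
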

\begin{proof}
We find a set of 5 distinct points that are shattered by $\mathcal H$. Let these 5 points be arranged on a unit circle centred at the origin, equally spaced (in the complex plane, the points will be the $5^{th}$ roots of unity).
Following the assignment rule in Appendix \ref{binary variables in R2}, we can always draw a line that classifies the points.
Therefore, these 5 points are shattered by $\mathcal H$.
\end{proof}

\begin{lemma}
\label{VC leq 5 R2 lemma}
Let $\mathcal H=\{\mathds{1}(\mathbf w^T\mathbf x+b = 0)\} \cup \{\mathds{1}(\mathbf w^T\mathbf x+b\neq 0)\}$ and $\mathbf x\in \mathbb R^2$. Then, $\mathrm{VCdim}(\mathcal H) \leq 5$.
\end{lemma}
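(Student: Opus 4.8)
The plan is to prove the contrapositive form of the bound: no six-point set $P\subseteq\mathbb R^2$ can be shattered by $\mathcal H$, which gives $\mathrm{VCdim}(\mathcal H)\le 5$. First I would record the structure of the hypotheses. Every $h\in\mathcal H$ has the form $\mathds 1(\mathbf w^T\mathbf x+b=0)$ or $\mathds 1(\mathbf w^T\mathbf x+b\neq 0)$. When $\mathbf w\neq 0$, the set $L=\{\mathbf x:\mathbf w^T\mathbf x+b=0\}$ is an honest line, and $h$ labels with $1$ exactly the points of $P$ lying on $L$, or exactly the points of $P$ lying off $L$; when $\mathbf w=0$, $h$ is constantly $0$ or constantly $1$. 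Consequently, for the finite set $P$, a dichotomy with positive part $S\subseteq P$ can be realized by $\mathcal H$ only if $S\in\{\emptyset,P\}$, or there is a line $L$ with $L\cap P=S$, or there is a line $L$ with $L\cap P=S^c$, where $S^c:=P\setminus S$.

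Now suppose for contradiction that $|P|=6$ and $P$ is shattered. I would restrict attention to the $\binom{6}{3}=20$ dichotomies that split $P$ into a positive part $S$ and a negative part $S^c$ each of size $3$. Since $|S|=|S^c|=3$, neither part is $\emptyset$ or $P$, so realizing such a dichotomy forces a line $L$ meeting $P$ in exactly three points with $L\cap P$ equal to $S$ or to $S^c$. Call a line \emph{$3$-rich} if it contains exactly three points of $P$, and note that a $3$-rich line $L$ determines the unordered complementary pair $\{L\cap P,\;P\setminus(L\cap P)\}$. Grouping the $20$ size-$3$ dichotomies into the $10$ complementary pairs $\{S,S^c\}$, each such pair forces a $3$-rich line whose associated complementary pair is exactly $\{S,S^c\}$; hence distinct complementary pairs require distinct $3$-rich lines, and $P$ must admit at least $10$ distinct $3$-rich lines.

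The contradiction then comes from a crude incidence count. Each $3$-rich line carries $\binom{3}{2}=3$ two-element subsets of $P$, and two distinct lines in $\mathbb R^2$ meet in at most one point, so two distinct $3$-rich lines cannot share a common pair of points of $P$. Therefore $10$ distinct $3$-rich lines contribute at least $30$ distinct two-element subsets of $P$, while $P$ has only $\binom{6}{2}=15$ of them — impossible. Hence no six-point set is shattered, so $\mathrm{VCdim}(\mathcal H)\le 5$; combined with Lemma \ref{VC geq 5 R2 lemma} this yields $\mathrm{VCdim}(\mathcal H)=5$.

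I do not expect a real obstacle: once the realizability criterion is in place and attention is narrowed to the balanced $3$-versus-$3$ dichotomies, everything reduces to elementary counting. The only points needing a little care are (i) the degenerate $\mathbf w=0$ hypotheses, which produce only the all-$0$ and all-$1$ labelings and so are irrelevant to the size-$3$ dichotomies, and (ii) the observation that a collinear configuration of four or more points of $P$ gives an ``over-rich'' line rather than a $3$-rich one, so it can never be used to realize a $3$-versus-$3$ split — which is precisely what keeps the injection into $3$-rich lines valid and prevents such configurations from evading the count.
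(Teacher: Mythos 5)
Your proof is correct, and it takes a genuinely different route from the paper's. The paper argues by cases on the collinearity structure of the six points: if no three are collinear, any balanced $3$-versus-$3$ labeling already fails; if three are collinear, it exhibits an explicit bad labeling, with a label-swapping step to handle the sub-case where the three points of the other class also happen to be collinear. You instead run a global counting argument: every balanced dichotomy forces a line meeting $P$ in exactly three points, distinct complementary pairs force distinct such lines, and the incidence bound that two distinct lines share at most one point caps the number of $3$-rich lines at $\lfloor\binom{6}{2}/\binom{3}{2}\rfloor=5<10$. This is shorter, avoids the somewhat delicate relabeling in the paper's collinear case, and actually shows more — at most half of the ten balanced complementary pairs can ever be realized. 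The trade-off is that the paper's construction is explicit (it hands you a concrete unrealizable labeling), and its linear-algebraic style is the one that scales to the general $\mathbb R^n$ bound in Lemma \ref{lemma:VC_general_lesser}, whereas your pair-counting exploits the fact that two lines in the plane meet in at most one point and does not transfer directly to hyperplanes in higher dimension. Your handling of the two side issues — the degenerate $\mathbf w=0$ hypotheses and the impossibility of using an over-rich (four-or-more-point) line for a $3$-versus-$3$ split — is also correct and closes the only loopholes in the realizability criterion.
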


\begin{proof}
We will prove that any 6 distinct points cannot be  shattered by $\mathcal H$.\footnote{The case where points are not distinct is trivial to show the inability to shatter, so we will not explicitly mention this fact in future analyses.}
Consider 2 exhaustive cases: (i) when there does not exist 3 colinear points and (ii) when there exists 3 points that are colinear (ie. lie on the same line) .

In case (i), any labelling that has 3 points from class $0$ and the remaining 3 points from class $1$ will not produce a perfect classifier.
Since 3 points belong to each class, then if there exists a perfect classifier from $\mathcal H$, then the line must pass through all 3 points from either class.
Since no 3 points are colinear, then there is no line that passes through 3 points from either class.
Therefore, any 6 points that do not have 3 colinear points are not shattered by $\mathcal H$.

In case (ii), it is not possible to shatter the 6 points.
Denote the concept (ie. the true function) as $c:\mathcal X \to \mathcal Y$.
Consider 3 colinear points $\mathbf a_1, \mathbf a_2, \mathbf a_3\in \mathbb R^2$. Denote the other points as $\mathbf d_1, \mathbf d_2, \mathbf e \in \mathbb R^2$.
Let $c(\mathbf a_1)=c(\mathbf d_1)=c(\mathbf d_2)=0$ and $c(\mathbf a_2)=c(\mathbf a_3)=c(\mathbf e)=1$.
The linear classifier from $\mathcal H$ has to be a line that lies on all the classes of 1 class.
However, there is no line that lies on all the points from class $1$, because that line will either 
\begin{enumerate}
    \item contain a point from class $0$ (e.g. the line passing through $\mathbf a_2$ and $\mathbf a_2$ also passes through $\mathbf a_1$), or
    \item not contain all the points from class $1$ (because a line that does not pass through $\mathbf a_2$ and $\mathbf a_3$ will not have $\mathbf a_2$ or $\mathbf a_3$).
\end{enumerate}
Hence, the line needs to pass through all 3 points from class $0$. If such a line does not exist, then the proof is done.
If such a line exists, then there exists some points $\mathbf d_1, \mathbf d_2\in \mathbb R^2$ that are colinear to $\mathbf a_1$ but not to $\mathbf a_2$, $\mathbf a_3$ or $\mathbf e$.
Then, define a new labelling function $c'$ that switches the labels of $\mathbf d_2$ and $\mathbf e$. In other words, $c'(\mathbf a_1)=c'(\mathbf d_1)=c'(\mathbf e)=0$ and $c'(\mathbf a_2)=c'(\mathbf a_3)=c'(\mathbf d_2)=1$.
Then, we face a similar difficulty in classifying the colinear points $\mathbf a_1, \mathbf d_1$ and $\mathbf e$, because no line contains all class $0$ points.
Since there exists a labelling where no hypothesis from $\mathcal H$ can classify these 6 points, then any 6 points that have 3 colinear points are not shattered by $\mathcal H$.

Thus, there does not exist any set of 6 points that can be shattered by $\mathcal H$.
\end{proof}

\subsection{ Proof of Theorem \ref{theorem:vc_strict_equality}: VC dimension in $\mathbb R^n$}
\label{proof:vc_Rn}
 Before proving Theorem \ref{theorem:vc_strict_equality}, we first restate it:
\begin{theorem}
Let $\mathbf x\in \mathbb R^n$ for $n\in\{2,3,...\}$, and the hypothesis class be the strict equality separator $\mathcal H = \{\mathds{1}(\mathbf w^T\mathbf x+b = 0)\} \cup \{\mathds{1}(\mathbf w^T\mathbf x+b\neq 0)\}$.
Then, $\mathrm{VCdim}(\mathcal H) = 2n+1$.
\end{theorem}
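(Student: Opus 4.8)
The plan is to bracket $\mathrm{VCdim}(\mathcal H)$ from both sides, where $\mathcal H=\{\mathbf x\mapsto\mathds{1}(h_{\mathbf w,b}(\mathbf x)=0)\}\cup\{\mathbf x\mapsto\mathds{1}(h_{\mathbf w,b}(\mathbf x)\neq 0)\}$. For the lower bound $\mathrm{VCdim}(\mathcal H)\ge 2n+1$, I would shatter the $2n+1$ points of a moment curve, $\mathbf x_i=(t_i,t_i^2,\dots,t_i^n)$ for distinct $t_1<\cdots<t_{2n+1}$. Two standard facts about this configuration carry the argument: any $n$ of the points are affinely independent, so they determine a \emph{unique} hyperplane (the $n\times(n+1)$ coefficient matrix with rows $(1,t_i,\dots,t_i^n)$ has full row rank, its $n\times n$ minors being nonvanishing generalized Vandermonde determinants); and any hyperplane contains at most $n$ of the points, since restricting an affine form $\mathbf w^{T}\mathbf x+b$ to the curve is a degree-$\le n$ polynomial in $t$ with at most $n$ roots. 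Given any labelling, parity forces one class $P$ to have $k:=|P|\le n$ points. If $k=n$, the unique hyperplane through $P$ meets the curve in exactly $P$, hence separates $P$ from the rest in the equality sense; if $k<n$, the hyperplanes containing the affine span of $P$ form a family of dimension $n-k\ge 1$, and each of the remaining $2n+1-k$ points lies on only a proper subfamily, so a generic member avoids them all. Choosing $\mathds{1}(h_{\mathbf w,b}=0)$ or $\mathds{1}(h_{\mathbf w,b}\neq 0)$ according to which class was labelled $1$ then realizes the labelling, so the set is shattered.

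For the upper bound $\mathrm{VCdim}(\mathcal H)\le 2n+1$, I would avoid any case split on point configurations by reformulating shattering in terms of zero patterns. For $S=\{\mathbf x_1,\dots,\mathbf x_m\}$ put $Z(h)=\{i:h(\mathbf x_i)=0\}$ for affine $h\in A_n$ and $\mathcal Z=\{Z(h):h\in A_n\}$. A labelling $\mathbf y$ is realized by some member of $\mathcal H$ iff $\{i:y_i=1\}\in\mathcal Z$ (via the $\mathds{1}(h=0)$ branch) or $\{i:y_i=0\}\in\mathcal Z$ (via the $\mathds{1}(h\neq 0)$ branch); hence $S$ is shattered iff $\mathcal Z$ together with the family of complements of its members covers $2^{[m]}$, which forces $|\mathcal Z|\ge 2^{m-1}$. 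On the other hand, $\mathcal Z$ is exactly the set of zero patterns of the linear subspace $W=\{(h(\mathbf x_1),\dots,h(\mathbf x_m)):h\in A_n\}\le\mathbb R^{m}$, and $\dim W\le\dim A_n=n+1$. Fixing a linear identification $\mathbb R^{r}\cong W$ with $r=\dim W$, the $i$-th coordinate becomes a linear functional on $\mathbb R^{r}$ vanishing on a central hyperplane, so each zero pattern records which of these $\le m$ hyperplanes pass through a given point of $\mathbb R^{r}$; thus $|\mathcal Z|$ is at most the number of flats of a central arrangement of $\le m$ hyperplanes in $\mathbb R^{r}$, i.e.\ at most $1+\sum_{k=0}^{r-1}\binom{m}{k}$ (at most $\binom{m}{k}$ flats of each codimension $k\le r-1$, plus the single codimension-$r$ flat $\{\mathbf 0\}$ forced by centrality). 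Plugging in $m=2n+2$ and $r\le n+1$ gives $|\mathcal Z|\le 1+\sum_{k=0}^{n}\binom{2n+2}{k}=2^{2n+1}-\tfrac12\binom{2n+2}{n+1}+1<2^{2n+1}=2^{m-1}$, contradicting $|\mathcal Z|\ge 2^{m-1}$. Hence no $2n+2$ points are shattered.

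The crux is the upper bound. The naive move --- exhibiting a single bad labelling such as the balanced $(n+1,n+1)$ split --- only succeeds when no $n+1$ of the points lie on a common hyperplane, and the paper's own $\mathbb R^2$ argument shows that the degenerate many-coplanar configurations otherwise require a fiddly relabelling trick. Recasting ``$S$ is shattered by $\mathcal H$'' as the statement that the zero-pattern family $\mathcal Z$ of an $(n+1)$-dimensional subspace double-covers $2^{[m]}$, and then controlling $|\mathcal Z|$ by the flat count of a central hyperplane arrangement, is what makes every configuration fall into line at once. What remains is routine: the generalized-Vandermonde nondegeneracy and the polynomial root bound for the lower bound, and for the upper bound the surjection from flats of the arrangement onto zero patterns together with the elementary estimate $1+\sum_{k=0}^{n}\binom{2n+2}{k}<2^{2n+1}$ (equivalently $\binom{2n+2}{n+1}>2$, valid for $n\ge 1$).
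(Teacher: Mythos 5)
Your proof is correct, and its second half takes a genuinely different route from the paper's. For the lower bound the two arguments share the same skeleton --- choose $2n+1$ points so that no hyperplane contains more than $n$ of them, place the minority class (of size $k\le n$) on a hyperplane missing the rest, and use the label-flipping branch of $\mathcal H$ to orient the labels --- but where the paper establishes the existence of the avoiding hyperplane for $k<n$ by an induction on $k$ with an informal degrees-of-freedom argument, you get it in one step from the moment curve (the degree-$\le n$ root bound gives ``at most $n$ on any hyperplane,'' Vandermonde nondegeneracy gives affine independence) together with the fact that a finite union of proper subspaces of the $(n+1-k)$-dimensional space of affine forms vanishing on $P$ cannot exhaust it; this is cleaner and fully explicit. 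The real divergence is the upper bound: the paper's Lemma~\ref{lemma:VC_general_lesser} is an exhaustive case analysis over partitions $\{P_1,P_2,Q\}$ of the $2n+2$ points and the ranks of the associated matrices, producing a bad labelling separately in each configuration, whereas you show uniformly that shattering would force the zero-pattern family $\mathcal Z$ of the evaluation subspace $W\le\mathbb R^{2n+2}$ (of dimension $r\le n+1$) to have $|\mathcal Z|\ge 2^{2n+1}$, and then cap $|\mathcal Z|$ by the flat count $1+\sum_{k=0}^{n}\binom{2n+2}{k}=2^{2n+1}-\tfrac12\binom{2n+2}{n+1}+1<2^{2n+1}$ of a central arrangement of $2n+2$ hyperplanes in $\mathbb R^{r}$; the key injectivity of achieved pattern $\mapsto$ flat, via $T=\{i:F_T\subseteq H_i\}$, is sound, and the degenerate coplanar configurations that drive the paper's case split simply make $\mathcal Z$ smaller. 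Your version buys uniformity over all configurations and an explicit quantitative gap; the paper's buys only elementariness, using nothing beyond basic linear algebra.
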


The classical halfspace separator gives a VC dimension of $n+1$. An intuitive way to interpret the factor of $2$ in the $n$-term is that we allow 2 kinds of hypotheses -- one that asserts equality and one that does not. In essence, our hypothesis class allows for label flipping, which gives it the flexibility of expression. Hence, the equality separator hypothesis class is on the order of twice as expressive as the halfspace separator. However, allowing label flipping for the halfspace separator does not increase its expressivity. The asymmetry of the equality separator gives rise to this property that label flipping is indeed useful, doubling its expressivity.

We break  Theorem \ref{theorem:vc_strict_equality} into a 2-sided inequality with lemmas \ref{lemma:VC_general_greater} and \ref{lemma:VC_general_lesser}.

\begin{lemma}
\label{lemma:VC_general_greater}
Let $\mathcal H=\{\mathds{1}(\mathbf w^T\mathbf x+b = 0)\} \cup \{\mathds{1}(\mathbf w^T\mathbf x+b\neq 0)\}$ and $\mathbf x\in \mathbb R^n$. Then, $$\mathrm{VCdim}(\mathcal H) \geq 2n+1.$$
\end{lemma}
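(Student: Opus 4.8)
The plan is to exhibit a set of $2n+1$ points in $\mathbb{R}^n$ that $\mathcal{H}$ shatters, generalizing the $\mathbb{R}^2$ construction (the $5$ roots of unity in Lemma~\ref{VC geq 5 R2 lemma}) and the ``assignment rule'' of Appendix~\ref{binary variables in R2} to $n$ dimensions. Concretely I would place the points on the moment curve, $\mathbf{x}_i = (t_i, t_i^2, \dots, t_i^n)$ for distinct reals $t_1 < \dots < t_{2n+1}$. The only property I need is \emph{strong general position}: every subset of at most $n+1$ of these points is affinely independent (so every $\le n$ of them is affinely independent and no $n+1$ of them lie on a common affine hyperplane). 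This follows from a Vandermonde computation: affine independence of $j \le n+1$ of the points is equivalent to the $j \times (n+1)$ matrix with rows $(1, t_i, t_i^2, \dots, t_i^n)$ having full row rank $j$, and its leftmost $j \times j$ minor is a nonzero Vandermonde determinant. (Generic points, or points on a sphere, would serve equally well; the moment curve just makes it explicit.)

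Next, fix an arbitrary labeling $\ell : \{\mathbf{x}_1, \dots, \mathbf{x}_{2n+1}\} \to \{0,1\}$ with preimages $S_0, S_1$. By pigeonhole one class, say $S_y$, satisfies $|S_y| = k \le n$. I claim there is a hyperplane $H = \{\mathbf{x} : \mathbf{w}^T\mathbf{x} + b = 0\}$ containing every point of $S_y$ and no point of its complement. If $k = n$, the points of $S_y$ are affinely independent, hence span a unique hyperplane $H$; a complement point on $H$ would exhibit $n+1$ points on a hyperplane, contradicting general position. If $k < n$, the points of $S_y$ span an affine flat $V$ of dimension $k-1 \le n-2$; the hyperplanes containing $V$ form a projective space of positive dimension $n-k$, and for each of the finitely many complement points $\mathbf{p}$ (each satisfying $\mathbf{p} \notin V$, since $|S_y \cup \{\mathbf{p}\}| \le n$ is affinely independent) the hyperplanes through $V$ that also contain $\mathbf{p}$ form a proper projective subspace; a finite union of these cannot be everything, so an admissible $H$ exists. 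The degenerate case $k = 0$ (one class empty) is handled by any hyperplane avoiding all $2n+1$ points.

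Finally I convert $H$ into a hypothesis of the correct type: if the minority class $S_y$ carries label $1$, take $\mathbf{x} \mapsto \mathds{1}(\mathbf{w}^T\mathbf{x}+b = 0)$, which outputs $1$ exactly on $S_y$; if it carries label $0$, take $\mathbf{x} \mapsto \mathds{1}(\mathbf{w}^T\mathbf{x}+b \neq 0)$, which outputs $0$ exactly on $S_y$. Either hypothesis lies in $\mathcal{H}$ and realizes $\ell$. Since $\ell$ was arbitrary, the $2n+1$ points are shattered and $\mathrm{VCdim}(\mathcal{H}) \ge 2n+1$. The main obstacle I anticipate is the bookkeeping in the $k < n$ case — turning ``a hyperplane through a low-dimensional flat can dodge finitely many extra points'' into a clean argument via the dimension count on the linear system of hyperplanes through $V$ (or, equivalently, an explicit perturbation argument) — together with isolating exactly which general-position conditions on the $2n+1$ points are used and checking the moment-curve points satisfy them.
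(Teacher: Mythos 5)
Your proof is correct and follows the same high-level strategy as the paper's: choose $2n+1$ points in general position, use pigeonhole to make the minority class have $k\le n$ points, and produce a hyperplane containing exactly those $k$ points, with label flipping absorbing the two possible label assignments. Where you differ is in the execution of the two technical ingredients, and in both places your version is tighter. First, the paper merely posits a point set such that no hyperplane contains more than $n$ of the points, whereas you exhibit one explicitly (the moment curve) and verify the needed property via Vandermonde minors; your strong-general-position condition (every $\le n+1$ points affinely independent) is also exactly what is needed to make the $k=n$ case airtight, since uniqueness of the hyperplane through $n$ points requires their affine independence, which the paper's stated condition does not literally guarantee. Second, for $k<n$ the paper runs a downward induction from $k=n-1$ with a counting argument ($|H|=\infty$ versus $|H'|$ finite) and an informal ``added free variable'' step, while you replace this with a single dimension count: hyperplanes through the $(k-1)$-dimensional flat $V=\mathrm{aff}(S_y)$ form a projective space of dimension $n-k\ge 1$, each forbidden point cuts out a proper projective subspace, and a finite union of proper subspaces cannot cover a positive-dimensional real projective space. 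This buys a cleaner, non-inductive argument that isolates precisely which general-position facts are used; the paper's induction is more elementary in flavor but leaves more to the reader.
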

\begin{proof}
Consider a set of $2n+1$ distinct points such that for any hyperplane in $\mathbb R^n$, only a maximum of $n$ points lie on the hyperplane.
Then, for any labelling of the $2n+1$ points, let the positive class be the class that has the fewest data points that belong to it.
(We can do this because our hypothesis class allows for label flipping.)
Let the number of data points belonging to this positive class be $k$.
Notice that $k\leq\lfloor \frac{2n+1}{2} \rfloor=n$.
Denote the non-empty set of all hyperplanes that contain all of these at most $k$ points as $H$.

If $k=n$, then $|H|=1$ (ie. the hyperplane is unique).
Note that this unique hyperplane does not contain any other points, by construction of the data points, so the hyperplane is a perfect classifier on this labelling.

Otherwise, if $k<n$, then $|H|=\infty$ (ie. there are infinitely many hyperplanes).
With induction, we prove that there is a hyperplane that passes through $k$ points that does not pass through the other $(2n+1)-k$ points.

\paragraph{Base case: $k=n-1$.} 
For any set of $k$ points,
let $H$ be the set of all hyperplanes that pass through these $k$ points, and let $H'\subseteq H$ be the set of all hyperplanes that pass through the $k$ points but also pass through at least 1 of the other $(2n+1)-k$ points.
Then,
$|H|=\infty$ while $|H'|=(2n+1)-k=n+2$ because the other $n+2$ points (ie. the points from the negative class) restrict us from using those hyperplanes for classification.
Since $|H|>|H'|$ and $H'\subseteq H$, then $H'$ is a proper subset of $H$, and there exists a hyperplane in $H$ that is not in $H'$.
By definition, this hyperplane contains all $k$ points and none of the other $(2n+1)-k$ points.

\paragraph{Inductive step:
Assume that there exists a hyperplane that passes through any $k+1$ points and does not pass through the other $(2n+1)-(k+1)$ points, for $0\leq k<n-1$. Then, we prove that there exists a hyperplane that passes through any $k$ points.}

Consider any set of $k+1$ points (and denote this as $P$) and consider a hyperplane $h$ that passes through the $k+1$ points but not the other $2n-k$ points.
Let the set of the other $2n-k$ points be $Q$ (ie. $P$ and $Q$ form a partition of the set of all $2n+1$ points).
Let $\mathbf p \in P$ be any point in $P$ (since $P$ is non-empty).
Then, 
there is an added free variable (ie. added degree of freedom) to define a hyperplane that passes through $P\backslash \{\mathbf p\}$.
To avoid passing through points in $Q\cup \{\mathbf p\}$, this free variable is unable to take on a finite number of values to avoid the $(2n+1)-k$ points.
Since the variable can take on an infinite number of values except a finite number of values, there is a value that this variable can take on.
Using this value, we can find a hyperplane that passes through all $k$ points in $P$ while not passing through the $(2n+1)-k$ points in $Q\cup  \{\mathbf p\}$.

By induction, we have shown that for all $k\in \{0, 1, ..., n-1\}$, that there exists a hyperplane that passes through $k$ points but not the other $(2n+1)-k$ points.
Hence, there always exists a hyperplane to classify the $k$ points from the positive class such that none of the $(2n+1)-k$ points from the negative class are on this hyperplane.

Since a perfect classifier from $\mathcal H$ always exists for the aforementioned dataset for any labelling, then there exists a set of $2n+1$ points in $\mathbb R^n$ that $\mathcal H$ can shatter.
Thus, $\mathrm{VCdim}(\mathcal H)\geq 2n+1$.
\end{proof}

\vspace{1mm}

\begin{lemma}
\label{lemma:VC_general_lesser}
Let $\mathcal H=\{\mathds{1}(\mathbf w^T\mathbf x+b = 0)\} \cup \{\mathds{1}(\mathbf w^T\mathbf x+b\neq 0)\}$ and $\mathbf x\in \mathbb R^n$. Then, $$\mathrm{VCdim}(\mathcal H) \leq 2n+1.$$
\end{lemma}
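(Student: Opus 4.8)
The plan is to prove the (equivalent) statement that no set of $2n+2$ distinct points in $\mathbb R^n$ is shattered by $\mathcal H$, which immediately yields $\mathrm{VCdim}(\mathcal H)\le 2n+1$. Fix $S\subseteq\mathbb R^n$ with $|S|=2n+2$. Each hypothesis in $\mathcal H$ labels a point positively either by $\mathds 1(\mathbf w^T\mathbf x+b=0)$ or by $\mathds 1(\mathbf w^T\mathbf x+b\neq 0)$, so the subset of $S$ it labels positive is either $S\cap P$ or $S\setminus P$, where $P=\{\mathbf x:\mathbf w^T\mathbf x+b=0\}$; for $\mathbf w\neq\mathbf 0$ this is an affine hyperplane, and the degenerate cases $\mathbf w=\mathbf 0$ only ever produce $\emptyset$ or $S$, which are already obtained from a hyperplane disjoint from $S$. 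Hence the family of positive-sets realizable on $S$ is
\[
\mathcal F=\{\,S\cap P:P\text{ affine hyperplane}\,\}\cup\{\,S\setminus P:P\text{ affine hyperplane}\,\},
\]
and $S$ is shattered iff $|\mathcal F|=2^{2n+2}$. So it suffices to bound $|\mathcal F|$ strictly below $2^{2n+2}$.

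The key step is to show that the set system $\mathcal A=\{\text{affine hyperplanes of }\mathbb R^n\}$, viewed as a family of subsets of $\mathbb R^n$, has VC dimension at most $n$. Suppose some $(n+1)$-point set $S'$ were shattered by $\mathcal A$. Realizing the subset $S'$ itself forces $S'\subseteq P$ for a hyperplane $P$, so $\mathrm{aff}(S')$ has dimension at most $n-1$. On the other hand, realizing $S'\setminus\{\mathbf p\}$ for each $\mathbf p\in S'$ requires a hyperplane containing $S'\setminus\{\mathbf p\}$ but not $\mathbf p$; since the intersection of all hyperplanes containing a flat $F$ equals $F$, such a hyperplane exists iff $\mathbf p\notin\mathrm{aff}(S'\setminus\{\mathbf p\})$. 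Thus $S'$ must be affinely independent, hence $\mathrm{aff}(S')=\mathbb R^n$, contradicting $\dim\mathrm{aff}(S')\le n-1$. Since complementation does not change VC dimension, $\{S\setminus P\}$ is also induced by a system of VC dimension $\le n$, so by the Sauer--Shelah lemma each of the two families in $\mathcal F$ restricts to at most $\sum_{i=0}^{n}\binom{2n+2}{i}$ subsets of $S$; therefore $|\mathcal F|\le 2\sum_{i=0}^{n}\binom{2n+2}{i}=2^{2n+2}-\binom{2n+2}{n+1}<2^{2n+2}$, using the symmetry of binomial coefficients. This shows $S$ cannot be shattered, hence $\mathrm{VCdim}(\mathcal H)\le 2n+1$.

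I expect the main obstacle to be the VC-dimension bound on the affine-hyperplane set system — concretely, making the affine-independence argument airtight (the ``intersection of all hyperplanes through $F$ is $F$'' fact, and cleanly handling lower-dimensional affine hulls) — after which the Sauer--Shelah count and the binomial identity are routine. A more elementary alternative, closer to the two-dimensional argument of Lemma~\ref{VC leq 5 R2 lemma}, is to split into the case where every hyperplane meets $S$ in at most $n$ points (then any balanced $0/1$ labeling is unrealizable, since each colour class has $n+1$ points and no $n+1$ points of $S$ lie on a common hyperplane) and the case where some hyperplane contains at least $n+1$ points of $S$ (treated by a relabeling argument or by projecting onto that hyperplane and inducting on $n$); the second case is where the combinatorics gets delicate, so I would prefer the Sauer--Shelah route.
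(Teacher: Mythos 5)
Your proposal is correct, and it takes a genuinely different route from the paper. The paper proves this lemma by an exhaustive constructive case analysis: it partitions the $2n+2$ points into two $n$-sets plus two leftovers, branches on the ranks of the matrices formed by each part and on which points lie in which column spaces, and in every branch exhibits an explicit labeling that no single hyperplane (used either as the positive or the negative class) can realize. Your argument instead never constructs a bad labeling: you observe that every realizable dichotomy on $S$ is of the form $S\cap P$ or $S\setminus P$ for an affine hyperplane $P$, prove that the set system of affine hyperplanes has VC dimension at most $n$ (the affine-independence versus common-hyperplane contradiction, which is airtight as you state it --- the only facts needed are that a hyperplane containing a set contains its affine hull, and that a point outside a proper flat can be separated from it by a hyperplane through the flat), and then apply Sauer--Shelah to each of the two families, with the binomial symmetry giving $2\sum_{i=0}^{n}\binom{2n+2}{i}=2^{2n+2}-\binom{2n+2}{n+1}<2^{2n+2}$. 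Your handling of the degenerate $\mathbf w=\mathbf 0$ hypotheses is also correct, since they only ever produce $\emptyset$ or $S$, which a hyperplane disjoint from $S$ already realizes. What each approach buys: yours is far shorter, easier to verify, and is really an instance of the general principle that a union of two classes of bounded VC dimension has controlled VC dimension (the same principle the paper invokes from the literature for Corollary~\ref{theorem:vc_eps_err_sep}); the paper's argument is elementary and self-contained (no Sauer--Shelah) and, being constructive, hands you a concrete unrealizable labeling for any configuration of $2n+2$ points, which the counting argument does not.
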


The proof only requires basic linear algebra and the idea is as follows.
We desire to find a labeling on $2n+2$ points such that there is no strict equality separator for that labeling.
For some labeling, we consider (translated) subspaces of the 2 classes defined by the points from that class and relabel points until we find a labeling that satisfies the above condition.
In the arguments, we often consider 2 classes with $n$ points each, and leave the remaining 2 points to help us produce the desired labeling by labeling the 2 points with the corresponding labels and sometimes swapping the labeling of points already with labels.
 We include the high level overview of the proof in Figure \ref{fig:vcdim_proof}.

\begin{figure}
    \centering
    \includegraphics[width=\linewidth]{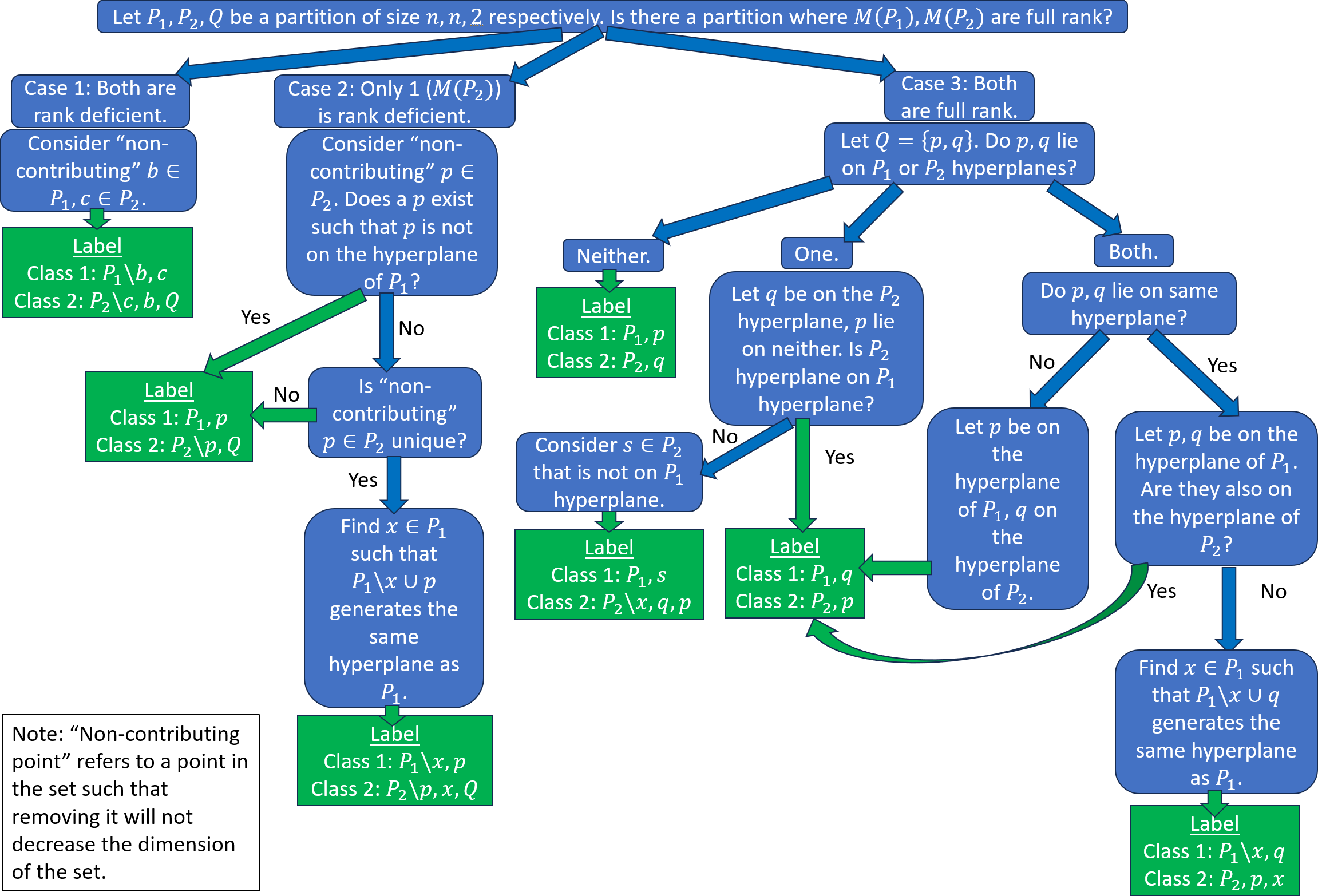}
    \caption{Proof overview flow chart of Lemma \ref{lemma:VC_general_lesser}.}
    \label{fig:vcdim_proof}
\end{figure}

\begin{proof}

Consider any set of distinct $2n+2$ points in $\mathbb R^n$, and denote this set as $X$.
We aim to show that there exists a labelling on these points such that no $h\in \mathcal H$ can classify the points perfectly (i.e. $\mathcal H$ cannot shatter any set of $2n+2$ points).
We will exhaustively consider cases and show that in all cases, there exists a labelling such that no classifier in $\mathcal H$ can perfectly classify all the points on such a labelling.

Note that if $X$ lies on a hyperplane, then there exists a labelling half of $X$ as positive and the other half as negative that would fail to produce a perfect classifier. Hence, we consider the fact that $X$ does not lie on one hyperplane.

Let $\{P_1, P_2, Q\}$ be a partition of $X$, where $|P_1|=|P_2|=n$ and $|Q|=2$.
Let $M:A\to \mathbb R^{n\times |A|}$
be the function that outputs a matrix where each column is an element of $A$.
Formally, for non-empty, finite set $A:=\{\mathbf a_1,...,\mathbf a_{|A|}\}\subseteq X$, define $M$ as a function that generates the matrix $M(A) = [\mathbf a_1,...,\mathbf a_{|A|}]$ from $A$.

\paragraph{Case 1}
If there exists a partition of $X$ such that $M(P_1)$ and $M(P_2)$ are not full rank, then consider points $\mathbf b\in P_1$ and $\mathbf c\in P_2$ such that $\mathrm{rank}(M(P_1\backslash \{\mathbf b\})) = \mathrm{rank}(M(P_1))$ and $\mathrm{rank}(M(P_2\backslash \{\mathbf c\})) = \mathrm{rank}(M(P_2))$.
Such points exist based on a counting argument, given that $M(P_1)$ and $M(P_2)$ are not full rank.
Then, consider the labelling such that $P_1\backslash \{\mathbf b\}, \mathbf c$ belong to class one and $P_2\backslash \{\mathbf c\}, \mathbf b, Q$ belong to class two.
A perfect classifier from $\mathcal H$ does not exist on this labelling: using a hyperplane generated by $P_1\backslash \{\mathbf b\}$%
\footnote{A hyperplane such that all points in $P_1\backslash\{\mathbf b\}$ lie on it.}
to label class one as the positive class will classify $\mathbf b$ wrongly because $\mathbf b\in \mathrm{col}(M(P_1\backslash\{\mathbf b\}))$ (by assumption) will be classified as negative although it is from the positive class.
Likewise, using a hyperplane generated by $P_2\backslash \{\mathbf c\}$ to label class two as the positive class will classify $\mathbf c$ wrongly because $\mathbf c\in \mathrm{col}(M(P_2\backslash\{\mathbf c\}))$.

Note that using hyperplanes other than the ones described will also fail to produce perfect classifiers, because points in $P_1$ (when class one is the positive class) or $P_2$ (when class two is the positive class) will not be classified correctly. We omit this comment in further analyses.

\paragraph{Case 2}
Otherwise, consider the case where there exists a partition of $X$ such that either $M(P_1)$ or $M(P_2)$ is full rank.
Without loss of generality, let
$M(P_2)$ be rank deficient, while $M(P_1)$ is full rank.
By the same counting argument, there exists non-empty set $K\subseteq P_2$ such that $\mathrm{rank}(M(P_2\backslash K)) = \mathrm{rank}(M(P_2))$.

If there is a point $\mathbf p\in K$ for any $K$ such that $\mathbf p\not \in \mathrm{col}(M(P_1))$, then consider the labelling such that $P_1, p$ belong to class one and $P_2\backslash\{\mathbf p\}, Q$ belong to class two.
A perfect classifier from $\mathcal H$ does not exist on this labelling: using the unique hyperplane generated by $P_1$ to label class one as the positive class will classify $\mathbf p$ wrongly, because $\mathbf p\not\in \mathrm{col}(M(P_1))$.
On the other hand, using a hyperplane generated by $P_2\backslash \{\mathbf p\}$ to label class two as the positive class will classify $\mathbf p$ wrongly because $\mathbf p\in \mathrm{col}(M(P_2\backslash\{\mathbf p\}))$.

Thus, consider that there is no such $\mathbf p$ that exists (ie. for any $K$ that fulfills the constraints, $\mathbf p\in K$ will belong to the column space of $M(P_1)$).
Then, pick $K$ such that $K$ is the largest possible set that fulfills the constraints (ie. $\mathrm{rank}(M(P_2\backslash K)) = \mathrm{rank}(M(P_2))$).
Note that $|K|\geq 1$.

If $|K| > 1$, then consider 2 distinct points $\mathbf p, \mathbf q\in K$ and the labelling such that $P_1, \mathbf p, Q$ belong to class one and $P_2 \backslash\{\mathbf p\}$  belong to class two.
A perfect classifier from $\mathcal H$ does not exist on this labelling: using the unique hyperplane generated by $P_1$ to label class one as the positive class will classify $\mathbf q$ wrongly, because $\mathbf q\in \mathrm{col}(M(P_1))$.
On the other hand, using a hyperplane generated by $P_2\backslash \{\mathbf p\}$ to label class two as the positive class will classify $\mathbf p$ wrongly because $\mathbf p\in \mathrm{col}(M(P_2\backslash\{\mathbf p\}))$.

Otherwise, if $|K|=1$, then consider $\mathbf x\in P_1$ such that $P_1\backslash\{\mathbf x\}\cup K$ generates a unique hyperplane which is the same hyperplane as the unique hyperplane generated by $P_1$.
Such a point exists because
$\mathbf p \in K$ is a linear combination of points in $P_1$.
Hence, such a point $\mathbf x$ exists.%
\footnote{We repeat a similar argument in future analyses, and omit this comment in the future for brevity.}
Then, consider the labelling where $P_1\backslash\{\mathbf x\}, K$ belong to class one and $P_2\backslash K, \mathbf x, Q$ belong to class two.
A perfect classifier from $\mathcal H$ does not exist on this labelling: using the unique hyperplane generated by $P_1\backslash\{\mathbf x\}\cup K$ to label class one as the positive class will classify $\mathbf x$ wrongly, because $\mathbf x\in \mathrm{col}(M(P_1))$.
On the other hand, using a hyperplane generated by $P_2\backslash K$ to label class two as the positive class will classify $\mathbf p\in K$ wrongly because $\mathbf p\in \mathrm{col}(M(P_2\backslash\{\mathbf p\}))$.

\paragraph{Case 3}
Lastly, we consider the case such that there is no such partition such that both $M(P_1)$ and $M(P_2)$ are not rank deficient.
In other words, the 2 hyperplanes generated by $P_1$ and $P_2$ respectively are both unique.
Consider $Q:=\{\mathbf p, \mathbf q\}$.

If both $\mathbf p$ and $\mathbf q$ do not lie in the column space of either $M(P_1)$ or $M(P_2)$, then consider the labelling where $P_1,\mathbf  p$ belong to class one and $P_2, \mathbf q$ belong to class two.
A perfect classifier from $\mathcal H$ does not exist on this labelling: using the unique hyperplane generated by $P_1$ to label class one as the positive class will classify $\mathbf q$ wrongly, because $\mathbf p\not\in \mathrm{col}(M(P_1))$.
On the other hand, using the unique hyperplane generated by $P_2$ to label class two as the positive class will classify $\mathbf q$ wrongly because $\mathbf q\not\in \mathrm{col}(M(P_2))$.

Otherwise, there exists a point in $Q$ that lies in the column space of either $M(P_1)$ or $M(P_2)$.
Consider the case where one point does not lie in the column space of either $M(P_1)$ or $M(P_2)$.
Without loss of generality, let $\mathbf p$ be the point that does not lie in the column space of either $M(P_1)$ or $M(P_2)$, and let $\mathbf q\in \mathrm{col}(M(P_2))$.
If $\mathrm{col}(M(P_2)) \subseteq \mathrm{col}(M(P_1))$, then consider the labelling where $P_1, \mathbf q$ belong to class one and $P_2, \mathbf p$ belong to class two.
A perfect classifier from $\mathcal H$ does not exist on this labelling: using the unique hyperplane generated by $P_1$ to label class one as the positive class will classify some point $\mathbf x\in P_2$ wrongly, because $\mathbf x\in \mathrm{col}(M(P_2))\subseteq \mathrm{col}(M(P_1))$ is classified as the positive class although it belongs to the negative class.
Meanwhile, using the unique hyperplane generated by $P_2$ to label class two as the positive class will classify $\mathbf p$ wrongly because $\mathbf p\not\in \mathrm{col}(M(P_2))$.\\
Else, $\mathrm{col}(M(P_2)) \not\subseteq \mathrm{col}(M(P_1))$, so there exists a point $\mathbf s\in P_2$ such that $\mathbf s\not\in \mathrm{col}(M(P_1))$.
Then, $M(P_2\backslash\{\mathbf s\}\cup\{\mathbf q\})$ is full rank because $\mathbf q$ is a linear combination of all points in $P_2$.
 Hence, the unique hyperplane generated by $P_2\backslash\{\mathbf s\}\cup\{\mathbf q\}$ is the same hyperplane as the unique hyperplane generated by $P_2$.
Then consider the labelling where $P_1, \mathbf s$ belong to class one and $P_2\backslash\{s\}, \mathbf q, \mathbf p$ belong to class two.
A perfect classifier from $\mathcal H$ does not exist on this labelling: using the unique hyperplane generated by $P_1$ to label class one as the positive class will classify $\mathbf s$ wrongly, because $\mathbf s\not\in \mathrm{col}(M(P_1))$.
On the other hand, using the unique hyperplane generated by $P_2\backslash\{\mathbf s\}\cup\{\mathbf q\}$ to label class two as the positive class will classify $\mathbf p$ wrongly because $\mathbf p\not\in \mathrm{col}(M(P_2))$.

Hence, consider the case where both $\mathbf p$ and $\mathbf q$ lie on the column space of $M(P_1)$ or $M(P_2)$.
If they lie on different column spaces, without loss of generality, let $\mathbf p\in \mathrm{col}(M(P_1))$ and $\mathbf q\in \mathrm{col}(M(P_2))$.
Note that $\mathbf p\not\in \mathrm{col}(M(P_2))$ and $\mathbf q\not\in \mathrm{col}(M(P_1))$ by our assumption.
Then consider the labelling where $P_1, \mathbf q$ belong to class one and $P_2, \mathbf p$ belong to class two.
A perfect classifier from $\mathcal H$ does not exist on this labelling: using the unique hyperplane generated by $P_1$ to label class one as the positive class will classify $\mathbf q$ wrongly, because $\mathbf q\not\in \mathrm{col}(M(P_1))$.
On the other hand, using the unique hyperplane generated by $P_2$ to label class two as the positive class will classify $\mathbf p$ wrongly because $\mathbf p\not\in \mathrm{col}(M(P_2))$.

Thus, we consider the case where $\mathbf p$ and $\mathbf q$ lie on the same column space.
Without loss of generality, let $\mathbf p, \mathbf q \in \mathrm{col}(M(P_1))$.
If there is a point that does not lie in the column space of $M(P_2)$, then without loss of generality, let this point be $\mathbf p$.
Consider a point $\mathbf x\in P_1$  the unique hyperplane generated by $P_1\backslash\{\mathbf x\}\cup\{\mathbf q\}$ is the same hyperplane as the unique hyperplane generated by $P_1$, because $\mathbf q$ is a linear combination of points in $P_1$.
Then consider the labelling where $P_1\backslash\{\mathbf x\}, \mathbf q$ belong to class one and $P_2, \mathbf p, \mathbf x$ belong to class two.
A perfect classifier from $\mathcal H$ does not exist on this labelling: using the unique hyperplane generated by $P_1$ to label class one as the positive class will classify $\mathbf x$ wrongly, because $\mathbf x\in \mathrm{col}(M(P_1))$ will be labelled as positive although it is from the negative class.
On the other hand, using the unique hyperplane generated by $P_2$ to label class two as the positive class will classify $\mathbf p$ wrongly because $p\not\in \mathrm{col}(M(P_2))$.

Otherwise, consider the case where both $\mathbf p,\mathbf q\in \mathrm{col}(M(P_1))$ and $\mathbf p,\mathbf q\in \mathrm{col}(M(P_2))$.
Then consider the labelling where $P_1, \mathbf q$ belong to class one and $P_2, \mathbf p$ belong to class two.
A perfect classifier from $\mathcal H$ does not exist on this labelling: using the unique hyperplane generated by $P_1$ to label class one as the positive class will classify $\mathbf p$ wrongly, because $\mathbf p\in \mathrm{col}(M(P_1))$ will be labelled as positive although it is from the negative class.
On the other hand, using the unique hyperplane generated by $P_2$ to label class two as the positive class will classify $\mathbf q$ wrongly because $\mathbf q\in \mathrm{col}(M(P_2))$ will be labelled as positive although it is from the negative class.

We have exhaustively considered cases and showed that, for any set of $2n+2$ distinct points, $\mathcal H$ is unable to shatter it.
Thus, $\mathrm{VCdim}(\mathcal H)\leq 2n+1$.
\end{proof}

\subsection{ Proof of Corollary \ref{theorem:vc_eps_err_sep}: VC dimension of $\epsilon$-error separator}

Before proving Corollary \ref{theorem:vc_eps_err_sep}, we first restate it:
\begin{corollary}
For $\epsilon$-error separators $\mathcal{H}_{\epsilon}$ in Def. \ref{def:epsilon_error_separator},
$2n+1\leq \mathrm{VCdim}(\mathcal H_\epsilon) \leq 2n+3 
$.
\end{corollary}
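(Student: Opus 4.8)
The plan is to prove the two inequalities separately. For the lower bound $\mathrm{VCdim}(\mathcal H_\epsilon)\ge 2n+1$, I would show that any set shattered by the \emph{strict} equality separator $\mathcal H$ of Definition~\ref{def:strict_equality_separator} is also shattered by $\mathcal H_\epsilon$, so the bound follows from Theorem~\ref{theorem:vc_strict_equality}. Take the $2n+1$ points built in Lemma~\ref{lemma:VC_general_greater} and fix any labelling; there is an $h\in\mathcal H$ realizing it. If $h=\mathds{1}(h_{\mathbf w,b}(\mathbf x)=0)$, then the points of the outer class all satisfy $|h_{\mathbf w,b}(\mathbf x)|\ge\delta$ for some $\delta>0$ (a minimum over finitely many nonzero values), while the inner-class points satisfy $h_{\mathbf w,b}(\mathbf x)=0$. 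Replacing $h_{\mathbf w,b}$ by its rescaling $c\,h_{\mathbf w,b}=h_{c\mathbf w,cb}\in A_n$ with $c>\epsilon/\delta$ keeps the inner-class points in $S_\epsilon$ (still mapped to $0$) and pushes the outer-class points outside $S_\epsilon$, so $\mathds{1}_{S_\epsilon}(h_{c\mathbf w,cb}(\mathbf x))\in\mathcal H_\epsilon$ realizes the same labelling; the label-flipped member $\mathds{1}(h_{\mathbf w,b}(\mathbf x)\neq 0)$ of $\mathcal H$ is handled identically using $\mathds{1}_{\mathbb R\backslash S_\epsilon}$. Hence $\mathrm{VCdim}(\mathcal H_\epsilon)\ge\mathrm{VCdim}(\mathcal H)=2n+1$.

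For the upper bound I would exploit that $\mathcal H_\epsilon$ is by construction a union of two hypothesis classes, $\mathcal H_\epsilon=\mathcal H_\epsilon^{\mathrm{in}}\cup\mathcal H_\epsilon^{\mathrm{out}}$, where $\mathcal H_\epsilon^{\mathrm{in}}=\{\mathbf x\mapsto\mathds{1}_{S_\epsilon}(h_{\mathbf w,b}(\mathbf x)):h_{\mathbf w,b}\in A_n\}$ is exactly the non-label-flipping ``slab'' classifier (the one-vs-set / hyper-hill classifier of \citet{nonmonotonic_activations_mlp_thesis}), and $\mathcal H_\epsilon^{\mathrm{out}}$ consists of the pointwise complements of its members and therefore has the same VC dimension. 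I would then invoke $\mathrm{VCdim}(\mathcal H_\epsilon^{\mathrm{in}})=n+1$ (the same as the halfspace separator, as asserted for hyper-hill classification) and combine it with the standard fact that the union of two hypothesis classes of VC dimensions $d_1,d_2$ has VC dimension at most $d_1+d_2+1$ (via $\Pi_{\mathcal A\cup\mathcal B}(m)\le\Pi_{\mathcal A}(m)+\Pi_{\mathcal B}(m)$ and Sauer--Shelah). This gives $\mathrm{VCdim}(\mathcal H_\epsilon)\le(n+1)+(n+1)+1=2n+3$.

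The step I expect to be the main obstacle is pinning down $\mathrm{VCdim}(\mathcal H_\epsilon^{\mathrm{in}})$ with the precise constant the argument needs: a self-contained proof (rather than a citation) would require a linear-algebra/counting argument in the spirit of the halfspace VC-dimension proof, now keeping track of the \emph{two} parallel offsets $h_{\mathbf w,b}(\mathbf x)=\pm\epsilon$ and of the degenerate case $\mathbf w=\mathbf 0$ (where the region is $\mathbb R^n$ or $\emptyset$). I would also not expect either bound to be tight: the two-unit gap between $2n+1$ and $2n+3$ is exactly the slack produced by re-using the strict-separator shattering set verbatim for the lower bound and by the generic union-of-classes bound for the upper bound, so narrowing it would require a finer combinatorial description of which labellings of a point set can be realized by a slab or by the complement of a slab.
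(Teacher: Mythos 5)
Your proposal is correct and follows essentially the same route as the paper: the lower bound comes from reducing to the strict equality separator of Theorem~\ref{theorem:vc_strict_equality} by making the margin effectively arbitrarily small (your rescaling $c>\epsilon/\delta$ is just a more explicit version of the paper's one-line remark), and the upper bound comes from writing $\mathcal H_\epsilon$ as a union of the two $n{+}1$-dimensional slab/complement classes and applying the standard $d_1+d_2+1$ union bound via Sauer--Shelah, exactly as the paper does by citing the hyper-hill VC dimension and Exercise~6.11 of \citet{understanding_ml_book}. Your closing remarks about the two-unit slack also match the paper's own observation that the true value is conjectured to be $2n+2$.
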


The lower bound of $2n+1$ follows from Theorem \ref{theorem:vc_strict_equality}, since the margin of the $\epsilon$-error separator can be made arbitrarily small.

The upper bound is from the union property of hypothesis classes found in Exercise 6.11 in Chapter 6.8 of \citet{understanding_ml_book}.
The VC dimension of hyper-hill (equivalent to hyper-ridge) classifiers are $n+1$ \citep{nonmonotonic_activations_mlp_thesis}, so we have an upper bound of $(n+1) + (n+1) + 1 = 2n+3$.

Work can be done to show that the VC dimension in $\mathbb R^2$ is $2n+2$.
We conjecture that this is true for all $n>2$ as well, but leave this to future work.
Note that for $n=1$, the VC dimension is $2n+1=3$ -- the reduction in expressivity seems to arise from the distance of a given datum to be a given point rather than a line, plane or hyperplane (which extends to infinity, unlike a point).

\newpage
\section{Proof for separating hyperplane in pseudo likelihood ratio test}
\label{proof_pseudo_lrt}
We provide the proof in Section \ref{section:halfspace_connnection} that, when we use 2 parallel equality separators ($\mathbf w_1 =\mathbf  w_2$) and equal class weighting ($t=1$, ie. a false positive is as bad as a false negative) for binary classification, our pseudo likelihood ratio test
\[\mathds{1}\left (\frac{\epsilon_2}{\epsilon_1}\geq t \right)
=\mathds{1}\left(\frac{1}{||\mathbf w_2||}|\mathbf w_2^T \mathbf {x'} + b_2|-\frac{t}{||\mathbf w_1||}|\mathbf w_1^T \mathbf {x'} + b_1|\geq 0\right)
\]
decomposes to
\[\mathds{1}((b_2-b_1)(2\mathbf w_1^T \mathbf x+ b_1+b_2)\geq 0)\]
which corresponds to the separating hyperplane
\[2\mathbf w_1^T \mathbf x+ b_1+b_2.\]
\begin{proof}
Instead of dealing with absolute values, we take the square on both sides (which does not change the inequality because both sides are positive).
With parallel hyperplane and equal class weighting, we are able to cancel out the square terms to obtain a separating hyperplane.
The exact steps are detailed as follows:
\begin{align*}
\mathds{1}\left (\frac{\epsilon_2}{\epsilon_1}\geq t \right)
&= \mathds{1}\left (\frac{\epsilon_2^2}{\epsilon_1^2}\geq t^2 \right) \\
&= \mathds{1}\left ({\epsilon_2^2}\geq t^2\epsilon_1^2 \right) \\
&= \mathds{1}\left ((\mathbf{w}_2^T \mathbf x +b_2)^2\geq t^2(\mathbf{w}_1^T \mathbf x +b_1)^2\right) \\
&= \mathds{1}\left ((\mathbf{w}_2^T \mathbf x +b_2)^2 -t^2(\mathbf{w}_1^T \mathbf x +b_1)^2  \geq 0\right) \\
&= \mathds{1}\left ((\mathbf{w}_2^T \mathbf x +b_2)^2 -(\mathbf{w}_1^T \mathbf x +b_1)^2\geq 0 \right) & t=1  \\
&= \mathds{1}\left ((\mathbf{w}_1^T \mathbf x +b_2)^2 -(\mathbf{w}_1^T \mathbf x +b_1)^2\geq 0 \right) & \mathbf w_1=\mathbf w_2 \\
&= \mathds{1}\left (2\mathbf{w}_1^T \mathbf x (b_2-b_1) +(b_2^2 - b_1^2)\geq 0 \right) &\text{expand the square} \\
&= \mathds{1}\left (
(b_2 - b_1)(
2\mathbf{w}_1^T \mathbf x  +(b_2 + b_1))\geq 0 \right) \\
&= \mathds{1}\left (
(b_2 - b_1)(
2\mathbf{w}_1^T \mathbf x  +b_1 + b_2)\geq 0 \right)
\end{align*}

If $b_1\neq b_2$, then we have\begin{align*}
\mathds{1}\left (\frac{\epsilon_2}{\epsilon_1}\geq t \right)
&= \mathds{1}\left (
(b_2 - b_1)(
2\mathbf{w}_1^T \mathbf x  +b_1 + b_2)\geq 0 \right)\\
&= \mathds{1}\left (
2\mathbf{w}_1^T \mathbf x  +b_1 + b_2\geq 0 \right).
\end{align*}
\end{proof}

We note that in the case of \texttt{XOR}, $b_1=b_2$ and the hyperplane to classify each class is the same.
Hence, there is no separating hyperplane between the 2 classes.
This is another way we can view the linear non-separability of \texttt{XOR}.
\newpage
\section{Closing Numbers}
\label{appendix:closing_numbers}

Formally, the closing number of a hypothesis class is defined as
\begin{equation*}
    CN(\mathcal H):= \min_{k\in \mathbb N} \{k : \; \exists h_1,...,h_k\in \mathcal H \;, \;  0 < \mathrm{Vol}\left (\bigcap_{i=1}^k h_i \right )<\infty\}
\end{equation*}
where $\mathrm{Vol}$ is the standard Lebesgue measure and $\bigcap_{i=1}^k h_i := \{x\in \mathcal X: \;\; \forall i\in [k], \; h_i(x) = 1\}$ is the set of all points that is labeled positive by all hypotheses.
Note that although possible in reality, we disallow zero-volumed decision regions due to degenerate cases in analysis (e.g. the intersection between two non-intersecting parallel halfspaces is not particularly interesting for classifying data).

Closing numbers are a dimension-like number, similar to VC dimension.
Instead of quantifying the expressivity of a hypothesis class, it quantifies the capacity to reduce open space risk induced by a hypothesis class.
While classical measures refer to the open space risk of a particular hypothesis (similar to the empirical risk of a particular hypothesis), closing numbers refer to the hypothesis class (like VC dimension).
Such a measure of a hypothesis class is useful in choosing which hypothesis classes to use in practice -- the generalization on seen examples (i.e. normal data and seen anomalies) is captured in the VC dimension, while ability to reject unseen examples (i.e. unseen anomalies) is quantified through closing numbers.\footnote{Closing numbers describe the finite-ness of the enclosed decision region, there is no explicit relation on minimizing the volume of this enclosed space of the chosen hypothesis.}

We first note that whenever we refer to a closed decision region in this section, we refer to a region with positive (i.e. non-zero) volume unless otherwise stated.

To prove that the closing number is $k$, it is helpful to show the existence of a closed decision region using the intersection of $k$ hypotheses and the impossibility using fewer (i.e. intersection of $k-1$ hypotheses).

We proceed to prove Theorem \ref{thm:closing numbers}.
To recall, we restate Theorem \ref{thm:closing numbers}:
\begin{theorem}
    Closing number of RBF, equality and halfspace separators is 1, $n$ and $n+1$ respectively in $\mathbb R^n$.
    Closed decision regions are formed with the intersection of the corresponding number of classifiers everywhere for RBF separators and almost everywhere for equality separators, but normal vectors need to form an $n$-simplex volume for halfspace separators.
\end{theorem}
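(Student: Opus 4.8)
The plan is to reduce each closing-number claim to a statement about the positive-volume region a single classifier of that type carves out, and then prove matching upper and lower bounds. The positive region of an RBF separator is a Euclidean ball $\{\mathbf{x}:\|\mathbf{x}-\mathbf{c}\|\le r\}$; of an $\epsilon$-error (equivalently bump) equality separator, a slab $\{\mathbf{x}:|\mathbf{w}^T\mathbf{x}+b|\le\epsilon\}$ between two parallel hyperplanes; and of a halfspace separator, a halfspace $\{\mathbf{x}:\mathbf{a}^T\mathbf{x}\le b\}$. (The strict equality separator of Definition~\ref{def:strict_equality_separator} has a zero-volume positive region, so closing numbers are only meaningful for the $\epsilon$-error/bump version, which is the one in Table~\ref{tab:activation_function_comparison}.) The RBF case is then immediate: one ball with $r>0$ is bounded with positive volume, so $CN=1$, and since this holds for every admissible center and radius the closed region is formed ``everywhere.''

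For equality separators, the upper bound uses $n$ slabs whose normals $\mathbf{w}_1,\dots,\mathbf{w}_n$ are linearly independent: the affine map $\mathbf{x}\mapsto(\mathbf{w}_1^T\mathbf{x}+b_1,\dots,\mathbf{w}_n^T\mathbf{x}+b_n)$ is then a bijection carrying $\bigcap_{i=1}^n\{|\mathbf{w}_i^T\mathbf{x}+b_i|\le\epsilon_i\}$ onto the box $\prod_{i=1}^n[-\epsilon_i,\epsilon_i]$, so the intersection is a parallelepiped of volume $2^n(\prod_i\epsilon_i)/|\det[\mathbf{w}_1\cdots\mathbf{w}_n]|\in(0,\infty)$; since linear independence of $n$ vectors fails only on a measure-zero set, this is the ``almost everywhere'' statement. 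For the lower bound, if at most $k\le n-1$ hypotheses are used, then $\mathrm{span}\{\mathbf{w}_1,\dots,\mathbf{w}_k\}$ is a proper subspace, so there is a nonzero $\mathbf{d}$ orthogonal to every $\mathbf{w}_i$; each hypothesis (a slab or the complement of a slab) depends on $\mathbf{x}$ only through $\mathbf{w}_i^T\mathbf{x}$, hence is invariant under $\mathbf{x}\mapsto\mathbf{x}+t\mathbf{d}$, so the intersection is too, and a positive-volume intersection would then contain every translate by $t\mathbf{d}$, $t\in\mathbb{R}$, of a small ball and so have infinite volume. Hence $CN=n$.

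For halfspace separators, the lower bound is the same idea with a ray in place of a line: for any $k\le n$ signed normals there is a nonzero $\mathbf{d}$ with $\mathbf{a}_i^T\mathbf{d}\le 0$ for all $i$ (a nullspace vector when they are dependent, and otherwise enough freedom to solve $\mathbf{a}_i^T\mathbf{d}<0$ since $k\le n$), and any ball inside the intersection translates freely along $\{t\mathbf{d}:t\ge 0\}$ without leaving it, forcing infinite volume; so $CN\ge n+1$. For the upper bound and the ``$n$-simplex'' clause, I would invoke the standard fact that $\bigcap_{i=1}^{n+1}\{\mathbf{a}_i^T\mathbf{x}\le b_i\}$ is bounded iff its recession cone $\{\mathbf{d}:\mathbf{a}_i^T\mathbf{d}\le 0\ \forall i\}$ equals $\{\mathbf{0}\}$, which happens iff the (inward) normals positively span $\mathbb{R}^n$, i.e.\ $\mathbf{0}$ lies in the interior of $\mathrm{conv}\{\mathbf{a}_1,\dots,\mathbf{a}_{n+1}\}$; for $n+1$ vectors in $\mathbb{R}^n$ this forces them to be affinely independent, i.e.\ to span a genuine $n$-dimensional simplex. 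When it holds one chooses biases making the polytope nonempty (e.g.\ $\mathbf{a}_i=\mathbf{e}_i,b_i=1$ for $i\le n$ and $\mathbf{a}_{n+1}=-\sum_i\mathbf{e}_i,b_{n+1}=0$ yields a bounded full-dimensional simplex), realizing a closed decision region; when it fails — for instance whenever all $\mathbf{a}_i$ lie in one open halfspace through the origin, a set of positive measure — no choice of biases bounds the region. This is exactly the advertised asymmetry: $n$ generic equality separators enclose a region almost surely, whereas $n+1$ halfspace separators do so only on the positive-but-not-full-measure set of normal configurations that surround the origin.

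The step I expect to be the main obstacle is stating the halfspace ``$n$-simplex'' clause crisply: one must chain boundedness of the polyhedron to triviality of its recession cone, then to the positive-spanning / origin-in-the-convex-hull condition on the normals, and then argue that this is genuinely non-generic (unlike linear independence for equality separators) — in particular being careful that ``normal vectors form an $n$-simplex volume'' is understood to include $\mathbf{0}$ lying in the simplex's interior, not merely the normals being affinely independent.
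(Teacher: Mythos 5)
Your proposal is correct, and for the RBF and equality-separator parts it follows essentially the same route as the paper: the upper bound for equality separators comes from intersecting $n$ slabs with linearly independent normals to get a bounded parallelepiped (you make the paper's orthogonal-hyperrectangle construction slightly more general by computing the volume as $2^n\prod_i\epsilon_i/|\det[\mathbf w_1\cdots\mathbf w_n]|$, which also directly yields the ``almost everywhere'' clause from the measure-zero locus of rank-deficient matrices), and the lower bound comes from exhibiting a direction $\mathbf d$ orthogonal to all $k\le n-1$ normals along which the intersection is translation-invariant, forcing volume $0$ or $\infty$ --- the same invariance the paper exploits by slicing along $\mathbf v\perp\mathcal S$, though your version handles slabs and their complements uniformly, where the paper treats the label-flipped case separately. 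Where you genuinely diverge is the halfspace lower bound: the paper argues via polytope duality (each bounding hyperplane corresponds to a facet, hence to a vertex of the dual polytope, and an $n$-dimensional dual polytope has at least $n+1$ vertices), whereas you use the recession-cone argument that any $k\le n$ halfspaces admit a common nonzero recession direction (a nullspace vector if the normals are dependent, a solution of $\mathbf a_i^T\mathbf d<0$ otherwise), so a positive-volume intersection is unbounded. Your route is more elementary, unifies the equality and halfspace lower bounds under one translation/ray argument, and makes the genericity contrast precise: it identifies the halfspace closure condition as ``the inward normals positively span $\mathbb R^n$, i.e.\ $\mathbf 0\in\mathrm{int\,conv}\{\mathbf a_i\}$,'' which is a positive- but not full-measure event, in contrast to the full-measure linear-independence condition for equality separators --- a sharper statement of the paper's ``$n$-simplex volume'' clause. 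You are also right to flag that the theorem must be read for the $\epsilon$-error (bump) separator, since the strict separator's positive region has zero volume; the paper handles this by excluding zero-volume regions by fiat and noting the degenerate case inside its lower-bound proof.
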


We break the theorem into 3 lemmas:
\begin{lemma}
\label{lemma:closing_number_RS}
    Closing number of RBF separators is 1 in $\mathbb R^n$.
    Closed decision regions are formed everywhere.
\end{lemma}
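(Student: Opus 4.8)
The plan is to observe that a single RBF separator already carves out a closed, positive-and-finite-volume decision region, so the minimizing intersection in the definition of the closing number consists of just one hypothesis; moreover this happens for \emph{every} admissible choice of the RBF's parameters, which is the content of ``formed everywhere''. No interaction between several hypotheses is needed, which is what makes this the easiest of the three cases.

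First I would recall the shape of an RBF separator's positive region. Its decision rule thresholds a radial activation, so its positive region has the form $\{\mathbf x\in\mathbb R^n:\exp(-\|\mathbf x-\vmu\|_2^2/\sigma^2)\ge\tau\}$ for a center $\vmu\in\mathbb R^n$, width $\sigma>0$, and threshold $\tau$. For any $\tau\in(0,1)$ this set is exactly the closed ball $B(\vmu,r)$ of radius $r=\sigma\sqrt{-\ln\tau}>0$. (If label flipping is permitted, the complementary hypothesis has positive region $\mathbb R^n\setminus B(\vmu,r)$, which is unbounded and hence of infinite Lebesgue measure, so it is the ball branch that is relevant here.) Then the two halves of the claim are immediate: the closing number is at least $1$ by definition, since any nontrivial intersection requires at least one hypothesis (the intersection over the empty family is all of $\mathbb R^n$, of infinite volume); and for the upper bound, taking any one such ball region gives $\mathrm{Vol}(B(\vmu,r))=\mathrm{Vol}(B(\mathbf 0,1))\,r^n$ with $0<\mathrm{Vol}(B(\mathbf 0,1))\,r^n<\infty$, so a single RBF hypothesis already meets the positive-volume and finite-volume requirements. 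Hence $CN(\mathcal H_{\mathrm{RBF}})=1$.

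Finally, ``closed decision regions are formed everywhere'' holds because this argument uses no special relation among the parameters: for every center $\vmu$ and every radius $r>0$ (equivalently every threshold $\tau\in(0,1)$) the region is a genuine ball of positive finite volume, in contrast with the halfspace case where the normal vectors must span an $n$-simplex of positive volume. I do not expect a substantive obstacle; the only point requiring care is to state precisely which parameter ranges of the activation and threshold yield a nonempty, bounded region, so that the degenerate endpoints (threshold too large, giving the empty set; threshold nonpositive, giving all of $\mathbb R^n$) are correctly excluded by the positive-volume/finite-volume clauses in the definition of the closing number.
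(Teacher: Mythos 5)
Your proof is correct and matches the paper's treatment, which simply notes that RBF separators have closing number $1$ ``by construction'' and treats the lemma as trivial; you have merely written out the details (the positive region is a ball $B(\vmu,r)$ of positive finite volume for any center and any threshold $\tau\in(0,1)$) that the paper leaves implicit. No substantive difference in approach.
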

\begin{lemma}
\label{lemma:closing_number_ES}
    Closing number of equality separators is $n$ in $\mathbb R^n$.
    Closed decision regions are formed with the intersection of $n$ equality separators almost everywhere.
\end{lemma}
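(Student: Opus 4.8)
The plan is to prove the two halves of the statement separately: an explicit construction of $n$ equality separators whose intersection is a bounded, positive-volume parallelepiped, valid for almost every choice of parameters (this gives the upper bound $CN(\mathcal H_\epsilon)\le n$ and the ``almost everywhere'' clause), together with an impossibility argument showing no $n-1$ equality separators can enclose a finite positive volume (the lower bound $CN(\mathcal H_\epsilon)\ge n$). The key structural fact I would isolate first is that the positive region of any single equality-separator hypothesis depends on $\mathbf x$ only through the linear functional $\mathbf x\mapsto\mathbf w_i^\top\mathbf x$: it is either a slab $S_i=\{\mathbf x:|\mathbf w_i^\top\mathbf x+b_i|\le\epsilon_i\}$ or its complement, and in both cases $h_i(\mathbf x)=\mathds{1}_{T_i}(\mathbf w_i^\top\mathbf x)$ for an interval-or-complement $T_i\subseteq\mathbb R$. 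This one observation drives both directions.

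For the upper bound, I would take $n$ slab-type hypotheses with normal vectors $\mathbf w_1,\dots,\mathbf w_n$ and stack them into $\mathbf W:=[\mathbf w_1\,|\cdots|\,\mathbf w_n]^\top$. When $\mathbf W$ is invertible, the substitution $\mathbf y=\mathbf W\mathbf x+\mathbf b$ identifies $\bigcap_{i=1}^n S_i$ with the affine image $\mathbf W^{-1}\bigl(\prod_{i=1}^n[-\epsilon_i,\epsilon_i]-\mathbf b\bigr)$ of a box, hence a parallelepiped of volume $|\det\mathbf W|^{-1}\prod_{i=1}^n 2\epsilon_i$, which lies strictly between $0$ and $\infty$ for every choice of biases $b_i$ and widths $\epsilon_i>0$. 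Since $\{(\mathbf w_1,\dots,\mathbf w_n):\det\mathbf W=0\}$ is the zero set of a nonzero polynomial on $\mathbb R^{n\times n}$, it is Lebesgue-null; so a closed decision region results for almost every configuration of the $n$ normal vectors, which is precisely the ``almost everywhere'' statement, and in particular $CN(\mathcal H_\epsilon)\le n$.

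For the lower bound, let $h_1,\dots,h_{n-1}$ be arbitrary equality-separator hypotheses with normals $\mathbf w_1,\dots,\mathbf w_{n-1}$, and let $L:\mathbb R^n\to\mathbb R^{n-1}$ be $L(\mathbf x)=(\mathbf w_1^\top\mathbf x,\dots,\mathbf w_{n-1}^\top\mathbf x)$. Since $\dim\ker L\ge n-(n-1)=1$, choose $\mathbf v\ne\mathbf 0$ in $\ker L$. Because each $h_i$ factors through $\mathbf w_i^\top\mathbf x$, the set $E:=\bigcap_{i=1}^{n-1}\{\mathbf x:h_i(\mathbf x)=1\}$ is invariant under every translation $\mathbf x\mapsto\mathbf x+t\mathbf v$, $t\in\mathbb R$, i.e. $E$ is a union of full lines in direction $\mathbf v$. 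Decomposing $\mathbb R^n=\mathbb R\mathbf v\oplus\mathbf v^{\perp}$ and letting $\pi$ be the orthogonal projection onto $\mathbf v^{\perp}$, we have $E=\pi^{-1}(\pi(E))$ with $\pi(E)$ Borel (as $E$ is a finite Boolean combination of half-spaces and hyperplanes), so Fubini gives $\mathrm{Vol}(E)=\lambda_{n-1}(\pi(E))\cdot\lambda_1(\mathbb R)$, which equals $0$ when $\lambda_{n-1}(\pi(E))=0$ and $+\infty$ otherwise, never a finite positive number. Hence no $n-1$ equality separators bound a finite positive volume, so $CN(\mathcal H_\epsilon)\ge n$; combined with the construction, $CN(\mathcal H_\epsilon)=n$.

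The routine parts are the change-of-variables volume computation and the Fubini bookkeeping (measurability of $\pi(E)$, the convention $0\cdot\infty=0$). The point that deserves genuine care — and which I would flag as the crux — is that the hypothesis class also contains the ``outside-the-margin'' (complement-of-slab) hypotheses, whose individual regions are unbounded and non-convex, so one must rule out that mixing such hypotheses in could bound a region with fewer than $n$ of them. The line-invariance argument handles this uniformly precisely because it never uses convexity or boundedness of the individual pieces, only that each $h_i$ is a function of the single coordinate $\mathbf w_i^\top\mathbf x$; this is why counting the number of distinct normal directions, rather than arguing geometrically about intersections of slabs, is the right abstraction.
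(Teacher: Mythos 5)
Your proof is correct and follows essentially the same route as the paper: a parallelepiped from $n$ slabs with linearly independent normals (with the singular matrices forming a Lebesgue-null set) for the upper bound and the ``almost everywhere'' clause, and translation invariance along a direction orthogonal to all $k<n$ normal vectors, plus Fubini, to show the volume is $0$ or $\infty$ for the lower bound. Your version is in fact slightly tidier, since the line-invariance argument treats slab-type and complement-type hypotheses uniformly where the paper splits these into separate cases.
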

\begin{lemma}
\label{lemma:closing_number_HS}
    Closing number of halfspace separators is $n+1$ in $\mathbb R^n$.
    Closed decision regions are formed with the intersection of $n+1$ halfspace separators where normal vectors form an $n$-simplex volume.
\end{lemma}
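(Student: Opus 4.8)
The plan is to establish \Cref{thm:closing numbers} through its three component lemmas, treating each separator family separately. In all three cases the bound $CN(\mathcal H)\ge 1$ is immediate, since the empty intersection equals $\R^n$ and has infinite volume, so any configuration realizing a closed decision region must use at least one hypothesis. For the two nontrivial lower bounds I would isolate one structural observation. The acceptance region of any equality separator (a slab $\{\vx:|\vw^\top\vx+b|\le\epsilon\}$, or the complement of such a slab, in the $\epsilon$-error class $\mathcal H_\epsilon$; the strict variant has zero-volume acceptance regions and is therefore excluded) and of any halfspace separator ($\{\vx:\vw^\top\vx+b\ge 0\}$) is invariant under translation along every direction orthogonal to its normal $\vw$. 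Hence if one intersects $k$ such hypotheses whose normals span only a proper subspace $V\subsetneq\R^n$, the intersection is invariant under translation along $V^{\perp}\neq\{\vzero\}$, so it is either empty (volume $0$) or contains an entire affine line (volume $\infty$); in neither case is it an admissible closed decision region. This single lemma forces $CN\ge n$ for equality separators and $CN\ge n+1$ for halfspaces.

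For \Cref{lemma:closing_number_RS}: a single RBF separator accepts a ball $B(\vmu,r)$ with $r>0$, and $0<\mathrm{Vol}(B(\vmu,r))<\infty$ for every center and bandwidth, so with $CN\ge 1$ we get $CN=1$, realized by every single RBF unit (``everywhere''). For \Cref{lemma:closing_number_ES}: fewer than $n$ slabs have normals spanning a proper subspace, so by the observation above they never enclose a positive finite volume, giving $CN\ge n$. Conversely, if $\vw_1,\dots,\vw_n$ are linearly independent, the affine map $\vx\mapsto(\vw_1^\top\vx+b_1,\dots,\vw_n^\top\vx+b_n)$ is an isomorphism carrying $\bigcap_{i=1}^n\{|\vw_i^\top\vx+b_i|\le\epsilon_i\}$ onto the box $\prod_i[-\epsilon_i,\epsilon_i]$ of volume $\prod_i 2\epsilon_i\in(0,\infty)$; pulling back scales volume by $|\det[\vw_1\ \cdots\ \vw_n]|^{-1}>0$, so the intersection is a bounded, full-dimensional parallelepiped. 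Thus $CN=n$, realized whenever the $n$ normals are linearly independent, i.e.\ off the measure-zero zero set of the determinant --- ``almost everywhere'' in the space of $n\times n$ weight matrices.

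For \Cref{lemma:closing_number_HS}: writing an intersection of $k$ halfspace acceptance regions (after absorbing signs) as a polyhedron $P=\{\vx:\vw_i^\top\vx\le b_i,\ i=1,\dots,k\}$, its recession cone is $\{\vv:\vw_i^\top\vv\le 0\ \forall i\}$, which by polyhedral duality equals $\{\vzero\}$ --- equivalently, a nonempty $P$ is bounded --- iff $\vzero$ lies in the interior of $\mathrm{conv}\{\vw_1,\dots,\vw_k\}$. Since the convex hull of $k\le n$ points has empty interior in $\R^n$, $\vzero$ cannot be interior to it, so a nonempty $P$ built from $k\le n$ halfspaces is unbounded while an empty one has volume $0$; hence $CN\ge n+1$. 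Conversely, with $k=n+1$ the condition $\vzero\in\mathrm{int}\,\mathrm{conv}\{\vw_1,\dots,\vw_{n+1}\}$ forces the normals to be affinely independent (otherwise the hull is at most $(n-1)$-dimensional), i.e.\ up to scaling they are the vertices of a non-degenerate $n$-simplex containing the origin in its interior --- the ``$n$-simplex volume'' condition --- and then $P$ (with, say, all $b_i=1$) is the bounded, full-dimensional polar dual of that simplex. So $CN=n+1$, realized exactly under this simplex condition, which is open but of positive \emph{and not full} measure, in contrast with the full-measure linear-independence condition for equality separators.

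I expect the halfspace case to be the main obstacle. The delicate points are: characterizing boundedness of a polyhedron via the position of the origin relative to the convex hull of the facet normals (the recession-cone duality), separately guaranteeing \emph{positive} volume rather than mere boundedness, and translating the clean condition ``$\vzero\in\mathrm{int}\,\mathrm{conv}\{\vw_i\}$ with $k=n+1$'' into the geometric simplex statement while articulating why it is generic-but-not-almost-everywhere, unlike the equality-separator case. The RBF and equality lemmas are routine once the translation-invariance observation is in hand.
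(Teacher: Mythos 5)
Your argument for the halfspace case is correct, but it takes a genuinely different route from the paper's. The paper proves the lower bound by polytope duality: a bounded convex polytope with non-empty interior in $\mathbb R^n$ must have at least $n+1$ facets because its polar dual is an $n$-dimensional polytope and hence has at least $n+1$ vertices, and facets of the primal correspond to vertices of the dual. You instead work directly with the recession cone of $P=\{\vx:\vw_i^\top\vx\le b_i\}$ and the equivalence $\mathrm{rec}(P)=\{\vzero\}\iff \mathrm{cone}\{\vw_i\}=\R^n\iff \vzero\in\mathrm{int}\,\mathrm{conv}\{\vw_i\}$, from which $k\ge n+1$ follows because the convex hull of at most $n$ points has empty interior. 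Both arguments are sound; yours is more elementary and self-contained (no appeal to the facet--vertex correspondence of polar duals, which implicitly presupposes the boundedness one is trying to rule out), and it buys something extra: an exact characterization of \emph{when} $n+1$ halfspaces enclose a bounded positive-volume region, namely $\vzero\in\mathrm{int}\,\mathrm{conv}\{\vw_1,\dots,\vw_{n+1}\}$. This is strictly sharper than the condition stated in \Cref{lemma:closing_number_HS} and its proof in the paper: affine independence of the normals (``$n$-simplex volume'') is necessary but not sufficient, as the example $\vw_1=e_1,\ \vw_2=e_2,\ \vw_3=e_1+e_2$ in $\R^2$ shows --- the normals span a positive-area triangle, yet $\{x\le 1,\ y\le 1,\ x+y\le 1\}$ is unbounded along the ray in direction $(-1,-1)$. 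Your observation that the realizing configurations form an open set of positive but not full measure also makes the intended contrast with the almost-everywhere guarantee for equality separators precise. The one step to write out carefully is the duality claim itself (trivial dual cone iff the finitely generated cone is all of $\R^n$, using that such cones are closed, iff the origin is interior to the hull), but as you note this is standard and you have flagged it appropriately.
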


Note that RBF separators have a closing number of 1 by construction, so Lemma \ref{lemma:closing_number_RS} is trivial.
We proceed to prove the closing numbers of equality separators (Lemma \ref{lemma:closing_number_ES}) and halfspace separators (Lemma \ref{lemma:closing_number_HS}).

\subsection{Proof of Lemma \ref{lemma:closing_number_ES}: closing number of equality separators in $\mathbb R^n$}

To show that the closing number is at most $n$ in $\mathbb R^n$, we observe that a hyperrectangle can be formed through the intersection of $n$ equality separators with orthogonal normal (ie. weight) vectors where the volume in the margin is considered the positive class.
Since the volume of a hyperrectangle is finite, so is this intersection.

\begin{proof}
If we have the intersection of $k<n$ equality separators, we observe a decision region with infinite volume or no volume.
The first case is when there exists an equality separator that assigns the positive class to the region outside of the margin.
This induces an infinite-volume decision region for non-parallel normal vectored equality separators.
Otherwise, the decision region either has no volume (when the intersection of parallel equality separators is a pair of hyperplanes, a hyperplane or the empty set) or infinite volume (in the more general case).
The other case is where all equality separators assign a positive classification to the region within the margin.
Without loss of generality, let the bias terms of all $k$ equality separators be 0 and consider the subspace $\mathcal S$ spanned by the $k$ normal vectors.
Since $k<n$ (or, similarly, in the case where the set of weight vectors of equality separators do not form a basis for $\mathbb R^n$), then there exists a vector $\mathbf{v}$ that is perpendicular to subspace $\mathcal S$.
The volume of the intersection $I$ between these $k$ equality separators can then be formulated as
\begin{align*}
&\mathrm{Vol}(\text{decision region of $k$ equality separators})\\
= &\mathrm{Vol}(\{\mathbf x \in I\} )\\
= &\mathrm{Vol}(\cup_{i\in \mathbb Z} \{\mathbf x \in I : i \leq \mathbf{v}^T \mathbf x \leq i+1\} ) & \exists \mathbf v \perp \mathcal S\\
= &\sum_{i\in \mathbb Z} \mathrm{Vol}( \{\mathbf x \in I : i \leq \mathbf{v}^T \mathbf x \leq i+1\} )
\end{align*}
because the intersections (hyperplanes) have zero measure.
When none of the equality separators are strict equality separators, we obtain an infinite volume since $\mathrm{Vol}( \{\mathbf x \in I : i \leq \mathbf{v}^T \mathbf x \leq i+1\} ) > 0$.
By definition of the Lebesgue measure, the intersection has (at least) infinite volume by moving along the $\mathbf v$ direction.
Otherwise, $\mathrm{Vol}( \{\mathbf x \in I : i \leq \mathbf{v}^T \mathbf x \leq i+1\} ) = 0$, so we have zero volume.
Hence, having fewer than $n$ equality separators cannot induce a closed decision region.
\end{proof}

\paragraph{Conditions for closed decision regions}
Consider $n$ equality separators, which can be represented as a collection of their normal vectors in an $n\times n$ matrix $A\in \mathbb R^{n\times n}$.
The first condition for the finite-volume argument to hold (by observing the formation of a hyperparallelogram) is that these normal vectors form a set of linearly independent vectors (more precisely, that they form a basis for $\mathbb R^n$).
Furthermore, we notice that the set of all rank deficient matrices has zero measure.
An intuition for this is that, if we have any rank deficient matrix $A$, we can always perturb its eigenvalues to obtain a full rank matrix (i.e. non-zero eigenvalues) $A+\lambda I$ for some scalar $\lambda$, where $\lambda$ can be arbitrarily small.
Hence, this set has no volume.

The second condition for equality separators to enclose decision regions is when they are hyper-hill classifiers (i.e. normal data belongs within the margin).
This is how our insight is birthed that we can simply use hyper-hill classification in AD -- we are using a more expressive hypothesis class with equality separation but technically have the VC dimension of hyper-hills (or hyper-ridge classifiers), which is half of equality separators.

\subsection{Proof of Lemma \ref{lemma:closing_number_HS}: closing number of halfspace separators in $\mathbb R^n$}

To show that the closing number is at most $n+1$, we observe that a hypertetrahedron (convex hull of $n+1$ points or a bounded convex polytope with non-empty interior) can be formed with the intersection of $n+1$ hyperplanes, where each of the facets of the convex hull correspond to each of the $n+1$ hyperplanes.

Now, we want to show that $n+1$ is minimal.
Since we are considering the intersection of hyperplanes to form a bounded convex polytope with non-empty interior, we assign a one-to-one correspondence between a hyperplane and a facet of the convex polytope.
In fact, by considering a dual polytope of the convex polytope, there is a further correspondence to each hyperplane and a vertex of the dual polytope.
With non-empty interior, the dual polytope is $n$-dimensional, so has at least $n+1$ vertices.
Hence, we need at least $n+1$ facets.
Thus, to form a bounded convex polytope with non-empty interior with the intersection of hyperplanes, we need at least $n+1$ hyperplanes.

\newpage
\section{Design choices: bump activations and loss functions}
\label{appendix:bump_and_loss_choices}

Using different activations and loss functions to model equality separation introduces different implicit biases in the solutions obtained.
We introduce possible activations and loss functions, exploring an intuition of the implicit bias induced by these differences (and hence, what applications fit better with these implicit biases).

\subsection{Other possible bump activations}
Other activation functions can be used as a bump in addition to the Gaussian bump activation that we used in this paper.
For instance, we can use the inverse-squared of the hyperbolic tangent, \(\mathrm{tanh}(v/z^2)\), where $v$ is a hyperparameter controlling the flatness of the bump function near $z=0$ (Figure \ref{fig:bump_tanh_gaussian}).
Using the plateau can be used to drastically reduce the penalization when the input is within some distance from the hyperplane, which is similar to the idea of hinge loss in SVM.
In other words, the Gaussian bump encodes hyperplane membership, while the inverse-squared of the hyperbolic tangent encodes margin membership (which has a more forgiving penalty for data off the hyperplane).

\begin{figure}[h]
    \centering
    \begin{subfigure}{0.45\linewidth}
    \centering
    \includegraphics[width=1\textwidth, keepaspectratio]{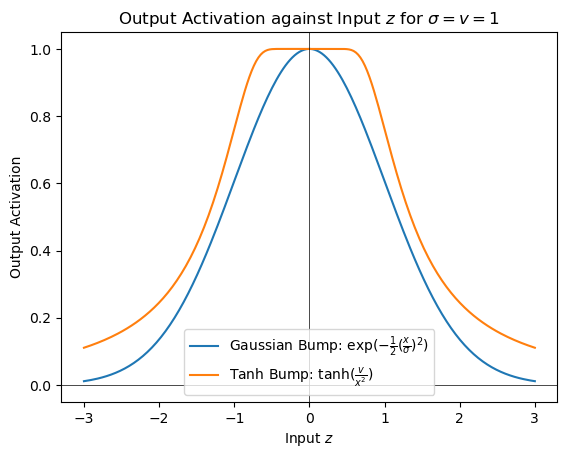}
    \caption{Output of the activation functions.}
    \label{fig:bump_tanh_gaussian_output}
    \end{subfigure}
    \hspace{0.05\linewidth}
    \begin{subfigure}{0.45\linewidth}
    \centering
    \includegraphics[width=1\textwidth, keepaspectratio]{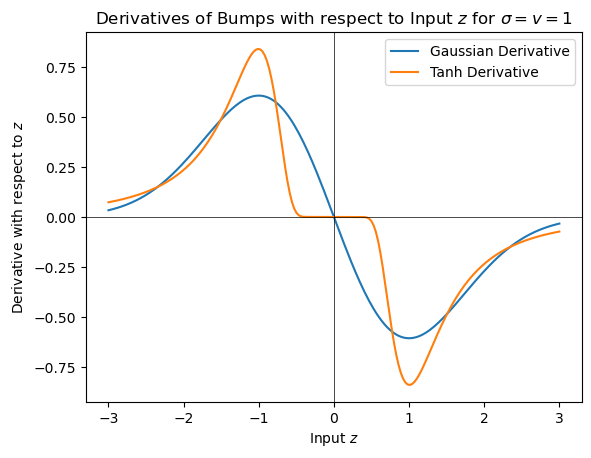}
    \caption{Derivatives of the activation functions.}
    \label{fig:bump_tanh_gaussian_derivatives}
    \end{subfigure}
    \caption{Two different bump activations: the Gaussian bump which we used in blue, and the hyperbolic tangent bump in orange.}
    \label{fig:bump_tanh_gaussian}
\end{figure}

As a note, the Gaussian bump is of the form $[\exp \left ( \mathbf w_i \cdot \mathbf z_i \right )]_{i=1}^d\in \mathbb R^d$ for input and weights $\mathbf z, \mathbf w\in \mathbb R^d$ respectively.
This is different from the Gaussian RBF, which is of the form $\exp \left ( ||\mathbf z- \mathbf c ||_2 \right )\in \mathbb R$ for input and center $\mathbf z, \mathbf c\in \mathbb R^d$ respectively.

\subsection{Loss functions}
\label{appendix:loss_fn}
For input $z$ into the activation function in the output layer, we provide visualizations of how the mean-squared error (MSE) and logistic loss (LL) compare for the negative class (Figure \ref{fig:neg_class_MSEvsLL}) and positive class (Figure \ref{fig:pos_class_MSEvsLL}).
LL penalizes more on wrong classifications for both classes.
Such a scheme is good for learning a stricter decision boundary, but potentially harder to learn when the distributions of the positive and negative class are close and harder when they overlap.
On the other hand, MSE has smaller penalization for wrong classifications, which leads to more tolerance for noise but may lead to learning sub-optimal solutions.
Such a difference hints at different applications, where LL is generally preferred unless the distributions of the positive and negative class are very close (or overlapping, like in negative sampling), in which case the more conservative MSE would probably be more suitable.

\begin{figure}[h]
    \centering
    \begin{subfigure}[b]{0.45\linewidth}
    \centering
    \includegraphics[keepaspectratio, width=0.99\linewidth]{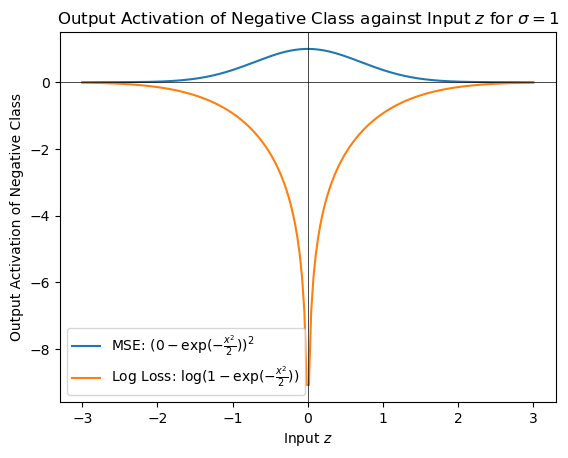}
    \caption{Output of mean-squared error and logistic loss against the input to the bump activation function when the datum is from the negative class.}
    \label{fig:neg_class_MSEvsLL}
    \end{subfigure}
    \hspace{0.07\linewidth}
    \begin{subfigure}[b]{0.45\linewidth}
    \centering
    \includegraphics[keepaspectratio, width=0.99\linewidth]{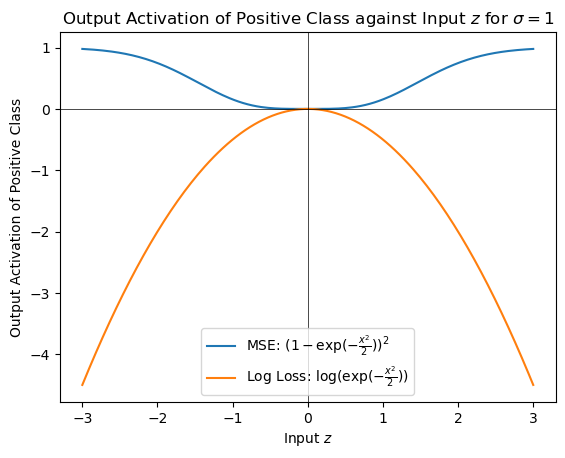}
    \caption{Output of mean-squared error and logistic loss against the input to the bump activation function when the datum is from the positive class.}
    \label{fig:pos_class_MSEvsLL}
    \end{subfigure}
    \caption{Comparison of mean-squared error and logistic loss ($\sigma=1$ in the bump activation).}
    \label{fig:MSEvsLL}
\end{figure}
\newpage

\section{Experimental details}
\label{appendix:experiment_details}
In this section, we provide details of our experiments.
Unless otherwise stated, training and testing both have $m=100$ data samples.

\subsection{Role of $\sigma$ in the bump activation}
\label{section:sigma_role}
The gradient of the
log-concave 
bump activation is
\begin{equation}
\label{gradient_bump}
\frac{\partial}{\partial z}B(z, \mu, \sigma) = -\frac{z-\mu}{\sigma^2} \exp \left [-\frac{1}{2}\left ( \frac{z-\mu}{\sigma} \right )^2 \right ],
\end{equation}
which vanishes to 0 as the input tends towards negative or positive infinity.
To prevent vanishing gradients, we initialize $\sigma$ to be decently large to decrease the sensitivity to weight initialization.
Experiments in Appendix \ref{section:vary_sigma} will shed further light on how the initialization of $\sigma$ affects performance (including the effect of loss functions).
For simplicity, we set $\mu=0$ but give the option for $\sigma$ to be learnable across neurons in the same hidden layer in our experiments.
Setting $\sigma$ to be learnable allows the model to adapt to its ability to represent the data, such as decreasing $\sigma^2$ to increase the separation between the two classes.
Otherwise, the separation is controlled implicitly by $\mathbf{w}$ through $||\mathbf w||_2$, where the margin is proportional to $\frac{\sigma}{||\mathbf w||_2}$.
Unless otherwise stated, $\sigma$ is assumed to be a fixed hyperparameter at 10.
In equality separator NNs, we fix $\sigma=0.5$ in the output layer for stability during training.

\subsection{Linear classification for anomaly detection}
\label{appendix:linear_classification_exp}

To understand the usability of equality separation in NNs, we first
look at linear classification for AD with (1) linearly inseparable data and (2) linearly separable data.
We select key results to show here and include more details in the following subsections.

\paragraph{Linear problem 1: living on the plane, linearly inseparable}
\label{section:linear_inseparable_experiment}
We generate data on hyperplanes as the positive class while the negative class follows a uniform distribution in the ambient space.
The positive to negative class imbalance (P:N) of 0.9:0.1.
Weights
and bias of the hyperplane are drawn from a standard multivariate normal, and data are perturbed with Gaussian noise centered at zero with varying standard deviations.
We repeat experiments 20 times.
Results are reported in Table \ref{tab:linear_results_pr}.

Especially in higher dimensions, our equality separator is robust even when noise is sampled with higher variance than the weights.
The inductive bias of hyperplane learning helps our equality separator to be less sensitive to the signal-to-noise ratio.
Although simple linear regression with only positive examples may be more suitable in this case, we demonstrate that using negative examples as regularization can still achieve good performance, suggesting its usability in deep learning.

\begin{table}[]
\setlength{\tabcolsep}{4pt}
    \centering
    \caption{AUPR for linear AD of linearly inseparable data with varying standard deviation (Std) of noise and data dimension (Dim).
    No noise is denoted by 0.0 std.
    Random classifier has 0.90 AUPR.}
    \vspace{\baselineskip}
    \begin{tabular}{cccccccc}
\toprule
Std$\backslash$Dim & 5 & 10 & 15 & 20 & 25 & 30 & 35\\
\midrule
 0.0  &  0.99$\pm$0.06  &  1.00$\pm$0.00 &  0.99$\pm$0.06  &  1.00$\pm$0.00  &  1.00$\pm$0.00  &  0.99$\pm$0.06  &  1.00$\pm$0.00 \\
 0.5  &  0.98$\pm$0.06  &  1.00$\pm$0.00  &  0.99$\pm$0.06  &  1.00$\pm$0.00  &  1.00$\pm$0.00  &  0.99$\pm$0.06  &  1.00$\pm$0.00 \\
 1.0  &  0.98$\pm$0.05  &  1.00$\pm$0.01  &  0.98$\pm$0.05  &  1.00$\pm$0.00  &  0.99$\pm$0.04  &  0.99$\pm$0.06  &  1.00$\pm$0.00 \\
 1.5  &  0.97$\pm$0.05  &  1.00$\pm$0.01  &  0.98$\pm$0.05  &  0.99$\pm$0.01  &  0.99$\pm$0.01  &  0.99$\pm$0.06  &  1.00$\pm$0.01 \\
 2.0 &  0.97$\pm$0.05 &  0.98$\pm$0.02 &  0.99$\pm$0.01 &  0.99$\pm$0.01 &  1.00$\pm$0.00 &  0.98$\pm$0.05 &  1.00$\pm$0.00\\
\bottomrule
    \end{tabular}
    \label{tab:linear_results_pr}
\end{table}

\paragraph{Linear problem 2: competing with halfspace separators on linear separability}
\label{section:halfspace_problem}

\begin{table}[]
\setlength{\tabcolsep}{4pt}
    \centering
    \caption{AUPR for 2D Gaussian Separation by SVM and our equality separator (ES), with varying noise multiplier (NM). ES is competitive with SVM.}
    \begin{tabular}{cccccccc}
\toprule
NM & 1.0  &  4.0  &  7.0  &  10.0  &  13.0  &  16.0  &  19.0 \\
\midrule
SVM & 1.00$\pm$0.00  &  1.00$\pm$0.00  &  0.99$\pm$0.01  &  0.98$\pm$0.01  &  0.97$\pm$0.03  &  0.97$\pm$0.01  &  0.97$\pm$0.03\\
ES (ours) & 1.00$\pm$0.00  &  1.00$\pm$0.00  &  0.99$\pm$0.01  &  0.98$\pm$0.01  &  0.97$\pm$0.02  &  0.97$\pm$0.01  &  0.97$\pm$0.01\\
\bottomrule
    \end{tabular}
    \label{tab:aupr_gaussian_linear}
\end{table}

We compare our equality separator with robust halfspace separation, linear-kernel SVM with weighted class loss of the P:N of 0.9:0.1.
To vary the overlap of distributions,
we multiply the covariance matrix of each Gaussian by varying scalars, denoted as a \textit{noise multiplier}.
We repeat experiments 20 times
and report results in Table \ref{tab:aupr_gaussian_linear}.

Unlike the SVM trained, our equality separator does not explicitly account for class imbalance and does not incoporate the robustness of hinge loss. 
Yet, our equality separator performs as well as the class-weighted SVM and also reduces the open space risk with respect to the positive class (i.e. the class with more data available).
Once again, the bias of hyperplane learning encoded in equality separators makes it robust to class imbalance.

\subsubsection{Linear problem 1: linearly inseparable data}
\label{appendix:linearly_inseparable_exp}
We report the details of the experiments done on our experiments on linearly non-separable data.
As mentioned, we model the positive class to live on some hyperplane, while the negative class lives uniformly in the ambient space, where we vary the dimension of the ambient space $d$ across experiments to be $5, 10, 15, 20, 25, 30$ and $35$.
The normal vector $\mathbf w\in \mathbb R^d$ of the hyperplane is drawn from a standard multivariate normal 
\begin{equation*}
\mathbf w \sim \mathcal{MVN}(0, I_d)
\end{equation*}
while the bias $b\in\mathbb R$ is drawn from a standard normal
\begin{equation*}
b \sim \mathcal N(0, 1).
\end{equation*}
A datum from the positive class $\mathbf x_p$ is generated where the first $d-1$ dimensions are drawn from a uniform distribution
\begin{equation*}
x_p^i \sim \mathrm{Uniform}(-10, 10) \; , i\in \{1,...,d-1\}
\end{equation*}
and the last $d^{th}$ dimension is calculated by
\begin{equation*}
x_p^d = \frac{-b-\sum_{i=1}^{d-1}w_{i}x_p^i}{w_{d}}
\end{equation*}
where $w_i\in \mathbb R$ is the $i^{th}$ dimension of normal vector $\mathbf w\in \mathbb R^d$.
On the other hand, data from the negative class are obtained by sampling each dimension from a uniform distribution between $-10$ and $10$.
There is a 0.9:0.1 positive to negative class imbalance, and a total of $100$ data samples.

After all data are generated, random noise is added to the data.
The random noise follows a Gaussian distribution with zero mean and standard deviation of $0.0, 0.5, 1.0$ and $1.5$ (we abuse notation and refer to no noise being added as $0.0$ standard deviation) across the different experiments.

We optimize over mean-squared error (MSE)
where the hypothesis class is the set of affine predictors $A_d$ and the hyperparameters $\mu=0$ and $\sigma=10$ are fixed.
The gradient-based optimizer used is Broyden-Fletcher-Goldfarb-Shannon (BFGS), the loss function used in Section \ref{section:linear_inseparable_experiment} is MSE, and the parameters are initialized to a standard multivariate normal distribution.
We use the $\mathrm{minimize}$ function in the $\mathrm{scipy.optimize}$ package\footnote{https://docs.scipy.org/doc/scipy/reference/optimize.minimize-bfgs.html\#optimize-minimize-bfgs} where the stopping criteria is when the step size is $0$.

Each trial is repeated a total of $20$ times by generating a new dataset by sampling a different weight and bias vector (that defines a different hyperplane) and running optimization again, and we report the mean and standard deviation of the results.
As previously reported, results are in Table \ref{tab:linear_results_pr}.

\paragraph{Results}
The results of our equality separator reported 
are somewhat interesting.
On one hand, as expected, an increased level of noise leads to a decreased mean AUPR.
With more noise, the overlap between the positive and negative class increases, so the separation among the classes (and hence the mean AUPR) decreases.
On the other hand, the performance in higher dimensions generally increases, with mean AUPR increasing and AUPR variance decreasing.
This phenomenon is probably because there is a greater density of noise in lower dimensions for constant amount of data, leading to a greater overlap of the 2 classes.

\paragraph{More data for increased dimensionality}
Furthermore, a linear increase in the data does not seem to affect the performance significantly too.
We performed the same experiment, but instead of a constant $100$ data samples for all experiments, the number of data samples was adjusted be $20d$ for input dimension $d$.
Increasing the number of data samples will increase the amount of data from the negative class, and hence increase the overlap between the positive and negative class.
However, results reported in Table \ref{tab:linear_results_pr_more_data} suggest that this increase in overlap does not affect the mean and variance of AUPR of the equality separator.
The inductive bias of linear regression helps in hyperplane learning, even when the overlap between the positive and negative class increases.

\begin{table}[]
\setlength{\tabcolsep}{4pt}
    \centering
    \caption{AUPR for linear AD of linearly inseparable data with varying standard deviation (Std) of noise and data dimension (Dim), optimized with mean-squared error.
    No noise is denoted by 0.0 std.
    Random classifier has 0.90 AUPR.
    Number of data samples is $20$ times the number of dimensions.\\}
    \vspace{\baselineskip}
    \begin{tabular}{cccccccc}
\toprule
Std$\backslash$Dim & 5 & 10 & 15 & 20 & 25 & 30 & 35\\
\midrule
 0.0  &  0.99$\pm$0.06  &  1.00$\pm$0.00  &  1.00$\pm$0.00  &  1.00$\pm$0.00  &  1.00$\pm$0.00  &  0.99$\pm$0.05  &  1.00$\pm$0.00 \\
 0.5  &  0.98$\pm$0.06  &  1.00$\pm$0.00  &  0.98$\pm$0.05  &  1.00$\pm$0.00  &  1.00$\pm$0.00  &  0.98$\pm$0.05  &  1.00$\pm$0.00 \\
 1.0  &  0.98$\pm$0.05  &  0.99$\pm$0.01  &  0.99$\pm$0.00  &  0.99$\pm$0.01  &  0.99$\pm$0.00  &  0.98$\pm$0.05  &  0.99$\pm$0.01 \\
 1.5  &  0.97$\pm$0.05  &  0.99$\pm$0.00  &  0.98$\pm$0.05  &  0.99$\pm$0.01  &  0.99$\pm$0.01  &  0.97$\pm$0.05  &  0.99$\pm$0.01 \\
 2.0 &  0.97$\pm$0.05 &  0.99$\pm$0.01 &  0.98$\pm$0.01 &  0.98$\pm$0.01 &  0.98$\pm$0.01 &  0.97$\pm$0.05 &  0.98$\pm$0.01\\
\bottomrule
    \end{tabular}
    \label{tab:linear_results_pr_more_data}
\vspace{-1mm}
\end{table}

As seen in both Table \ref{tab:linear_results_pr}
and Table \ref{tab:linear_results_pr_more_data}, there are instances where results are reported with high variance (particularly in 5D and 30D data).
These instances of high variability of results suggest that the equality separator could be sensitive to initialization.
Such variability would be decreased in a regular least-squares linear regression because gradient descent can be run on the convex mean-squared error loss and without the need for including regularization of the negative examples.
However, the regularization in the equality separator to have negative examples further away from the hyperplane means that the equality separator is able to use negative examples and is useful in deep methods for supervised anomaly detection.

\paragraph{Logistic loss}
To compare our results with MSE, we also use LL and report results in Table \ref{tab:linear_results_pr_bce} for constant 100 data samples across experiments and Table \ref{tab:linear_results_pr_more_data_bce} for the same linear increase in amount of data as explored in the MSE case.
For constant data, we also observe that LL produces performance that improves (higher mean and smaller variance in AUPR) as the number of dimensions increases.
On the other hand, the same linear increase in data as the MSE case does not increase the mean AUPR for equality separators optimized with LL, but only decreases the variance of the AUPR.
This simple comparison hints at the different usages of different loss functions.

As suggested in Appendix \ref{appendix:loss_fn}, LL is more sensitive to wrong classifications, so it performs worse as the overlap between the positive and negative class increases (ie. as the noise increases).
Since noise is sparser in higher dimension for constant data settings, there is less overlap and LL performs better as the dimensions increase
(a sample visualization is shown in Figure \ref{fig:dim35log_and_mseloss}).
Since the hyperplane is overdetermined with the data, finding the hyperplane is not much of a problem.
However, when the amount of data increases, the amount of noise also increases.
Since LL is more sensitive than MSE to the overlap between the positive and negative classes, the higher density of noise causes LL to degrade in performance.

\begin{table}[]
\setlength{\tabcolsep}{4pt}
    \centering
    \caption{Logistic loss optimization with constant amount of data for linear AD of linearly inseparable data with varying standard deviation (Std) of noise and data dimension (Dim).
    No noise is denoted by 0.0 std.
    Constant $100$ data samples.
    AUPR is reported.\\}
    \vspace{\baselineskip}
    \begin{tabular}{cccccccc}
\toprule
Dim$\backslash$Std &5 &10 &15 &20 &25 &30 &35 \\
\midrule
 0.0 & 0.95$\pm$0.06 & 0.98$\pm$0.01 & 0.98$\pm$0.01 & 0.99$\pm$0.0 & 0.99$\pm$0.01 & 0.98$\pm$0.05 & 1.00$\pm$0.00 \\
 0.5 & 0.96$\pm$0.05 & 0.98$\pm$0.01 & 0.98$\pm$0.01 & 0.99$\pm$0.00 & 0.99$\pm$0.01 & 0.98$\pm$0.05 & 1.00$\pm$0.00 \\
 1.0 & 0.95$\pm$0.05 & 0.98$\pm$0.02 & 0.98$\pm$0.01 & 0.99$\pm$0.0 & 0.99$\pm$0.01 & 0.98$\pm$0.05 & 1.00$\pm$0.00 \\
 1.5 & 0.94$\pm$0.05 & 0.97$\pm$0.02 & 0.97$\pm$0.01 & 0.98$\pm$0.01 & 0.99$\pm$0.01 & 0.98$\pm$0.05 & 1.00$\pm$0.00 \\
 2.0& 0.93$\pm$0.05& 0.96$\pm$0.03& 0.97$\pm$0.01& 0.98$\pm$0.01& 0.98$\pm$0.01& 0.97$\pm$0.05& 0.99$\pm$0.01\\
\bottomrule
    \end{tabular}
    \label{tab:linear_results_pr_bce}
\end{table}

\begin{table}[]
\setlength{\tabcolsep}{4pt}
    \centering
    \caption{Logistic loss optimization with number of samples $20$ times the number of dimensions, done for linear AD of linearly inseparable data with varying standard deviation (Std) of noise and data dimension (Dim).
    No noise is denoted by 0.0 std.
    Number of data samples is $20$ times the number of dimensions.
    AUPR is reported.\\}
    \vspace{\baselineskip}
    \begin{tabular}{cccccccc}
\toprule
Std$\backslash$Dim & 5 & 10 & 15 & 20 & 25 & 30 & 35\\
\midrule
 0.0  &  0.95$\pm$0.06  &  0.96$\pm$0.01  &  0.96$\pm$0.01  &  0.96$\pm$0.01  &  0.96$\pm$0.01  &  0.96$\pm$0.01  &  0.96$\pm$0.01 \\
 0.5  &  0.96$\pm$0.05  &  0.96$\pm$0.02  &  0.95$\pm$0.01  &  0.96$\pm$0.01  &  0.96$\pm$0.01  &  0.95$\pm$0.01  &  0.96$\pm$0.01 \\
 1.0  &  0.95$\pm$0.05  &  0.96$\pm$0.02  &  0.95$\pm$0.01  &  0.96$\pm$0.02  &  0.96$\pm$0.01  &  0.96$\pm$0.01  &  0.95$\pm$0.01 \\
 1.5  &  0.94$\pm$0.05  &  0.95$\pm$0.02  &  0.95$\pm$0.02  &  0.95$\pm$0.01  &  0.95$\pm$0.01  &  0.95$\pm$0.01  &  0.95$\pm$0.02 \\
 2.0 &  0.93$\pm$0.05 &  0.95$\pm$0.02 &  0.94$\pm$0.02 &  0.95$\pm$0.01 &  0.95$\pm$0.02 &  0.94$\pm$0.02 &  0.94$\pm$0.01\\
\bottomrule
    \end{tabular}
    \label{tab:linear_results_pr_more_data_bce}
\end{table}

\begin{figure}
    \centering
    \begin{subfigure}[t]{0.49\linewidth}
    \centering
    \includegraphics[width=\textwidth]{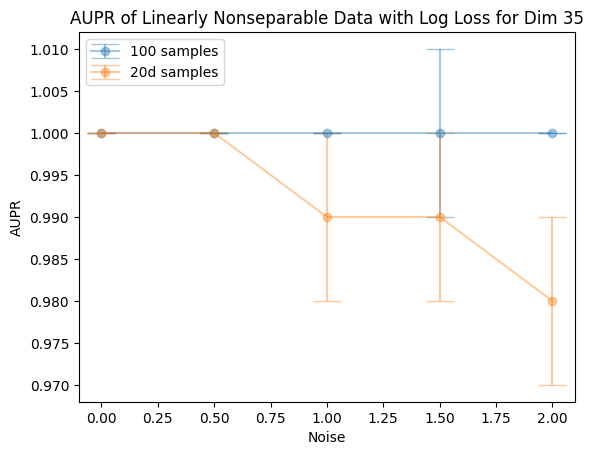}
    \caption{Logistic Loss}
    \label{fig:dim35logloss}        
    \end{subfigure}
    \begin{subfigure}[t]{0.49\linewidth}
    \centering
    \includegraphics[width=\textwidth]{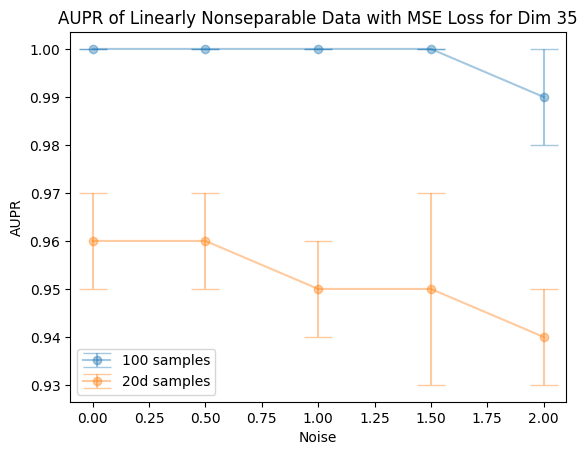}
    \caption{MSE Loss}
    \label{fig:dim35mseloss}        
    \end{subfigure}
    \caption{AUPR of equality separators on 35D linearly nonseparable data across varying noise levels, optimized either with logistic or MSE loss. Each plot shows the AUPR for the 2 cases of keeping the sample size constant or linearly growing with the dimension.
    Logistic loss is the most affected by overlap between the positive and negative class, which increases with noise and decreases with dimension (note the different $y$-axis scales).}
    \label{fig:dim35log_and_mseloss}     
\end{figure}

\subsubsection{Linearly separable data}
\label{appendix:linearly_separable_data}
We report the details of the experiments done for linearly separable data.
As mentioned, the each of the two classes are sampled from a 2D multivariate Gaussians.
The positive class follows
\begin{equation*}
\mathbf x_{positive} \sim \mathcal{MVN}\left(
\begin{bmatrix}
1\\
1
\end{bmatrix}
,
k
\begin{bmatrix}
0.2 & 0.1\\
0.1 & 0.2
\end{bmatrix}\right
)
\end{equation*}
while the negative class follows
\begin{equation*}
\mathbf x_{negative} \sim \mathcal{MVN}\left(
\begin{bmatrix}
-1\\
-1
\end{bmatrix}
,
k
\begin{bmatrix}
0.2 & 0.1\\
0.1 & 0.2
\end{bmatrix}\right
)
\end{equation*}
where $k$ is the \textit{noise multiplier}, a scalar that determines the spread of each class (and hence, the overlap between both classes).
There is a 0.9:0.1 P:N imbalance and a total of $100$ data samples for both training and testing datasets.

\paragraph{SVM}
To implement the SVM, we use the $\mathrm{sklearn.SVM.SVC}$ support vector classifier class in the scikit-learn package.
We specify a linear kernel and balanced class weight in its argument, and the prediction probability is based on an internal 5-fold cross validation.\footnote{https://scikit-learn.org/stable/modules/generated/sklearn.svm.SVC.html}

\paragraph{Equality Separator}
We optimize over mean-squared error as in Equation \ref{eqn:bump_activation} where the hypothesis class is the set of affine predictors $A_2$ and the hyperparameters $\mu=0$ and $\sigma=10$ are fixed.
The gradient-based optimizer used is BFGS, and the parameters are initialized to a standard multivariate normal distribution.
We use the $\mathrm{minimize}$ function in the $\mathrm{scipy.optimize}$ package where the stopping criteria is when the step size is $0$.

Each trial is repeated a total of $20$ times by generating a new dataset by re-sampling from the same multivariate Gaussians and running optimization again, and we report the mean and standard deviation of the results.

\subsubsection{Study on sensitivity of $\sigma$}
\label{section:vary_sigma}

We repeat experiments done in Section \ref{section:halfspace_problem} on linear separable data, but vary the value of $\sigma$ in the bump activation (Equation \ref{eqn:bump_activation}) to $0.1, 0.5, 5$ and $10$.
We report results in Table \ref{tab:aupr_gaussian_linear_mse}.

Equality separators with $\sigma\geq 0.5$ have competitive performance with SVMs, and results are mostly within 1 standard deviation away from each other across the equality separators with $\sigma$ values of $0.5, 5$ and $10$ (Figure \ref{fig:aupr_gaussian_linear_mse}).
Equality separators with a small $\sigma$ at $0.1$ have slightly worse performance in the low noise settings, with a lower mean and higher variance for its AUPR, but quickly become competitive with SVMs a noise multiplier of 2.0 onwards.
These good results for large $\sigma$ suggest that large $\sigma$ values decrease the sensitivity to randomness, such as the initialization of the weights in the equality separator.
These empirical results corroborate with our comment in Appendix \ref{section:sigma_role} about vanishing gradients with small $\sigma$ values impeding effective gradient-based optimization.
However, larger $\sigma$ values widen the bump of the Gaussian, which increases the margin for error and may cause predictions to be less informative.

Additionally, for all values of $\sigma$, equality separation is still quite robust at high noise levels, obtaining a slightly lower AUPR variance while maintaining the same mean AUPR as SVM.
Lower variance in AUPR suggests that equality separators can potentially be less sensitive to noise than SVMs.
The decreased sensitivity could be due to the equality separator modelling the hyperplane of the normal class (positive class) rather than the separating hyperplane, capitalizing on the class with more data.
Hence, equality separation in class-imbalanced settings can produce competitive results with SVMs, especially when the overlap between the positive and negative class increases.

\begin{table}[]
    \centering
    \caption{AUPR for 2D Gaussian separation by SVM and our equality separator (ES), with varying noise multiplier (NM) optimized with mean-squared error. 
    The value of $\sigma$ in the bump activation is varied in the ES to illustrate its effect.\\}
    \begin{tabular}{cccccc}
\toprule
NM & SVM & ES ($\sigma=0.1$) & ES ($\sigma=0.5$) & ES ($\sigma=5.0$) & ES ($\sigma=10.0$)\\
\midrule
1.0 & 1.00$\pm$0.0
 & 0.96$\pm$0.08 & 1.00$\pm$0.00 & 1.00$\pm$0.00 & 1.00$\pm$0.00\\
1.5 & 1.00$\pm$0.0
 & 0.99$\pm$0.04 & 1.00$\pm$0.00 & 1.00$\pm$0.00 & 1.00$\pm$0.00\\
2.0 & 1.00$\pm$0.0
 & 1.00$\pm$0.00 & 1.00$\pm$0.00 & 1.00$\pm$0.00 & 1.00$\pm$0.00\\
2.5 & 1.00$\pm$0.0
 & 1.00$\pm$0.00 & 1.00$\pm$0.00 & 1.00$\pm$0.00 & 1.00$\pm$0.00\\
3.0 & 1.00$\pm$0.0
 & 1.00$\pm$0.00 & 1.00$\pm$0.00 & 1.00$\pm$0.00 & 1.00$\pm$0.00\\
3.5 & 1.00$\pm$0.0
 & 1.00$\pm$0.01 & 1.00$\pm$0.01 & 1.00$\pm$0.01 & 1.00$\pm$0.01\\
4.0 & 1.00$\pm$0.0
 & 1.00$\pm$0.00 & 1.00$\pm$0.00 & 1.00$\pm$0.00 & 1.00$\pm$0.00\\
4.5 & 0.99$\pm$0.0
 & 0.99$\pm$0.00 & 0.99$\pm$0.00 & 0.99$\pm$0.00 & 0.99$\pm$0.00\\
5.0 & 0.99$\pm$0.0
 & 0.99$\pm$0.00 & 0.99$\pm$0.00 & 0.99$\pm$0.00 & 0.99$\pm$0.00\\
5.5 & 0.99$\pm$0.0
 & 0.99$\pm$0.01 & 0.99$\pm$0.01 & 0.99$\pm$0.01 & 0.99$\pm$0.01\\
6.0 & 0.99$\pm$0.01
 & 0.99$\pm$0.01 & 0.99$\pm$0.01 & 0.99$\pm$0.01 & 0.99$\pm$0.01\\
6.5 & 0.99$\pm$0.01
 & 0.99$\pm$0.01 & 0.99$\pm$0.01 & 0.99$\pm$0.01 & 0.99$\pm$0.01\\
7.0 & 0.99$\pm$0.01
 & 0.99$\pm$0.01 & 0.99$\pm$0.01 & 0.99$\pm$0.01 & 0.99$\pm$0.01\\
7.5 & 0.99$\pm$0.01
 & 0.98$\pm$0.01 & 0.98$\pm$0.01 & 0.98$\pm$0.01 & 0.98$\pm$0.01\\
8.0 & 0.99$\pm$0.01
 & 0.98$\pm$0.01 & 0.98$\pm$0.01 & 0.98$\pm$0.01 & 0.98$\pm$0.01\\
8.5 & 0.99$\pm$0.01
 & 0.98$\pm$0.02 & 0.98$\pm$0.02 & 0.98$\pm$0.02 & 0.98$\pm$0.02\\
9.0 & 0.99$\pm$0.01
 & 0.98$\pm$0.02 & 0.98$\pm$0.02 & 0.98$\pm$0.02 & 0.98$\pm$0.02\\
9.5 & 0.98$\pm$0.01
 & 0.98$\pm$0.01 & 0.98$\pm$0.01 & 0.98$\pm$0.01 & 0.98$\pm$0.01\\
10.0 & 0.98$\pm$0.01
 & 0.98$\pm$0.01 & 0.98$\pm$0.01 & 0.98$\pm$0.01 & 0.98$\pm$0.01\\
10.5 & 0.98$\pm$0.01
 & 0.98$\pm$0.01 & 0.98$\pm$0.01 & 0.98$\pm$0.01 & 0.98$\pm$0.01\\
11.0 & 0.98$\pm$0.01
 & 0.98$\pm$0.01 & 0.98$\pm$0.01 & 0.98$\pm$0.01 & 0.98$\pm$0.01\\
11.5 & 0.98$\pm$0.01
 & 0.98$\pm$0.01 & 0.98$\pm$0.01 & 0.98$\pm$0.01 & 0.98$\pm$0.01\\
12.0 & 0.98$\pm$0.01
 & 0.98$\pm$0.02 & 0.98$\pm$0.02 & 0.98$\pm$0.02 & 0.98$\pm$0.02\\
12.5 & 0.98$\pm$0.01
 & 0.97$\pm$0.02 & 0.97$\pm$0.02 & 0.97$\pm$0.02 & 0.97$\pm$0.02\\
13.0 & 0.97$\pm$0.03
 & 0.97$\pm$0.02 & 0.97$\pm$0.02 & 0.97$\pm$0.02 & 0.97$\pm$0.02\\
13.5 & 0.97$\pm$0.03
 & 0.97$\pm$0.02 & 0.97$\pm$0.02 & 0.97$\pm$0.02 & 0.97$\pm$0.02\\
14.0 & 0.97$\pm$0.03
 & 0.97$\pm$0.02 & 0.97$\pm$0.02 & 0.97$\pm$0.02 & 0.97$\pm$0.02\\
14.5 & 0.98$\pm$0.01
 & 0.97$\pm$0.01 & 0.97$\pm$0.01 & 0.97$\pm$0.01 & 0.97$\pm$0.01\\
15.0 & 0.98$\pm$0.01
 & 0.97$\pm$0.01 & 0.97$\pm$0.01 & 0.97$\pm$0.01 & 0.97$\pm$0.01\\
15.5 & 0.97$\pm$0.03
 & 0.97$\pm$0.01 & 0.97$\pm$0.01 & 0.97$\pm$0.01 & 0.97$\pm$0.01\\
16.0 & 0.97$\pm$0.01
 & 0.97$\pm$0.01 & 0.97$\pm$0.01 & 0.97$\pm$0.01 & 0.97$\pm$0.01\\
16.5 & 0.97$\pm$0.01
 & 0.97$\pm$0.02 & 0.97$\pm$0.02 & 0.97$\pm$0.02 & 0.97$\pm$0.02\\
17.0 & 0.97$\pm$0.03
 & 0.97$\pm$0.01 & 0.97$\pm$0.01 & 0.97$\pm$0.01 & 0.97$\pm$0.01\\
17.5 & 0.97$\pm$0.03
 & 0.97$\pm$0.01 & 0.97$\pm$0.01 & 0.97$\pm$0.01 & 0.97$\pm$0.01\\
18.0 & 0.97$\pm$0.03
 & 0.97$\pm$0.01 & 0.97$\pm$0.01 & 0.97$\pm$0.01 & 0.97$\pm$0.01\\
18.5 & 0.97$\pm$0.03
 & 0.97$\pm$0.01 & 0.97$\pm$0.01 & 0.97$\pm$0.01 & 0.97$\pm$0.01\\
19.0 & 0.97$\pm$0.03
 & 0.97$\pm$0.01 & 0.97$\pm$0.01 & 0.97$\pm$0.01 & 0.97$\pm$0.01\\
19.5 & 0.97$\pm$0.03
 & 0.97$\pm$0.01 & 0.97$\pm$0.01 & 0.97$\pm$0.01 & 0.97$\pm$0.01\\
20.0 & 0.97$\pm$0.03
 & 0.97$\pm$0.01 & 0.97$\pm$0.01 & 0.97$\pm$0.01 & 0.97$\pm$0.01\\
\bottomrule
    \end{tabular}
    \label{tab:aupr_gaussian_linear_mse}
\end{table}

\paragraph{Logistic loss}
We perform the same experiments on linearly separable data, but change the loss function to LL.
We report results in Table \ref{tab:aupr_gaussian_linear_bce} and provide a visualization in Figure \ref{fig:aupr_gaussian_linear_bce}.

Training on LL is generally slightly less competitive than training with MSE, although not significantly less.
With an increase in overlap between the positive and negative classes, there is generally a slightly faster decrease in mean AUPR, with a higher AUPR variance.
As explained in Appendix \ref{appendix:loss_fn}, LL does not handle overlap between classes well, leading to a significant decrease in performance.
Interestingly enough, smaller $\sigma$ values seem to achieve better performance with LL.
Smaller $\sigma$ values produce stricter decision boundaries, similar to LL compared to MSE, which suggest how loss functions and $\sigma$ values may pair well together.

\begin{figure}
    \centering
    \begin{subfigure}[b]{0.45\linewidth}
    \centering
    \includegraphics[width=0.99\linewidth, keepaspectratio]{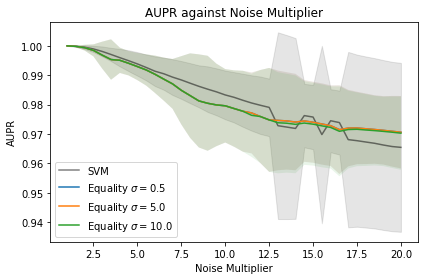}
    \caption{Mean-squared error optimization. $\sigma=0.1$ omitted for clearer visualizations.}
    \label{fig:aupr_gaussian_linear_mse}
    \end{subfigure}
    \hspace{0.05\linewidth}
    \begin{subfigure}[b]{0.45\linewidth}
    \centering
    \includegraphics[width=0.99\linewidth, keepaspectratio]{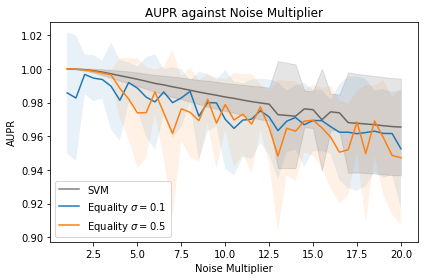}
    \caption{Logistic loss optimization. $\sigma= 5.0, 10.0$ omitted for clearer visualizations.}
    \label{fig:aupr_gaussian_linear_bce}
    \end{subfigure}
    \caption{AUPR for 2D Gaussian separation by SVM and equality separator under increasing noise and $\sigma$ in its bump activation. Mean-squared error and logistic loss optimization are shown.}
    \label{fig:my_label}
\end{figure}

\begin{table}[]
    \centering
    \caption{AUPR for 2D Gaussian separation by SVM and our equality separator (ES), with varying noise multiplier (NM) optimized with logistic loss. 
    The value of $\sigma$ in the bump activation is varied in the ES to illustrate its effect.\\}
    \begin{tabular}{cccccc}
\toprule
NM & SVM & ES ($\sigma=0.1$) & ES ($\sigma=0.5$) & ES ($\sigma=5.0$) & ES ($\sigma=10.0$)\\
\midrule
1.0 & 1.00$\pm$0.00
 & 0.99$\pm$0.04 & 1.00$\pm$0.00 & 1.00$\pm$0.00 & 1.00$\pm$0.00\\
1.5 & 1.00$\pm$0.00
 & 0.98$\pm$0.04 & 1.00$\pm$0.00& 1.00$\pm$0.00& 0.99$\pm$0.03\\
2.0 & 1.00$\pm$0.00
 & 1.00$\pm$0.01 & 1.00$\pm$0.00& 1.00$\pm$0.00& 0.99$\pm$0.02\\
2.5 & 1.00$\pm$0.00
 & 0.99$\pm$0.01 & 1.00$\pm$0.00& 0.99$\pm$0.02 & 1.00$\pm$0.01\\
3.0 & 1.00$\pm$0.00
 & 0.99$\pm$0.01 & 1.00$\pm$0.00& 0.99$\pm$0.02 & 0.99$\pm$0.02\\
3.5 & 1.00$\pm$0.00
 & 0.99$\pm$0.03 & 1.00$\pm$0.00& 0.98$\pm$0.02 & 0.98$\pm$0.02\\
4.0 & 1.00$\pm$0.00
 & 0.98$\pm$0.02 & 0.99$\pm$0.02 & 0.98$\pm$0.02 & 0.97$\pm$0.03\\
4.5 & 0.99$\pm$0.00
 & 0.99$\pm$0.01 & 0.98$\pm$0.02 & 0.98$\pm$0.02 & 0.97$\pm$0.04\\
5.0 & 0.99$\pm$0.00
 & 0.99$\pm$0.01 & 0.97$\pm$0.03 & 0.97$\pm$0.03 & 0.97$\pm$0.03\\
5.5 & 0.99$\pm$0.00
 & 0.98$\pm$0.02 & 0.97$\pm$0.03 & 0.97$\pm$0.03 & 0.97$\pm$0.03\\
6.0 & 0.99$\pm$0.01
 & 0.98$\pm$0.03 & 0.99$\pm$0.01 & 0.96$\pm$0.03 & 0.96$\pm$0.03\\
6.5 & 0.99$\pm$0.01
 & 0.99$\pm$0.01 & 0.97$\pm$0.03 & 0.96$\pm$0.03 & 0.95$\pm$0.04\\
7.0 & 0.99$\pm$0.01
 & 0.98$\pm$0.02 & 0.96$\pm$0.05 & 0.97$\pm$0.03 & 0.95$\pm$0.03\\
7.5 & 0.99$\pm$0.01
 & 0.98$\pm$0.02 & 0.98$\pm$0.02 & 0.97$\pm$0.02 & 0.95$\pm$0.03\\
8.0 & 0.99$\pm$0.01
 & 0.99$\pm$0.01 & 0.97$\pm$0.03 & 0.97$\pm$0.02 & 0.95$\pm$0.03\\
8.5 & 0.99$\pm$0.01
 & 0.97$\pm$0.03 & 0.97$\pm$0.02 & 0.96$\pm$0.03 & 0.94$\pm$0.04\\
9.0 & 0.99$\pm$0.01
 & 0.98$\pm$0.02 & 0.98$\pm$0.01 & 0.97$\pm$0.03 & 0.95$\pm$0.04\\
9.5 & 0.98$\pm$0.01
 & 0.98$\pm$0.01 & 0.97$\pm$0.03 & 0.96$\pm$0.04 & 0.95$\pm$0.03\\
10.0 & 0.98$\pm$0.01
 & 0.97$\pm$0.03 & 0.98$\pm$0.01 & 0.96$\pm$0.03 & 0.95$\pm$0.03\\
10.5 & 0.98$\pm$0.01
 & 0.96$\pm$0.03 & 0.97$\pm$0.03 & 0.95$\pm$0.04 & 0.95$\pm$0.03\\
11.0 & 0.98$\pm$0.01
 & 0.97$\pm$0.02 & 0.97$\pm$0.02 & 0.95$\pm$0.03 & 0.95$\pm$0.03\\
11.5 & 0.98$\pm$0.01
 & 0.97$\pm$0.02 & 0.97$\pm$0.03 & 0.95$\pm$0.04 & 0.95$\pm$0.03\\
12.0 & 0.98$\pm$0.01
 & 0.98$\pm$0.02 & 0.98$\pm$0.01 & 0.95$\pm$0.03 & 0.94$\pm$0.03\\
12.5 & 0.98$\pm$0.01
 & 0.97$\pm$0.02 & 0.96$\pm$0.02 & 0.95$\pm$0.03 & 0.94$\pm$0.03\\
13.0 & 0.97$\pm$0.03
 & 0.96$\pm$0.03 & 0.95$\pm$0.05 & 0.95$\pm$0.03 & 0.95$\pm$0.03\\
13.5 & 0.97$\pm$0.03
 & 0.97$\pm$0.02 & 0.96$\pm$0.03 & 0.95$\pm$0.03 & 0.95$\pm$0.03\\
14.0 & 0.97$\pm$0.03
 & 0.97$\pm$0.02 & 0.96$\pm$0.03 & 0.95$\pm$0.03 & 0.95$\pm$0.03\\
14.5 & 0.98$\pm$0.01
 & 0.97$\pm$0.02 & 0.97$\pm$0.02 & 0.95$\pm$0.03 & 0.95$\pm$0.03\\
15.0 & 0.98$\pm$0.01
 & 0.97$\pm$0.02 & 0.97$\pm$0.02 & 0.95$\pm$0.03 & 0.95$\pm$0.03\\
15.5 & 0.97$\pm$0.03
 & 0.97$\pm$0.02 & 0.97$\pm$0.02 & 0.95$\pm$0.03 & 0.94$\pm$0.03\\
16.0 & 0.97$\pm$0.01
 & 0.97$\pm$0.02 & 0.96$\pm$0.03 & 0.94$\pm$0.04 & 0.94$\pm$0.03\\
16.5 & 0.97$\pm$0.01
 & 0.96$\pm$0.03 & 0.95$\pm$0.03 & 0.94$\pm$0.03 & 0.95$\pm$0.03\\
17.0 & 0.97$\pm$0.03
 & 0.96$\pm$0.03 & 0.95$\pm$0.03 & 0.95$\pm$0.03 & 0.95$\pm$0.03\\
17.5 & 0.97$\pm$0.03
 & 0.96$\pm$0.03 & 0.97$\pm$0.02 & 0.95$\pm$0.03 & 0.95$\pm$0.03\\
18.0 & 0.97$\pm$0.03
 & 0.96$\pm$0.03 & 0.95$\pm$0.04 & 0.95$\pm$0.03 & 0.94$\pm$0.03\\
18.5 & 0.97$\pm$0.03
 & 0.96$\pm$0.02 & 0.97$\pm$0.02 & 0.95$\pm$0.03 & 0.94$\pm$0.03\\
19.0 & 0.97$\pm$0.03
 & 0.96$\pm$0.03 & 0.96$\pm$0.03 & 0.95$\pm$0.03 & 0.95$\pm$0.03\\
19.5 & 0.97$\pm$0.03
 & 0.96$\pm$0.02 & 0.95$\pm$0.04 & 0.95$\pm$0.03 & 0.94$\pm$0.03\\
20.0 & 0.97$\pm$0.03
 & 0.95$\pm$0.04 & 0.95$\pm$0.04 & 0.95$\pm$0.03 & 0.94$\pm$0.03\\
\bottomrule
    \end{tabular}
    \label{tab:aupr_gaussian_linear_bce}
\end{table}

\paragraph{Varying dimensionality of linearly separable data}
To understand the effect of increasing dimensionality, we also vary the dimensionality $d$ of the multivariate Gaussian.
For simplicity, we generate data from zero mean isotropic Gaussians with covariance $0.1k \cdot I_d$ for noise multiplier $k$.
We test $d=5, 10, 15, 20, 25, 30$.
All other details remain the same.
As informed by our experiments on varying $\sigma$ for different loss functions, we test on MSE optimization with $\sigma=10$ and LL optimization with $\sigma=0.5$.

In general, equality separators are not significantly worse than SVMs, maintaining close to perfect AUPR above 0.99 mean AUPR.
As we have observed previously, MSE optimization produces better results (higher mean AUPR with lower variance) in lower dimensions while LL optimization produces better results in higher dimensions.
Figure \ref{fig:30dGaussians} is a sample visualization of the comparison between the performance of equality separators and SVMs at $d=30$.

\begin{figure}[]
    \centering
    \begin{subfigure}{0.45\linewidth}
    \includegraphics[width=1\linewidth]{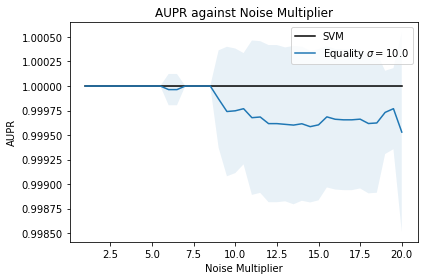}
    \caption{Mean-squared error optimization with $\sigma=10$.}
    \label{fig:30dG_mse}
    \end{subfigure}
    \hspace{0.05\linewidth}
    \begin{subfigure}{0.45\linewidth}
    \includegraphics[width=1\linewidth]{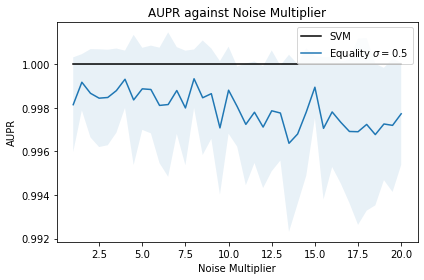}
    \caption{Logistic loss optimization with $\sigma=0.5$.}
    \label{fig:30dG_ll}
    \end{subfigure}
    \caption{AUPR of equality separators and SVMs on two linearly separable 30D multivariate Gaussians.}
    \label{fig:30dGaussians}
\end{figure}

\subsection{Non-linear anomaly detection: synthetic data}
\label{appendix:non_linear_ad}
We report additional details of the experiments done in Section \ref{section:nonlinear_ad}.
Experiments are repeated 20 times by re-initializing models and training them on the dataset.
By using the same dataset but re-initializing the models, we can observe each model's sensitivity to initializations and optimization methods.
We perform experiments with a relatively low positive to negative class imbalance of 0.75:0.25 during train and test time to more prominently compare the difference between models.

As a note, we refer to the normal class as the positive class in the text because it corresponds to a neuronal activation.
However, when calculating results, we follow the literature and refer to anomalies as the positive class.
Since we focus on comparing between the \textit{separation} between normal data and anomalies across methods, these details do not affect the analysis.

\paragraph{Shallow models}
Shallow models are trained using the $\mathrm{sklearn}$ library and the boosted trees from the $\mathrm{XGBoost}$ library\footnote{https://xgboost.readthedocs.io/en/stable/python/python\_api.html}.
Hyperparameters are picked using 10-fold stratified cross validation under logistic loss on the validation dataset.
We list the hyperparameters that were chosen.
If a hyperpameter is not listed, we used the default ones from the respective libraries.
\begin{itemize}
\item Decision trees: Split with random split and half of the features, maximum depth of the tree at 8, no maximum leaf nodes, minimum number of samples required to split an internal node and to be at a leaf node at 2 and 1 respectively, and no class weight.
\item Random forests: Split with half of the features, 50 trees, no maximum depth of a tree, no maximum leaf nodes, minimum number of samples required to split an internal node and to be at a leaf node at 2 and 1 respectively, and no class weight.
\item XGBoost (extreme gradient boosted trees): 50 trees, 0 logistic loss reduction required for further splits, $\alpha=0.1$ for $\ell_1$ regularization, $\lambda=0.01$ for $\ell_2$ regularization, balanced class weight and subsampling ratio of 0.6.
\item Logistic regression with RBF kernel: $C=1/\lambda=0.01$ regularization parameter with $\ell_2$ penalty and no class weight.
\item SVM with RBF kernel: $C=1/\lambda=0.1$ regularization parameter with $\ell_2$ penalty and no class weight.
\item OCSVM with RBF kernel: The upper bound set on the fraction of training errors is 0.0001, the kernel coefficient is the inverse of the product of the number of features (2) and the variance of the data.
\item Isolation forest: 50 trees that use all samples and all features and an automatic detection of the contamination ratio based in the original paper \citep{IsolationForest}.
\item Local outlier factor: 5 neighbours used with a leaf size of 30 and an automatic detection of the contamination ratio based in the original paper \citep{LOF}.
\end{itemize}

For unsupervised AD, there are two possible ways to use the labels:
we can either feed all the data into the algorithm (and include information of the ratio of anomalies to normal data when possible, such as for OCSVM), or we can just feed in normal data into the algorithm and remove the anomalies (because unsupervised methods assume that most/all data fed in is normal).
We noticed that the first method gives us better results, and we report these.

For equality separators, we report the results of varying the loss function (LL or MSE) and the value of $\sigma$ ($0.1, 0.5, 5, 10$) in Table \ref{tab:nonlinear_ad_ES}.
Equality separators optimized with LL performed better with lower values of $\sigma$, while those optimized with MSE performed better with greater values of $\sigma$.
When $\sigma=0.1$, the equality separators had a tendency of underfitting, scoring an lower average AUPR on train data than that of test data.
The underfitting phenomenon at this low $\sigma$ value corroborates with our observation of the sensitivity of initialization with smaller $\sigma$ values to prevent vanishing gradients. 

\begin{table}[]
    \centering
    \caption{Test AUPR for equality separator with RBF kernel for non-linear anomaly detection. We vary the loss function (mean-squared error or logistic loss) and the value of $\sigma$ ($0.1, 0.5, 5, 10$)\\}
    \begin{tabular}{ccccc}
    \toprule
    Loss$\backslash\sigma$ & 0.1 & 0.5 & 5 & 10  \\
    \midrule
    MSE     & 0.98$\pm$0.04 &  1.00$\pm$0.01 &  \textbf{1.00$\pm$0.00}  & \textbf{1.00$\pm$0.00} \\
    LL      & 0.99$\pm$0.03 & \textbf{1.00$\pm$0.00} &  \textbf{1.00$\pm$0.00}  &  1.00$\pm$0.01 \\
    \bottomrule
    \end{tabular}
    \label{tab:nonlinear_ad_ES}
\end{table}
 
\paragraph{Neural networks}
NNs used are fully connected feed-forward NNs.
Below are the details of the NNs:
\begin{enumerate}
    \item NNs are trained using TensorFlow.\footnote{https://www.tensorflow.org/} All parameters are assumed to be default unless stated below.
    \item We use 5 neurons per hidden layer, limiting the chance of overfitting in this low data regime. Not decreasing the number of neurons is also in line with our analysis of closing numbers.
    \item NNs with leaky ReLU activations in their hidden layers have the leaky ReLU with parameter $0.01$. Leaky ReLU is chosen over the conventional ReLU because mappings that are more local (such as the bump activation) are more sensitive to initializations \citep{nonmonotonic_activations_mlp_thesis}, and backpropagation with leaky ReLU will not suffer from dead neurons like ReLU would.
    Since we are creating relatively small NNs, the impact of dead neurons may potentially be large.
    The parameter is chosen to be relatively small at 0.01 to simulate ReLU more closely.
    \item NNs that have bump activations in their hidden layers
    are initialized with $\sigma=0.5$, which is not trainable by default. 
    \item NNs with RBF activations in their hidden layers or output layer have a fixed weight of 1.0 with learnable centers that are initialized with Glorot uniform initializer.
    Fixed weights with learnable centers make a fair comparison so that NNs will have the same number of parameters.%
    \footnote{Note that each neuron from a regular dense layer will have a total $d+1$ parameters in the affine scaling $\mathbf w^T \mathbf{z} +b$ for $\mathbf{w}, \mathbf{z} \in \mathbb R^d$, but the distance from the hyperplane is determined by the degrees of freedom, which will still be $d$. 
    Meanwhile, the center parameter in an RBF neuron is in the same space as the input $\mathbf z$, so it is $d$-dimensional.
    Hence, we view the number of parameters in a single RBF layer and a single fully connected layer to be the same.}
    \item Equality separators have an output of a bump activation with $\sigma=0.5$ that is fixed at $\sigma=0.5$ by default and have a default loss function of binary cross entropy (ie. logistic loss in this binary classification problem) that can also be switched out for mean-squared error, while halfspace separators have an sigmoid output activation and have a loss function of binary cross entropy.
    \item Weights are initialized with the default Glorot uniform initializer.
    \item NNs are trained with the Adam optimizer under a constant learning rate of $0.001$.
    \item Training is done for 1000 epochs with early stopping. Early stopping monitors validation loss with a patience of 10 and best weights are restored. Validation split is 0.1.
\end{enumerate}

\paragraph{Results}
In addition to the results presented in the main paper, we include results for equality separators optimized with MSE and equality separators with bump activations in their hidden layers with learnable $\sigma$.
AUPRs of the test dataset are reported in Table \ref{tab:supervised_ad_full_2layers}.
From our results, equality separators with RBF activations perform the best with a perfect AUPR during train and test times, regardless whether they were optimized under LL or MSE.

\paragraph{Separation between normal and anomalous data}
Furthermore, as previously discussed, we observe a larger separation between the positive class and the negative test class for these equality separators with RBF activations optimized under LL.
Looking at individual models, about half of these equality separators achieved high separation between the normal and anomaly class, where the difference between the mean output of normal data and anomalies during testing is at least 0.30 (Figure \ref{fig:AD_heatmap_ES2r}).
In the penultimate layer (``logit space''), this separation corresponds to normal data being a distance within about 0.05 away from the hyperplane, while anomalies are a distance of about 0.42 to 0.60 away.
Even though the output of anomalies is still relatively high (above 0.5), from the equality separation perspective, these anomalies are more than 8 times further than normal data, so setting reliable thresholds is more straightforward.

The difference between the means of the positive class and negative class during test time was $0.27\pm0.11$ for equality separators with RBF activations optimized under LL, compared to $0.16\pm0.11$ for those optimized under MSE.
In the penultimate layer of the NNs, the average difference corresponds to a distance of the negative class from the hyperplane of 0.40 for LL optimization and 0.30 for MSE optimization (an output prediction of $0.5$ corresponds to a distance of 0.59 away from the hyperplane).
Since this experiment had no overlap between the positive and negative class, LL optimization was more appropriate, obtaining perfect AUPR like MSE optimization but a greater separation between the positive and negative class.
A greater separation suggests one-class decision boundaries that are better.

The next best models achieving close to perfect AUPR are the equality separators with bump activations and RBF separators with leaky ReLU or RBF activations.
NNs with bump activations in their hidden layers consistently achieved mean AUPRs of above 0.90, suggesting the versality of the bump activation as a semi-local activation function.

On a whole, it seems that a mix of globality and locality in activation functions helps to increase one-class classification performance.
While equality separation with the local RBF activation was the most effective, equality separators with bump activations (purely semi-local) and RBF separators with leaky ReLU (local paired with global) also have competitive AUPR.
When using leaky ReLU in the hidden layers, increasing the locality of the output activation increases performance, while the less global bump and RBF activations also increase performance of halfspace separators.
Meanwhile, NNs with bump activations in their hidden layers consistently achieved over 0.90 mean AUPR.
Our experiments corroborate with our intuition in Section \ref{section:connection_to_AD} that more local mappings (such as the equality separator and its smooth version, the bump activation) can implicitly encode a one-class classification objective.
The semi-local property of equality separators allows it to retain locality for AD, while improving the separation between the normal and anomaly class of NNs with only RBF activations.

\begin{table}[]
    \caption{Test time AUPR for 
    supervised AD using neural networks with 2 hidden layers.
    For NNs, we modify the output layer (halfspace separation (HS), equality separation (ES) and RBF separation (RS)) in the rows and activation functions in the columns.
    For equality separators, we include information on the loss optimized on (logistic loss LL or mean-squared error MSE).
    If $\sigma$ is learnable in the bump activations in the hidden layers, we include an added ``s'' suffix. (This table includes results from updated experiments done in the main paper.)
    \\}
    \centering
    \begin{tabular}{lccc}
\toprule
NN$\backslash$Activation & Leaky ReLU & Bump & RBF\\
\midrule
HS  &  0.62$\pm$0.12 &  0.92$\pm$0.13 & 0.83$\pm$0.18\\
RS  &  0.97$\pm$0.05 &  0.91$\pm$0.14 & 0.97$\pm$0.06\\
ES (LL)  &  0.83$\pm$0.15 &  {0.99$\pm$0.04} & \textbf{1.00$\pm$0.00}\\
ES (MSE)  &  0.77$\pm$0.18 &  {0.98$\pm$0.09} & \textbf{1.00$\pm$0.00}\\
ES-s (LL)  &  - &  {0.98$\pm$0.05} & -\\
ES-s (MSE)  &  - &  {0.95$\pm$0.11} & -\\
\bottomrule
    \end{tabular}
    \label{tab:supervised_ad_full_2layers}
\end{table}

\paragraph{Ablation: 3 hidden layers}
We also investigate NNs with 3 hidden layers, which is 1 more hidden layer than our original experiments.
We report test time AUPRs in Table \ref{tab:supervised_ad_full_3layers}.
In general, NNs with 3 hidden layers achieved worse performance than their counterparts with 2 hidden layers.
Coupled with train AUPRs of above 0.97, we can see that NNs with 3 hidden layers generally overfit to the negative class during training.
Equality separator NNs with RBF activations with 3 hidden layers optimized under LL maintained a perfect AUPR and achieved a separation between the positive and negative class during test time of $0.22\pm0.13$, which has a mean that is only slightly lower than their counterpart with 2 hidden layers.
Those optimized under MSE also maintained a close to perfect AUPR with a separation of $0.16\pm0.14$, which is similar to their counterparts with 2 hidden layers.

In our experiments, NNs with bump activations in their hidden layers were consistently resistant to severe overfitting while maintaining a relatively good test performance of mean AUPR at least 0.89, with a mean test AUPR always within 1 standard deviation of their counterparts with 2 hidden layers.
On the other hand, NNs with RBF activations in their hidden layers maintained a high test AUPR except for halfspace separators.
Meanwhile, NNs with leaky ReLU generally maintained performance within 1 standard deviation.
In general, our results suggest that bump activations are more robust to overfitting while being able to achieve good performance for anomaly detection.
Future work to more thoroughly investigate this overfitting phenomenon can be done to understand how to further mitigate overfitting, especially without prior exhaustive knowledge of all possible negative examples.

\begin{table}[]
    \caption{Test time AUPR for 
    supervised AD using neural networks with 3 hidden layers.
    For NNs, we modify the output layer (halfspace separation (HS), equality separation (ES) and RBF separation (RS)) in the rows and activation functions in the columns.
    For equality separators, we include information on the loss optimized on (logistic loss LL or mean-squared error MSE).
    If $\sigma$ is learnable in the bump activations in the hidden layers, we include an added ``s'' suffix.
    \\}
    \centering
    \begin{tabular}{lccc}
\toprule
NN$\backslash$Activation & Leaky ReLU & Bump & RBF\\
\midrule
HS  &  0.69$\pm$0.16 &  0.92$\pm$0.14 & 0.63$\pm$0.17\\
RS  &  0.96$\pm$0.06 &  0.89$\pm$0.14 & 0.94$\pm$0.13\\
ES (LL)  &  0.83$\pm$0.15 &  {0.95$\pm$0.10} & \textbf{1.00$\pm$0.00}\\
ES (MSE)  &  0.84$\pm$0.16 &  {0.90$\pm$0.13} & 0.99$\pm$0.03 \\
ES-s (LL)  &  - &  {0.98$\pm$0.04} & -\\
ES-s (MSE)  &  - &  {0.90$\pm$0.16} & -\\
\bottomrule
    \end{tabular}
    \label{tab:supervised_ad_full_3layers}
\end{table}

\paragraph{Learnable $\sigma$ in bump activations}
Having a learnable $\sigma$ parameter in the hidden layers of the bump activation generally do not affect AUPR significantly for equality separator NNs optimized under LL or MSE.
Moreover, a pattern emerges for both 2- and 3-hidden layered NNs -- for individual NNs that performed well, the $\sigma$ value in the first hidden layer tends to increase while the $\sigma$ value in the latter hidden layer tends to decrease, with the final value of $\sigma$ usually being higher at the first hidden layer than the latter layers after convergence (Figure \ref{fig:sigma_behaviour}).
A larger $\sigma$ in the earlier layer signifies allowing more information through the layer because there is a larger margin, while a lower $\sigma$ in the latter layers suggests a greater selectivity in inputs for neuron activation.
Intuitively, the first hidden layer increases the dimensionality of the input for feature extraction (from 2D inputs to 5D hidden layer), while the latter hidden layers perform variable selection based on the increased dimensionality.

Such behavior of $\sigma$ is insightful to how gradient-based optimizers learn equality separation (even in hidden layers and with fixed $\sigma$).
Since backpropagation through the latter hidden layers decreases the value of $\sigma$ and decreases the receptive field of the bump activation, we can infer that backpropagation is also likely to decrease the receptive field of latter layers with fixed $\sigma$ in their bump activation.
This behavior corroborates with the observation 
in \cite{gaussian_activation_ood_detection} that bump activations lead to more closed decision regions.
Such behavior is especially useful in applications like anomaly detection, where equality separation and bump activations seems to be inclined towards encoding the inductive bias of one-class classification.

\begin{figure}
    \centering
    \begin{subfigure}[b]{0.45\linewidth}
    \centering
    \includegraphics[width=0.9\linewidth]{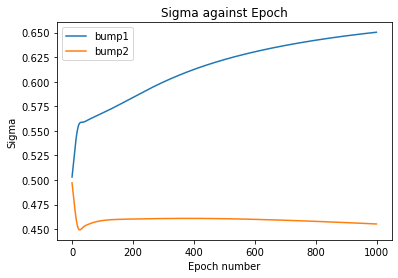}
    \caption{Neural network with 2 hidden layers.}
    \label{fig:sigma_behaviour_2layers}
    \end{subfigure}
    \hspace{0.05\linewidth}
    \begin{subfigure}[b]{0.45\linewidth}
    \centering
    \includegraphics[width=0.9\linewidth]{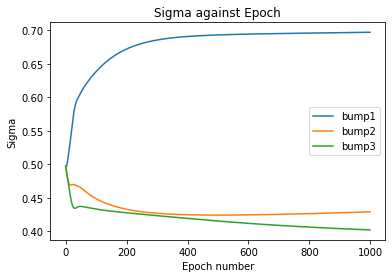}
    \caption{Neural network with 3 hidden layers.}
    \label{fig:sigma_behaviour_3layers}
    \end{subfigure}
    \caption{Example behaviour of $\sigma$ in the hidden layers of an equality separator NN with bump activations and learnable $\sigma$ in its hidden layers.
    The numbered suffix of the label of the curves refers to the index of the hidden layer (starting from 1).}
    \label{fig:sigma_behaviour}
\end{figure}

\paragraph{Model averaging for increased reliability}

NNs with bump activations generally achieved high mean AUPR with some variance.
Particularly, equality separators with bump activations in their hidden layers achieved the highest mean AUPR (close to perfect), for both 2- and 3-hidden layered NNs.
High mean AUPR with a high variance suggest that individual NNs are able to achieve good separation of the positive and negative classes during test time.
Therefore, NNs may be slightly sensitive to initializations.
To mitigate potentially bad initializations, we can view these NNs as weak learners for anomaly detection.
Then, an ensemble model should hopefully produce a better model for anomaly detection.

The simplest approach of averaging the predictions of the 20 models we trained achieves a perfect test AUPR for both the 2-hidden layered NN (Figure \ref{fig:SAD_Heat_ES2fb_ensemble}) and 3-hidden layered NN (Figure \ref{fig:SAD_Heat_ES3fb_ensemble}).
By averaging the prediction of the numerous trained models, the bad initializations were cancelled out and the ensemble model performs more reliably.

Other methods can be applied to ensemble the models, such as bootstrap aggregation and boosting.
Future work can also look into how to initialize and optimize these NNs with bump activations well to mitigate bad initializations.
Other work to directly mitigate overfitting during training of equality separators can be explored to increase the separation of normal data from anomalies to move towards a separation like the shallow equality separator with RBF kernel, such as what initializations pair well with NN optimization.

\begin{figure}[h]
    \centering
    \begin{subfigure}{0.45\linewidth}
    \centering
    \includegraphics[width=1\linewidth, keepaspectratio]{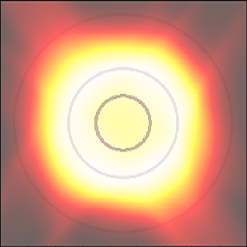}
    \caption{2 hidden layers.}
    \label{fig:SAD_Heat_ES2fb_ensemble}
    \end{subfigure}
    \hspace{0.05\linewidth}
    \begin{subfigure}{0.45\linewidth}
    \centering
    \includegraphics[width=1\linewidth, keepaspectratio]{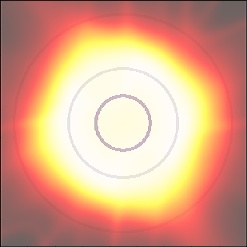}
    \caption{3 hidden layers.}
    \label{fig:SAD_Heat_ES3fb_ensemble}
    \end{subfigure}
    \caption{The heat map of the average of the output prediction of 20 equality separator neural networks with bump activations and fixed $\sigma=0.5$.}
    \label{fig:SAD_Heat_ES3fb_ensemble}
\end{figure}

\subsection{Supervised anomaly detection with NSL-KDD dataset}
\label{appendix:kdd}

We report additional details of experiments done in Section \ref{section:experiments_real_world} for NSL-KDD dataset.
Since the dataset size is significantly larger than our synthetic dataset in Section \ref{section:nonlinear_ad}, we opt to test NNs with a homogenous type of activation function (i.e. global, semi-local or local) and repeat experiments 3 times.
We report more results in Table \ref{tab:supervised_ad_kdd_FULL_TABLE}.

\paragraph{Supervised methods}
We run cross validation for SVM to obtain optimal hyperparameters as per the last section.
The one-vs-set machine (OVS) \citep{towards_open_set_recognition} uses a linear kernel SVM to define the first hyperplane then finds the second parallel hyperplane where normal data are within both hyperplanes.
However, OVS gives a 0-1 classification decision, not a continuous-valued anomaly score, so AUPR may not be the best metric here.
To produce a continuous-valued score, we introduce the equality separation paradigm by setting the hyperplane in the middle of the 2 parallel hyperplanes as the hyperplane used for equality separation.
We denote this as OVS-ES.
In our experiments, we tried both linear and RBF kernels for each of these 3 methods (SVM, OVS and OVS-ES).

There is no variance in scores because SVM solves a convex optimization problem, and all methods are SVM-based.
Thresholded one-vs-set RBF (OVS-RBF), the original thresholded one-vs-set linear (OVS-Lin) and one-vs-set linear ES (OVS-Lin-ES) have the same results and AUPR train results are low (0.703), which also reflects on the low AUPR for DoS (seen) attacks, so we ignore their results in the main text.

\paragraph{Unsupervised methods}
For unsupervised AD methods, we test on the 2 options of dealing with labels like before:
feed all the data into the algorithm, or just feed in normal data into the algorithm and remove the anomalies (because unsupervised methods assume that most/all data fed in is normal).
In the main paper, we report results of only the former.
Here, we report the former with a suffix ``-A'' and the latter with a suffix ``-N'' in Table \ref{tab:supervised_ad_kdd_FULL_TABLE}.
We report the best result in bold, but also report results one standard deviation away in bold to account for variability across experiments with different seeds.

We see OCSVM significantly outperforming all the other methods on the unseen anomalies for privilege escalation and remote access attacks.
However, OCSVMs are unable to capitalize label information, and it is expected that the supervised SVM comfortably outperforms OCSVM as well.
Once again, we notice a trade-off of poorer detection seen anomalies with the former and poorer detection of unseen anomalies for the latter.
When all data is fed into an unsupervised method, it is intuitive that the model may view anomalous data as normal and misclassify them, leading to poorer detection of seen anomalies.
However, when only normal data is fed into the unsupervised method, it is interesting to note that the resultant model is more conservative about the unseen anomalies.
Particularly, we notice IsoF obtaining the best AUPR for DoS and probe attacks.

As an aside, we note that LOF is a distance-based algorithm, which is a likely reason for its failure in this high-dimensional setting.
We also note that LOF and methods that require kernels (OCSVM) take a significantly longer time to train in these high-dimensional settings too, which we observe in our experiments as well.
Additionally, OCSVM and LOF have no randomness so there is no variance in the AUPR.

\paragraph{Binary classifiers}
We details the hyperparameters for training our binary classifier NNs below.
\begin{enumerate}
    \item We train NNs for 500 epochs with an early stopping patience of 10 epochs, with best weights restored. NNs were far from reaching the upper limit of 500 epochs.
    \item We used the Adam optimizer with learning rate of $3\times 10^{-4}$.
    \item We use 1 hidden layer with 60 neurons, about half of the input dimension. A smaller width in the hidden layer mimicks the need for dimensionality reduction in real-world datasets. This is challenging because the decision region formed will not be closed with respect to more than half of the dimensions for equality separators and halfspace separators.
    \item RBF separators and halfspace separators are trained with LL while equality separators are trained with MSE. In this appendix, we experiment with both LL and MSE and report results, noting that there is not much effect of the respective classifiers strengths and weaknesses.
    We note that equality separators trained with MSE also achieve similar performance to those trained with LL.
    \item Some results were slightly sensitive, so we repeated experiments 5 times to obtain a more reliable result.
    \item We use batch normalization before the activation function is applied (less RBF, where the only option is to apply it after).
    \item Other hyperparameters are kept the same as our synthetic experiment (e.g. leaky ReLU for global activations, Glorot uniform initializer for weights).
\end{enumerate}
Our choice was partially guided by the observation that it is easy to overfit the training data (i.e. training loss keeps decreasing while validation loss does not).
This observation corroborates with learning theory of how the generalization error bound is looser with more expressive hypothesis classes.
In particular, we experimented with 1 hidden layer with 60 neurons, 1 hidden layer with 200 neurons, 2 hidden layers with 60 neurons each and 2 hidden layers with 200 neurons each.

We also notice that using logistic loss or MSE does not affect the results too much for standard ERM binary classifiers.

\paragraph{NS}
For NS, we maintain the same training regime but detail / modify the following hyperparameters.
\begin{enumerate}
    \item The number of anomalies sampled was 10 times the size of the training dataset.
    \item We trained our equality separators with MSE. Since anomalies can be anywhere, including close to the normal data, we use MSE to be more conservative about wrong classifications, given the potential label noise when artificially sampling anomalies.
\end{enumerate}

\paragraph{SAD}
For SAD, we pre-train an autoencoder on our training data, then use the encoder as an intialization to train it for supervised AD as in \citet{DeepSAD}.
Training details are as stated:
\begin{enumerate}
    \item We use encoder dimensions of 90-60 (i.e. a hidden layer of 90 neurons between the latent layer of 60 neurons and the input/output). We employ this for all architectures because we observe underfitting without the intermediate hidden layer. We also validate performance by observing training error converge to 0 when the autoencoder only has 1 hidden layer, where the hidden layer has the same number of neurons as the input/output.
    \item Depending on the type of separator we test, the autoencoder employs either leaky ReLU (global), bump (semi-local) or RBF (local) activations homogenously.
    \item Autoencoder and classification head is trained as per \citet{DeepSAD} with hyperparameters as previously stated.
    \item $\ell_2$ regularization for classification \citep{DeepSAD} was tried with $10^{-1}$ and $10^{-6}$ weights and found to be insensitive. We reported results using a weight of $10^{-1}$.
\end{enumerate}

We note that we do not imply that RBFs are inappropriate for AD.
In fact, its locality property makes it well-suited.
However, RBFs are tricky to train \citep{goodfellow_fast_gradient_sign} and may require many neurons to work \citep{nonmonotonic_activations_mlp_thesis}, especially in higher dimensional settings (which are mostly the case in reality).
For instance, using RBFs on binary classifiers trained with MSE or on SAD have 0.000 standard deviation in the AUPR.
We noticed that learning often converged at a loss a few orders of magnitude higher than the other models, which corroborates with our intuition that NNs with RBFs are hard to train and have probably converged to a highly sub-optimal minimum.
Equality separators achieve a good balance between its ability to form closed decision regions and its capacity to learn efficiently.

\begin{table}
    \caption{Test AUPR (mean $\pm$ standard deviation) on NSL-KDD dataset, grouped according to different methods.
    The different methods are random classifiers (calculated in expectation), shallow supervised (Sup.) and unsupervised (Unsup.) methods, binary classifier NNs, Negative Sampling (-NS) and Deep Semi-Supervised Anomaly Detection (-SAD).
    For shallow supervised methods, we test linear (Lin) and RBF kernels.
    We compare using different kinds of activations in NNs: global (HS), semi-local (ES) and local (RS).
    AUPR is reported vis-a-vis each attack and all attacks (overall)..
    Suffix for unsupervised AD methods determines if only normal data is used (-N) or all data is used (-A).
    DoS attacks are seen during training, while the rest are not.
        }
        \centering
    \begin{tabular}{cl|ccccc}
        \toprule
&Model$\backslash$Attack  & DoS     & Probe  & Privilege  & Access  & Overall     \\
\midrule
 &Random  &  0.435  &  0.200  &  0.007  &  0.220  &  0.569 \\
\midrule
\parbox[t]{0.5mm}{\multirow{6}{*}{\rotatebox[origin=c]{90}{Sup.}}} & {SVM-RBF}   &  \textbf{0.959$\pm$0.000}  &  \textbf{0.787$\pm$0.000}  &  0.037$\pm$0.000  &  0.524$\pm$0.000  &  0.948$\pm$0.000 \\
&  {SVM-Lin}   &  {0.876$\pm$0.000}  &  0.494$\pm$0.000  &  0.021$\pm$0.000  &  0.151$\pm$0.000  &  {0.823$\pm$0.000} \\
&  {OVS-RBF-ES (ours)}   &  {0.956$\pm$0.000}  &  0.785$\pm$0.000  &  0.038$\pm$0.000  &  0.535$\pm$0.000  &  \textbf{0.949$\pm$0.000} \\
&  {OVS-RBF}   &  {0.717$\pm$0.000}  &  0.600$\pm$0.000  &  \textbf{0.503$\pm$0.000}  &  \textbf{0.610$\pm$0.000}  &  0.785$\pm$0.000 \\
&  {OVS-Lin-ES (ours)}   &  {0.717$\pm$0.000}  &  0.600$\pm$0.000  &  \textbf{0.503$\pm$0.000}  &  \textbf{0.610$\pm$0.000}  &  0.785$\pm$0.000 \\
&  {OVS-Lin}   &  {0.717$\pm$0.000}  &  0.600$\pm$0.000  &  \textbf{0.503$\pm$0.000}  &  \textbf{0.610$\pm$0.000}  &  0.785$\pm$0.000 \\
\midrule
\parbox[t]{0.5mm}{\multirow{6}{*}{\rotatebox[origin=c]{90}{Unsup.}}} & OCSVM-N &  0.835$\pm$0.000  &  0.849$\pm$0.000  &  0.382$\pm$0.000  &  0.745$\pm$0.000  &  0.920$\pm$0.000 \\
& OCSVM-A &  0.779$\pm$0.000 &  0.827$\pm$0.000 &  \textbf{0.405$\pm$0.000} &  \textbf{0.760$\pm$0.000} &  0.897$\pm$0.000 \\
& IsoF-N  &  \textbf{0.964$\pm$0.006}  &  \textbf{0.960$\pm$0.003}  &  0.039$\pm$0.007  &  0.438$\pm$0.015  &  \textbf{0.957$\pm$0.002} \\
& IsoF-A  &  0.765$\pm$0.073  & 0.850$\pm$0.066 &  0.089$\pm$0.044  &  0.392$\pm$0.029  &  0.865$\pm$0.031 \\
& LOF-N  &   0.759$\pm$0.000  &  0.501$\pm$0.000  &  0.046$\pm$0.000  &  0.451$\pm$0.000  &  0.824$\pm$0.000 \\
& LOF-A  &   0.495$\pm$0.000  &  0.567$\pm$0.000  &  0.039$\pm$0.000  &  0.455$\pm$0.000  &  0.718$\pm$0.000 \\
\midrule
\parbox[t]{0.5mm}{\multirow{3}{*}{\rotatebox[origin=c]{90}{ERM}}}& HS     &  0.944$\pm$0.016  &  \textbf{0.739$\pm$0.018}  &  0.016$\pm$0.006  &  0.180$\pm$0.033  &  0.877$\pm$0.010 \\
& ES (ours)  &  \textbf{0.970$\pm$0.006}  &  \textbf{0.726$\pm$0.093}  &  \textbf{0.047$\pm$0.017}  &  \textbf{0.446$\pm$0.124}  &  \textbf{0.939$\pm$0.014} \\
& RS  &  0.356$\pm$0.002  &  0.213$\pm$0.002  &  0.010$\pm$0.000  &  0.228$\pm$0.002  &  0.406$\pm$0.001 \\
\midrule
\parbox[t]{0.5mm}{\multirow{3}{*}{\rotatebox[origin=c]{90}{ERM}}}& HS, MSE     &  0.921$\pm$0.015  &  0.682$\pm$0.017  &  0.010$\pm$0.006  &  0.133$\pm$0.008  &  0.846$\pm$0.008 \\
& ES, MSE (ours)  &  \textbf{0.974$\pm$0.001}  &  \textbf{0.717$\pm$0.107}  &  \textbf{0.045$\pm$0.014}  &  \textbf{0.510$\pm$0.113}  &  \textbf{0.941$\pm$0.014} \\
& RS, MSE  &  0.356$\pm$0.000  &  0.215$\pm$0.000  &  0.010$\pm$0.000  &  0.231$\pm$0.000  &  0.406$\pm$0.000 \\
\midrule
\parbox[t]{0.5mm}{\multirow{3}{*}{\rotatebox[origin=c]{90}{NS}}} & HS-NS         &  \textbf{0.936$\pm$0.001}  &  0.642$\pm$0.030  &  0.006$\pm$0.000  &  0.139$\pm$0.001  &  0.845$\pm$0.006 \\
& ES-NS (ours)  &  \textbf{0.945$\pm$0.009}  &  \textbf{0.659$\pm$0.013}  &  \textbf{0.023$\pm$0.011}  &  0.206$\pm$0.013  &  \textbf{0.881$\pm$0.007} \\
& RS-NS         &  0.350$\pm$0.002  &  0.207$\pm$0.002  &  0.009$\pm$0.000  &  \textbf{0.223$\pm$0.002}  &  0.401$\pm$0.002
\\
\midrule
\parbox[t]{0.5mm}{\multirow{3}{*}{\rotatebox[origin=c]{90}{SAD}}} & HS-SAD         & 0.955$\pm$0.003  & 0.766$\pm$0.011  &  \textbf{0.100$\pm$0.000}	  & 0.447$\pm$0.000  &  0.935$\pm$0.007 \\
& ES-SAD (ours)  & \textbf{0.960$\pm$0.002}   & \textbf{0.795$\pm$0.004}   & 0.047$\pm$0.002   & \textbf{0.509$\pm$0.009}   & \textbf{0.952$\pm$0.002}  \\
& RS-SAD         & 0.935$\pm$0.000   &  0.678$\pm$0.000  &  0.022$\pm$0.000  &  0.142$\pm$0.000  &  0.846$\pm$0.000 \\
\bottomrule
    \end{tabular}
    \label{tab:supervised_ad_kdd_FULL_TABLE}
\end{table}

\subsection{Supervised anomaly detection with thyroid dataset}

We report additional details of experiments done in Section \ref{section:experiments_real_world} for the thyroid dataset.
All details are the same as Section \ref{appendix:kdd} unless otherwise stated.

The thyroid dataset is a medical dataset on proper functioning of the thyroid (a gland in the neck), with 21 features and 3 classes, 1 of which is normal while the anomalous classes are hyperfunction and subnormal.
For Thyroid dataset, we train on hyperfunction anomalies and test on both hyperfunction and subnormal anomalies.
\begin{enumerate}
    \item We choose 1 hidden layer with 200 neurons with batch normalization. Using a validation split of 0.15, we confirm that we do not overfit to the seen data during training.
    \item We initialize $\sigma = 7$ to increase the margin width so initializations are not as sensitive. (We also note that $\sigma=5, 10$ produce similar results.)
    \item We employ learning rate exponential decay with Adam optimizer under LL to speed up convergence, with an initial learning rate of 0.01 and decay of 0.96.
    \item We train NNs for 500 epochs with early stopping patience of 50 epochs, with best weights restored.
\end{enumerate}

We report more detailed results in Table \ref{tab:thyroid_appendix}, which includes a logistic regression baseline.
We see that equality separation is the best at detecting both seen and unseen anomalies.

\begin{table}
    \centering
    \caption{AUPR for thyroid by diagnosis (i.e. anomaly type).}
            \begin{tabular}{lcccc}
        \toprule
        Diagnosis & Logistic Regression & HS & ES (ours) & RS  \\
        \midrule
Hyperfunction (Seen)    & 0.889$\pm$0.000    & 0.907$\pm$0.013 & \textbf{0.934$\pm$0.004} & 0.030$\pm$0.002 \\
Subnormal (Unseen) & 0.340$\pm$0.001   & 0.459$\pm$0.040 & \textbf{0.596$\pm$0.007} & 0.082$\pm$0.002 \\
Overall & 0.530$\pm$0.001  & 0.615$\pm$0.032 & \textbf{0.726$\pm$0.004} & 0.103$\pm$0.001 \\
        \bottomrule
    \end{tabular}
    \label{tab:thyroid_appendix}
\end{table}

\subsection{Supervised anomaly detection with MVTec dataset}

We report additional details of experiments done in Section \ref{section:experiments_real_world} for MVTec defect dataset.
All details are the same as Section \ref{appendix:kdd} unless otherwise stated.

Each object has about 300 training samples, so we use (frozen) DINOv2 \citep{dinov2} to embed raw image data into features rather than training our own computer vision model from scratch.
DINOv2 is a foundation Vision Transformer (ViT) model pre-trained on many images, so we assume that the feature representations obtained from the model are useful, or at least more useful than a model trained from scratch.
Specifically, we use the classification CLS token embedding, since it is not patch-dependent and is widely used for downstream classification tasks.

There are far fewer data samples than the ambient dimensionality of the data, so we opt for simple methods.
Our halfspace, equality and RBF separators are all output-layered (i.e. no hidden layers) and act as linear probes, except the RBF separator which acts like SAD where DINOv2 plays the role of the encoder (of a pre-trained autoencoder) in \citet{DeepSAD} to extract meaningful features from the data.
Hence, the halfspace separator acts as our baseline.
In high dimensions, sampling to cover the whole space is much harder and computationally heavier, so we exclude negative sampling (NS).

Each anomaly class has very few anomalies (usually around 10), so we use all anomalies from 1 anomaly type during training (we pick the first type of anomaly in alphabetical order).
In other words, we do not test on seen anomalies and only test on unseen anomalies.
As the best proxy (or at least an upper bound) for performance on seen anomalies, we monitor the training loss and AUPR as well.
For instance, we observe that RBF separators do not converge and have low AUPR, which suggests that test AUPR on seen anomalies is likely to be worse.
Meanwhile, the two other models (halfspace and equality separators) had converging losses and good train AUPR.
Out of these 2 models, equality separators had better test AUPR (i.e. better separation between normal and unseen anomalies) on a whole.

\end{appendices}

\end{document}